\documentclass{article}

\usepackage{microtype}
\usepackage{graphicx}
\usepackage{subcaption}
\usepackage{booktabs} 

\usepackage{hyperref}

\usepackage[preprint]{icml2026}

\usepackage{amsmath}
\usepackage{amssymb}
\usepackage{mathtools}
\usepackage{amsthm}
\usepackage{enumitem}

\usepackage[capitalize,noabbrev]{cleveref}

\theoremstyle{plain}
\newtheorem{theorem}{Theorem}[section]
\newtheorem{proposition}[theorem]{Proposition}
\newtheorem{lemma}[theorem]{Lemma}

\theoremstyle{definition}

\theoremstyle{remark}

\usepackage[textsize=tiny]{todonotes}

\usepackage{xcolor}

\definecolor{promptbg}{RGB}{253,246,227}      
\definecolor{promptframe}{RGB}{139,90,43}    
\newcommand{\promptbox}[1]{%
  \par\noindent
  \begingroup
  \setlength{\fboxsep}{8pt}%
  \setlength{\fboxrule}{0.6pt}%
  \fcolorbox{promptframe}{promptbg}{%
    \begin{minipage}{0.97\linewidth}
      \ttfamily\small\raggedright
      #1
    \end{minipage}%
  }%
  \endgroup
  \par
}

\renewcommand{\algorithmiccomment}[1]{$\triangleright$ #1}

\newcommand{\etal}{\textit{et al}.}

\icmltitlerunning{Executable Agentic Memory for GUI Agent}

\begin{document}

\twocolumn[
  \icmltitle{Executable Agentic Memory for GUI Agent}

  \begin{icmlauthorlist}
    \icmlauthor{Zerui Qin}{tsinghua}
    \icmlauthor{Sheng Yue}{sysu}
    \icmlauthor{Xingyuan Hua}{tsinghua}
    \icmlauthor{Yongjian Fu}{tsinghua}
    \icmlauthor{Ju Ren}{tsinghua}
  \end{icmlauthorlist}

  \icmlaffiliation{tsinghua}{Tsinghua University, China}
  \icmlaffiliation{sysu}{Sun Yat-sen University, China}

  \icmlkeywords{Machine Learning, ICML}

  \vskip 0.3in
]



\printAffiliationsAndNotice{}  

\begin{abstract}
Modern GUI agents typically rely on a model-centric and step-wise interaction paradigm, where LLMs must re-interpret the UI and re-decide actions at every screen, which is fragile in long-horizon tasks. In this paper, we propose Executable Agentic Memory (EAM), a structured Knowledge Graph (KG) that shifts GUI planning from free-form generation to a robust retrieval-and-execution process. Our approach includes a sample-efficient memory construction pipeline using state-aware DFS and action-group mining to compress multi-step routines. To ensure efficient planning, we introduce a value-guided graph search where a lightweight Q-function model steers Monte Carlo Tree Search (MCTS) over the KG. We theoretically establish bias-consistency for the Q-model and derive sample complexity bounds for path recovery. Empirically, EAM outperforms state-of-the-art baselines like UI-TARS-7B by up to $19.6\%$ on AndroidWorld, while reducing token costs $6\times$ relative to GPT-4o. With a $2.8$s average latency, EAM enables reliable, quick, and long-horizon GUI automation. 
\end{abstract}

\section{Introduction}

Modern Graphical User Interface (GUI) agents powered by (multimodal) LLMs can operate real-world apps by ``seeing'' screens and generating actions~\cite{wen2024autodroid, wang2024mobile, zhang2025appagent}. However, the dominant interaction paradigm remains 
\emph{model-centric and step-wise}: at every screen, an LLM must re-interpret the UI, re-decide the next action, and implicitly maintain task progress in its context window. This makes long-horizon automation fragile: small perceptual or reasoning errors would compound, easily producing hallucinated actions and incorrect detours, especially in heterogeneous app environments where training coverage is limited~\cite{qin2025ui, luo2025gui, wu2025gui, gou2024navigating}. For instance, UI-TARS-7B~\cite{qin2025ui}, considered a SOTA GUI agent model, achieves only 33\% success rate on the long-horizon AndroidWorld benchmark~\cite{rawles2024androidworld}, while M3A, an agentic framework powered by GPT-4o, attains merely 40.5\%~\cite{rawles2024androidworld}. 

To improve robustness, a natural direction is to equip agents with external knowledge and memory. Some efforts~\cite{wang2024agent, wang2025mobile, cheng2025mga, sun2026magnetadaptiveguiagents} maintain textual memory of historical interactions, such as workflow patterns and decision heuristics, and inject them into the LLM's context to guide task planning. Others construct external knowledge bases by extracting action-level knowledge (e.g., element functionality or successful trajectories) from exploration, storing them as vector databases or knowledge graphs, and retrieving relevant knowledge at inference time to augment decision-making~\cite{xie2025gui, jiang2025appagentx, guan2025kg, li2025echotrail}. However, such in-context knowledge injection remains unreliable due to model-centric generation and ignorance of inherent structured information in historical trajectories, making it difficult to reliably reproduce executable paths from historical knowledge. Moreover, repeated retrieval and step-wise generation introduce substantial cost and latency, hindering real-time deployment.

In this paper, we investigate \emph{Executable Agentic Memory} (EAM), which can serve as a persistent, structured representation of the environment interaction logic, learned from historical interactions, and can be queried at test time so that planning can be augmented by retrieval and verification rather than free-form generation. Specifically, EAM enables the agent to (1) remember the GUl as a state machine (what states exist, which actions are available, and where they lead), and (2) reason over this memory to extract an executable path that is guaranteed to stay on valid transitions.

To this end, we first propose a sample-efficient memory construction pipeline: a state-aware DFS exploration strategy that systematically covers task-relevant transitions with minimal redundant interactions, coupled with state deduplication and action-group mining to compress frequent multi-step routines into reusable high-level actions, yielding a compact yet executable GUI logic knowledge graph. We then propose a compute-efficient retrieval mechanism: a value-guided graph search procedure in which a lightweight Q-function model steers MCTS over the constrained KG action space to rapidly select faithful high-reward paths from noisy experience; when needed, the agent can make only a single cloud call to summarize and validate the retrieved path into a grounded plan. Theoretically, we establish a bias-consistency guarantee for the learned Q-model on the critical set and derive a finite-sample complexity bound under which the value-guided MCTS recovers the optimal execution path with high probability.

We evaluate our method on the AndroidWorld, MobileMiniWob++, and DroidTask benchmarks. Results show that our framework consistently outperforms the existing baselines, surpassing the state-of-the-art UI-TARS-7B by up to $19.6\%$ while reducing token costs $6\times$ relative to GPT-4o. Our Q-guided MCTS and iterative self-training pipeline bridge the reasoning gap for small models through fine-grained credit assignment, while the action group mechanism minimizes search complexity to reach a $2.8$s average latency. These findings demonstrate that grounding decision-making in structured knowledge graphs enables reliable, high-speed, and long-horizon planning for GUI agents.

\section{Related Work}
\label{sec:related_work}

\textbf{GUI Agents.}
Early efforts adapted foundation models (GPT-4, GPT-4o) to GUI tasks~\cite{wen2024autodroid, wang2023enabling}, with Zheng \etal~\cite{zheng2024gpt} demonstrating that GPT-4V outperforms text-based models in web scenarios. Zhang \etal~\cite{zhang2025appagent} augment GPT-4V with a memory module for historical actions. Subsequent work explored modular frameworks: Wang \etal~\cite{wang2024mobile} integrate planning, decision, and reflection modules; Zhang \etal~\cite{zhang2024mobileexperts} propose multi-agent collaboration; and Zhu \etal~\cite{zhu2024moba} design a hierarchical planner-executor architecture. However, these cloud-based frameworks incur high API costs and latency, and suffer from hallucinations due to limited GUI domain knowledge.
More recent work pursues end-to-end GUI agents via parameter training. Cheng \etal~\cite{cheng2024seeclick} train a dedicated GUI grounding model with cross-platform data, while UI-TARS~\cite{qin2025ui} introduces a comprehensive pre-training to fine-tuning pipeline. To improve generalization, Luo \etal~\cite{luo2025gui} and Lu \etal~\cite{lu2025ui} apply rule-based RL algorithms such as GRPO~\cite{shao2024deepseekmath}. AutoDroid-V2~\cite{wen2025autodroid} fine-tunes a lightweight model to generate executable scripts in one shot. Despite these advances, on-device models ($\leq$3B) remain limited in reasoning, struggling with complex multi-step tasks.

\textbf{Knowledge-aware GUI Agents.} To mitigate hallucinations and improve adaptability, some works utilize historical memory to guide task planning. Wang \etal~\cite{wang2024agent} propose a workflow memory extracting reusable patterns from past experiences. Mobile-Agent-E~\cite{wang2025mobile} introduces a self-evolving framework accumulating general guidance over time. MAGNET~\cite{sun2026magnetadaptiveguiagents} constructs dual-level memory for element grounding and workflow retrieval to handle UI drift. However, these methods rely solely on LLMs' contextual understanding without accounting for dynamic environment interactions. Another line of work focuses on reliable action generation. AutoDroid~\cite{wen2024autodroid} collects transition knowledge via random exploration. GUI-explorer~\cite{xie2025gui} mines element functionality by analyzing GUI state changes. KG-RAG~\cite{guan2025kg} transforms UI Transition Graphs into vector databases and distills reusable actions based on intent. While these approaches improve action accuracy, path generation still relies on LLM reasoning over retrieved context rather than direct extraction from an executable state machine. Moreover, massive API calls for step-wise decision-making incur substantial costs and latency.

\textbf{LLM-based Monte Carlo Tree Search.}
Inspired by AlphaGo, recent work explores guiding LLM inference with tree search to improve reasoning on structured tasks. Zhou \etal~\cite{zhou2023language} propose an LLM-MCTS framework leveraging environment feedback for decision-making, while Xie \etal~\cite{xie2024monte} construct a self-learning loop using MCTS to generate preference signals for training. However, these methods require multiple LLM rollouts during simulation, limiting efficiency. More recent work employs LLMs as both policy and value models. Hao \etal~\cite{hao2023reasoning} treat the LLM as a world model for generation and evaluation. rStar-Math~\cite{guan2025rstar} trains a reward model with trajectory-level binary rewards for node scoring. ReST-MCTS*~\cite{zhang2024rest} introduces a self-trained Process Reward Model for step-wise evaluation, and Mendes \etal~\cite{mendes2025language} equip the value model with look-ahead capability. Despite these advances, most methods rely on heuristic value designs without theoretical guarantees and require separate policy and value models, incurring high computational overhead.


\section{Problem Statement}
\label{sec:problem_statement}
\textbf{GUI Logic Knowledge Graph.} We define the GUI Logic Knowledge Graph as a directed graph $\mathcal{G} = (\mathcal{S}, \mathcal{A}, \mathcal{E})$, where $\mathcal{S}$ denotes state nodes representing unique GUI pages, $\mathcal{A}$ denotes action nodes representing executable operations, and $\mathcal{E} \subseteq (\mathcal{S} \times \mathcal{A}) \cup (\mathcal{A} \times \mathcal{S})$ denotes edges connecting states to actions and actions to resulting states. Each state $s \in \mathcal{S}$ contains a page description $d_s$, and each action $a \in \mathcal{A}$ is annotated with a functional description $f_a$. We denote $\mathcal{A}(s) = \{a \in \mathcal{A} : (s, a) \in \mathcal{E}\}$ as the available actions at $s$.

\textbf{Path Extraction as Finite-Horizon MDP.} Given a user instruction $x \in \mathcal{X}$, we formulate path extraction from the KG as a finite-horizon episodic MDP, $\langle\mathcal{S}, \mathcal{A}, T, R, H\rangle$. The state space $\mathcal{S}$ and action space $\mathcal{A}(s)$ are induced by the KG structure. $T$ represents a deterministic transition function where $s' = T(s, a)$ follows the KG edges. $R$ is a binary terminal reward function, where $R(s_H, x) = 1$ if terminal state $s_H$ satisfies instruction $x$, and $0$ otherwise. $H$ is the horizon. At each step $t$, the agent selects $a_t \in \mathcal{A}(s_t)$ according to policy $\pi(\cdot | s_t, x)$ and transits to $s_{t+1} = T(s_t, a_t)$. The objective is to find $\pi^* = \arg\max_{\pi} \mathbb{E}_{a_t \sim \pi}[R(s_H, x)]$ that identifies a successful path $\tau^*$ for instruction $x$.

\section{Methodology}
\label{sec:method}

In this section, we introduce our proposed agentic memory system which comprises two main components: 1) Offline Knowledge Graph Construction, which autonomously explores the GUI environment to collect transition data and builds a structured knowledge graph $\mathcal{G}$; and 2) Online Knowledge-Augmented Reasoning, which leverages the constructed KG to extract faithful execution paths via Q-model guided MCTS. An overview of the framework is presented in Fig.~\ref{fig:framework}. We elaborate on each component in the following subsections.

\begin{figure*}[ht]
  \vskip 0.2in
  \begin{center}
    \centerline{\includegraphics[width=1.99\columnwidth]{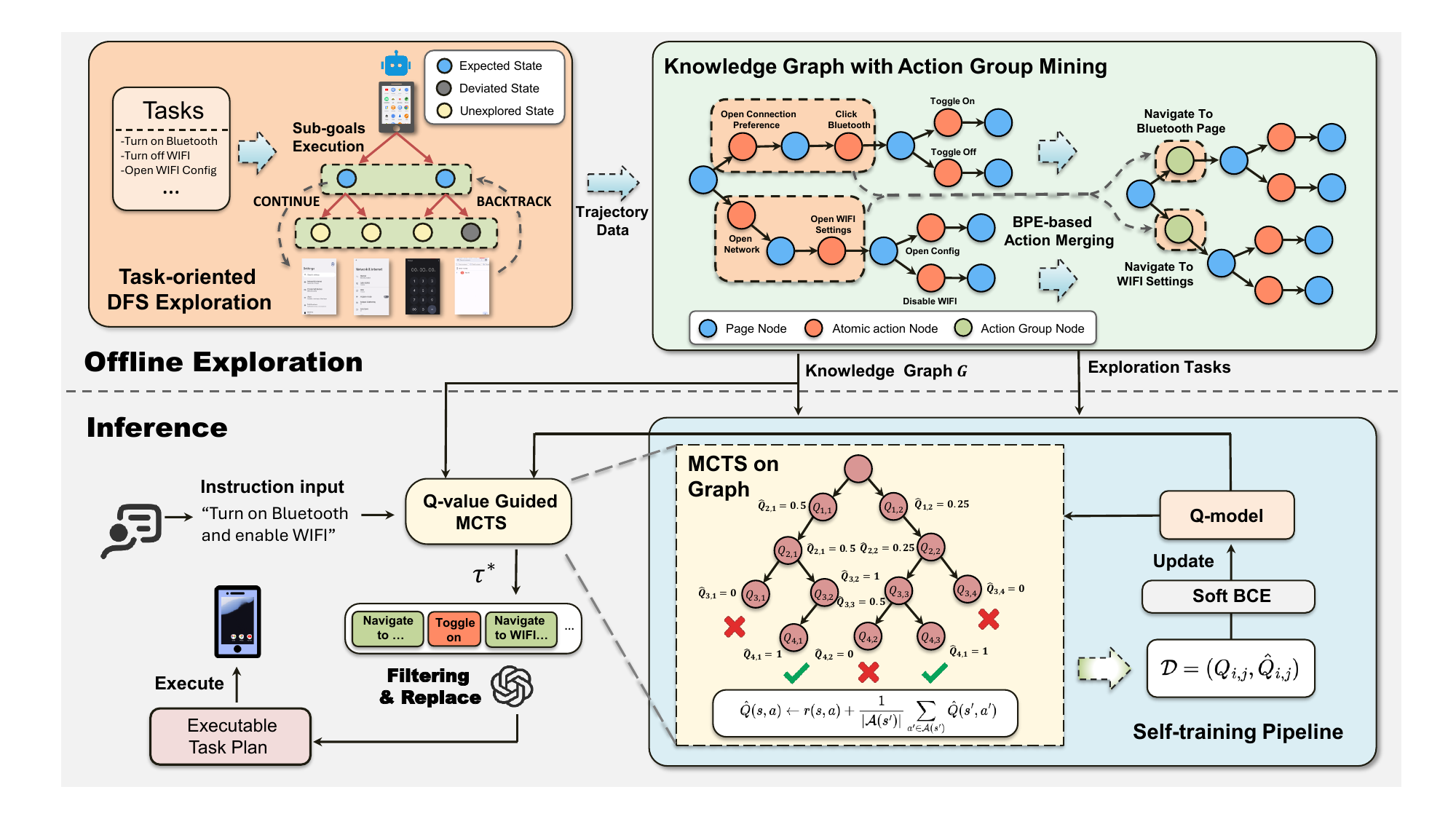}}
    \caption{
      Overview of Executable Agentic Memory (EAM). It comprises offline automatic memory construction and inference-time executable memory reuse guided by a trained Q-model.
    }
    \label{fig:framework}
  \end{center}
\end{figure*}

\subsection{Offline Knowledge Graph Construction}
\label{sec:KG_construction}

The offline stage aims to construct a comprehensive GUI Logic Knowledge Graph $\mathcal{G} = (\mathcal{S}, \mathcal{A}, \mathcal{E})$ that captures both the structural logic and semantic knowledge of the target GUI environment. This process consists of three key components: autonomous exploration for trajectory collection, transition-aware knowledge mining for graph construction, and action group mining for efficient high-level guidance.

\textbf{Autonomous Exploration.}
The core of our offline stage lies in task-oriented autonomous exploration that systematically discovers GUI states and transitions contributing to task completion. We propose an element-grounded hierarchical exploration based on depth-first search (DFS). Given a task goal $g$, we extract \textit{Exploration Anchors} from the current GUI state—interactable elements serving as structural primitives for sub-goal generation. The MLLM uses these anchors to generate up to $k$ candidate sub-goals ranked by their likelihood of progressing toward $g$. At each depth, the agent evaluates progress and determines one of three outcomes: (1) \textsc{Continue}—the sub-goal was achieved but $g$ requires further operations; (2) \textsc{Backtrack}—the current state deviates from the path toward $g$; (3) \textsc{Complete}—the task goal $g$ is achieved. This DFS-based design ensures comprehensive coverage of task-relevant transitions (up to $O(k^d)$ distinct trajectories) while the collected trajectories naturally form a prefix tree structure that can be seamlessly transformed into the knowledge graph $\mathcal{G}$.

\textbf{Transition-aware Knowledge Mining.}
The knowledge construction process builds a structured KG from collected exploration trajectories. Let $\xi = \langle s_0, a_0, s_1, a_1, \ldots, s_n \rangle$ denote an interaction trajectory. Following~\cite{wen2025autodroid, xie2025gui}, we extract transition-aware GUI knowledge by analyzing consecutive transitions to construct the graph structure and enrich semantic attributes.

\textit{1) Graph Structure Construction:}~
The KG is constructed as a Directed Acyclic Graph (DAG) where state nodes and action nodes alternate, with each trajectory incrementally merged into the KG. The key challenge lies in accurately mapping new trajectories to the existing state space. To this end, we design a state-aware deduplication mechanism featuring dual-layer filtering: \textit{(i)} \textit{Coarse Filtering}—each new state is encoded by an embedding model and matched against existing states via similarity retrieval; \textit{(ii)} \textit{Fine-grained Filtering}—candidate duplicates are verified by a Vision-Language Model for rigorous semantic comparison. For duplicate states, we further perform element-level deduplication via IoU of bounding boxes, effectively connecting discrete exploration trajectories into a cohesive graph.

\textit{2) Semantic Knowledge Enrichment:}~Once the topological structure is established, we enrich the graph with semantic attributes derived from state transitions. The knowledge mining process is formalized as:

\begin{equation}
    \mathcal{G} \leftarrow \mathcal{G} \oplus \mathcal{F}_{\text{extract}}(s_t, a_t, s_{t+1})
\end{equation}

where $\mathcal{F}_{\text{extract}}: (s_t, a_t, s_{t+1}) \mapsto (d_{s_t}, d_{s_{t+1}}, f_{a_t})$ generates page descriptions and action functional descriptions from the state transition, and $\oplus$ denotes the merge operator that continuously updates the extracted knowledge into $\mathcal{G}$.

\textbf{Action Group Mining.}
Beyond atomic actions, real-world GUI tasks often involve recurring multi-step action patterns. While recent works extract high-level actions from trajectories~\cite{jiang2025appagentx, wang2025mobile}, they rely heavily on LLMs to summarize these groups, suffering from poor cross-task generalizability and high computational cost.

To address these limitations, we propose a statistical approach inspired by Byte Pair Encoding (BPE). We conceptualize the KG as a ``path heatmap,'' where high-frequency action subsequences represent high-value generalizable skills. Formally, let $\mathcal{V} = \{a_1, a_2, \ldots, a_M\}$ denote the initial vocabulary of atomic actions, and let $\mathcal{P} = \{\tau_1, \tau_2, \ldots, \tau_K\}$ denote the corpus of all historical paths in the KG, where each path $\tau = (a_{i_1}, a_{i_2}, \ldots, a_{i_L})$ is a sequence of atomic actions.

The mining process proceeds iteratively. At each iteration $j$, we compute the frequency of all adjacent action pairs and identify the most frequent pair:
\begin{equation}
    (a^*, a'^*) = \arg\max_{(a, a') \in \mathcal{V} \times \mathcal{V}} \sum_{\tau \in \mathcal{P}} \text{count}((a, a'), \tau)
\end{equation}
where $\text{count}((a, a'), \tau)$ denotes the number of occurrences of the adjacent pair $(a, a')$ in path $\tau$. If the maximum frequency exceeds a predefined threshold $\delta_f$, we merge the pair into a new action group and update the vocabulary:
\begin{equation}
    a_{\text{new}}^{(j)} = a^* \circ a'^*, \quad \mathcal{V} \leftarrow \mathcal{V} \cup \{a_{\text{new}}^{(j)}\}.
\end{equation}
The corpus $\mathcal{P}$ is then updated by replacing all occurrences of $(a^*, a'^*)$ with $a_{\text{new}}^{(j)}$. This process iterates until the frequency of the most common pair falls below $\delta_f$. The mined action groups are integrated into the KG as high-level action nodes, extending the action space from atomic operations to multi-step reusable skills.

\subsection{Online Knowledge-Augmented Path Extraction}
\label{sec:path_extraction}

Given a user instruction $x$, extracting an executable path from the KG can be formulated as the finite-horizon MDP defined in Section~\ref{sec:problem_statement}. This MDP features deterministic transitions, binary terminal rewards, and a relatively small state-action space constrained by the KG structure. Such a tabular setting differs fundamentally from classical agentic RL scenarios, which typically involve complex reward structures and vocabulary-scale action spaces. Due to this structural mismatch, directly employing mainstream GRPO-style RL frameworks~\cite{shao2024deepseekmath, feng2025group, jin2025search} is suboptimal, as they easily suffer from entropy collapse and introduce unnecessary computational overhead.

To address this challenge, we introduce a \textit{path navigating agent} that leverages Monte Carlo Tree Search (MCTS) guided by a lightweight Q-model to extract executable paths from the KG. Unlike generative agents that map generated tokens into the graph, our agent explicitly operates on the graph topology and treats reasoning as planning over discrete states and actions. This design offers three key advantages. First, by constraining the action space to valid edges in $\mathcal{G}$, the agent naturally decouples graph reasoning from semantic generation and treats the KG as a rigorous state machine. Second, the agent automatically generates step-level Q-value annotations through MCTS rollouts, which obviates the need for human-labeled training data. Third, instead of fine-tuning a generative model over a vast vocabulary, the agent relies on a compact Q-model to predict scalar values, which significantly reduces computational cost.

Our path navigating agent consists of three components: Q-model guided MCTS framework, random policy valuation for node evaluation, and a self-training pipeline for iterative model refinement.

\textbf{Q-model Guided MCTS.}~The agent performs tree search on the KG starting from a root node $h_0 = (x, s_0)$, which encodes the instruction $x$ and initial state $s_0$. Each node $h_t = (s_t, a_t)$ in the search tree corresponds to a state-action pair. The search proceeds through four MCTS phases:

\textit{1) Selection:}~The agent traverses the tree by selecting child nodes according to the UCT criterion until reaching a leaf node:
\begin{equation}
    \text{UCT}(s, a) = Q(s, a) + c \sqrt{\frac{\ln N(s)}{N(s, a)}}
\end{equation}
where $Q(s, a)$ is the estimated Q-value, $N(s, a)$ the visit count, and $c$ the exploration constant.

\textit{2) Expansion:}~ Upon reaching a non-terminal leaf state $s_l$, the agent expands all available actions $a \in \mathcal{A}(s_l)$ as child nodes.

\textit{3) Evaluation:}~Unlike standard MCTS with random rollouts, the agent queries its Q-model to initialize Q-values: $Q(s_l, a) \leftarrow Q_\theta(s_l, a)$, where $Q_\theta(s, a) \in (0, 1)$ predicts the task success probability.

\textit{4) Back-propagation:}~The agent propagates Q-values back to the root, updating visit counts and Q-estimates along the path via incremental averaging.
\begin{align}
    N(s_t, a_t) &\leftarrow N(s_t, a_t) + 1 \nonumber \\
    Q(s_t, a_t) &\leftarrow Q(s_t, a_t) + \frac{Q(s_l, a_l) - Q(s_t, a_t)}{N(s_t, a_t)}.
\end{align}
After $M$ iterations, the top-$K$ paths with the highest mean Q-values are extracted and processed by a cloud-based LLM for one-time filtering and parameter replacement into the final executable plan.

\textbf{Random Policy Valuation.}~To effectively guide the search, the agent's Q-model should not only identify superior actions but also quantify the likelihood of success after selecting each action. Most existing MCTS frameworks employ Outcome Reward Models (ORM) that assign binary values~\cite{cobbe2021training}. Such coarse signals overlook the nuanced differences among intermediate steps.

Instead, we define the Q-value as the expected success probability under a uniform random policy $\pi_u(a|s) = 1/|\mathcal{A}(s)|$. This value can be computed via the Bellman equation:
\begin{align}
\label{eq:Q_target_bellman}
    Q^{\pi_u}(s, a) = r(s, a) + \frac{1}{|\mathcal{A}(s')|} \sum_{a' \in \mathcal{A}(s')} Q^{\pi_u}(s', a').
\end{align}
$s' = T(s, a)$ is the successor state, and $r(s, a) \in \{0, 1\}$ is the terminal reward. The value $Q^{\pi_u}(s, a)$ represents the probability of reaching a successful terminal state when starting from $(s, a)$ and acting uniformly at random thereafter. When $Q^{\pi_u}(s, a) = 0$, no feasible path exists from $(s, a)$ to success, whereas higher values indicate greater likelihood of task completion. Crucially, recent theoretical results have shown that acting greedily with respect to $Q^{\pi_u}$ achieves optimality in finite-horizon deterministic MDPs with binary rewards~\cite{he2025random, laidlaw2023bridging}, which aligns precisely with our KG setting.

\textbf{Self-Training Pipeline.}
We train the agent's Q-model $Q_\theta$ through an iterative self-training procedure consisting of an initialization stage and a refinement stage.

\textit{1) Initialization:}~Directly deploying an untrained Q-model leads to random exploration and severe label imbalance, as the search predominantly encounters dead-end nodes with zero Q-values. To address this cold-start problem, we initialize $Q_\theta$ using preference learning on an existing GUI dataset. For each step $t$ along an expert trajectory, we construct preference pairs with the expert action $a_t^{+}$ as positive and a randomly sampled $a_t^{-} \in \mathcal{A}(s_t) \setminus \{a_t^{+}\}$ as negative. The agent is trained with a pairwise ranking loss based on the Bradley-Terry model:
\begin{align}
    \label{loss:init}
    \mathcal{L}_{\text{init}}(\theta) = -\mathbb{E}_{(\tau_t^{+}, \tau_t^{-}) \sim \mathcal{D}_{\text{init}}} \left[ \mathcal{R}(\tau_t^{+}, \tau_t^{-}) \right]
\end{align}
where $\mathcal{R}(\tau_t^{+}, \tau_t^{-}) = \log \sigma ( Q_\theta(s_t, a_t^{+}) - Q_\theta(s_t, a_t^{-}) )$.

\textit{2) Iterative Refinement:}~After initialization, the agent iteratively refines its Q-model using self-generated data. In each round, the agent samples instructions and executes MCTS guided by the current $Q_\theta$ to construct search trees, then computes target Q-values via bottom-up Bellman backup:
\begin{equation}
\label{eq:q_update}
    \hat{Q}(s, a) \leftarrow r(s, a) + \frac{1}{|\mathcal{A}(s')|} \sum_{a' \in \mathcal{A}(s')} \hat{Q}(s', a').
\end{equation}
Since Q-values represent probabilities in $[0, 1]$, we formulate the optimization as binary classification with soft labels:
\begin{align}
    \label{loss:BCE}
    \mathcal{L}_{\text{update}}(\theta) = - \underset{(s,a) \sim \mathcal{T}}{\mathbb{E}} \Big[ \hat{Q} \log p_\theta + (1 - \hat{Q}) \log (1 - p_\theta) \Big]
\end{align}
where $p_\theta = \sigma(Q_\theta(s, a))$ and $\mathcal{T}$ denotes state-action pairs from the search trees. Through this iterative process, the agent progressively improves its ability to identify promising paths within the KG.

\begin{algorithm}[tb]
  \caption{Self-Training for Path-Navigating Agents}
  \label{alg:self-training}
  \begin{algorithmic}
    \STATE {\bfseries Input:} Instruction dataset $\mathcal{D}$, initialization dataset $\mathcal{D}_{\text{init}}$, knowledge graph $\mathcal{G}$
    \STATE {\bfseries Output:} Trained Q-model $Q_\theta$
    
    \STATE \COMMENT{Model Initialization}
    \STATE Construct preference pairs $(\tau_t^{+}, \tau_t^{-})$ from $\mathcal{D}_{\text{init}}$
    \STATE Initialize $Q_\theta$ by minimizing ranking loss $\mathcal{L}_{\text{init}}$ (Eq.~\ref{loss:init})
    \STATE \COMMENT{ Iterative Refinement via MCTS}
    \FOR{each round $r = 1, 2, \ldots, R$}
      \STATE Sample instruction batch $\mathcal{B}$ from $\mathcal{D}$
      \STATE $\mathcal{T} \leftarrow \emptyset$
      \FOR{each instruction $x \in \mathcal{B}$}
        \STATE Execute MCTS guided by $Q_\theta$ on $\mathcal{G}$ to construct search tree
        \STATE Compute $\hat{Q}(s, a)$ for all nodes (Eq.~\ref{eq:q_update})
        \STATE $\mathcal{T} \leftarrow \mathcal{T} \cup \{(s, a, \hat{Q}(s, a))\}$
      \ENDFOR
      \STATE Update $Q_\theta$ by minimizing  $\mathcal{L}_{\text{update}}$ (Eq.~\ref{loss:BCE})
    \ENDFOR
    
    \STATE {\bfseries Return:} $Q_\theta$
  \end{algorithmic}
\end{algorithm}

\section{Theoretical Analysis}
\label{sec:Theory_anal}
In this section, we provide theoretical guarantees for the proposed Q-model guided MCTS framework. We first formalize the problem setting, then present our two main results: (1) a bias consistency guarantee ensuring the learned Q-model is close to $Q^{\pi_u}$ on critical states, and (2) a sample complexity bound for extracting the optimal path.

\subsection{Problem Setting}
\label{sec:problem_setting}

We analyze path extraction on $\mathcal{G}$ under the MDP formulation from Section~\ref{sec:problem_statement}. Proposition~\ref{prop:greedy_optimal} formalizes the optimality guarantee of the greedy policy with respect to $Q^{\pi_u}$.

\begin{proposition}[Optimality of Greedy Policy~\cite{he2025random}]
\label{prop:greedy_optimal}
Consider the KG-induced MDP with deterministic transitions, tree-structured state space, and binary terminal rewards $r \in \{0, 1\}$. Let $\pi_u$ be the uniform policy and $Q^{\pi_u}$ its corresponding Q-function. Define the greedy policy $\pi_{\text{greedy}}(s) = \arg\max_{a \in \mathcal{A}(s)} Q^{\pi_u}(s, a)$. Then $\pi_{\text{greedy}}$ is optimal.
\end{proposition}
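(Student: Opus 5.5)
The key quantity is the optimal value $V^*(s)$ in the KG-induced MDP. Because transitions are deterministic and rewards are binary and terminal, $V^*(s)=1$ exactly when some action sequence from $s$ reaches a successful terminal state, and $V^*(s)=0$ otherwise. Equivalently, $V^*(s)=\mathbb{1}[\exists\, a:\ Q^*(s,a)=1]$ and $Q^*(s,a)=V^*(T(s,a))$ for non-terminal successors. The plan is to show that, at every non-terminal state, the greedy action with respect to $Q^{\pi_u}$ is also optimal for $Q^*$. An induction over the remaining horizon then shows that $\pi_{\text{greedy}}$ attains $V^*(s_0)$.

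\textbf{Step 1 (support equivalence).} I will show that $Q^{\pi_u}(s,a)>0$ if and only if $Q^*(s,a)=1$.

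For the forward direction, unroll the Bellman equation \eqref{eq:Q_target_bellman}. This gives $Q^{\pi_u}(s,a)=\sum_{\tau}\Pr_{\pi_u}(\tau)\,R(s_H)$, where the sum runs over paths starting with $(s,a)$. Since all rewards are nonnegative, $Q^{\pi_u}(s,a)>0$ means some path has positive probability and reward $1$, so $Q^*(s,a)=1$.

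For the reverse direction, suppose $Q^*(s,a)=1$. Then there is a successful continuation path $a,a_1,\dots,a_k$. Under $\pi_u$ this exact path has probability $\prod_i 1/|\mathcal{A}(s_i)|>0$, because the action sets are finite and nonempty. Determinism means this path lands on the rewarding terminal, so $Q^{\pi_u}(s,a)>0$.

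\textbf{Step 2 (greedy picks a positive action when one exists).} Fix a state $s$ and let $a_g=\arg\max_{a} Q^{\pi_u}(s,a)$.

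If $V^*(s)=1$, some action $a$ has $Q^*(s,a)=1$, so by Step 1 $Q^{\pi_u}(s,a)>0$. The maximizer therefore satisfies $Q^{\pi_u}(s,a_g)>0$, and Step 1 again gives $Q^*(s,a_g)=1$.

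If $V^*(s)=0$, every action is optimal, so there is nothing to prove. Ties can be broken arbitrarily.

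\textbf{Step 3 (induction along the greedy trajectory).} Start from $s_0$ with $V^*(s_0)=1$; if $V^*(s_0)=0$ the claim is trivial. Step 2 gives $V^*(s_1)=Q^*(s_0,a_g)=1$, and repeating the argument keeps $V^*(s_t)=1$ at every step. At step $H$ this yields $R(s_H,x)=1$. Hence $\mathbb{E}_{\pi_{\text{greedy}}}[R]=V^*(s_0)$, which is the maximum over all policies, so $\pi_{\text{greedy}}$ is optimal.

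The tree-structured assumption is not essential here: determinism is what makes a single witnessing path carry positive probability. The tree structure does ensure that paths are well-defined with a finite horizon $H$.

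\textbf{Main obstacle.} The delicate point is Step 1 when horizons are mismatched. Rewards are given only at $s_H$, and the Bellman form with $r(s,a)$ must be read so that $r$ is nonzero only on transitions into terminal states. A reachable success state must also be terminal along a path that $\pi_u$ can follow. I would handle this by fixing the convention that every trajectory has exactly $H$ steps, or ends at a leaf that is treated as absorbing, so that both $Q^*$ and $Q^{\pi_u}$ are expectations over the same finite set of complete paths. The positivity argument then goes through.
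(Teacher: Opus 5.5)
Your argument is correct. The paper does not prove this proposition itself: it imports the result from \cite{he2025random} and gives no argument of its own, so there is no in-paper proof to compare against.

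Your route is the standard self-contained proof of this fact. It has two parts: the support equivalence $Q^{\pi_u}(s,a)>0 \iff Q^*(s,a)=1$, and then an induction along the greedy trajectory that maintains $V^*(s_t)=1$. The equivalence holds because, under deterministic transitions, a single successful continuation already has probability $\prod_i 1/|\mathcal{A}(s_i)|>0$ under $\pi_u$. The induction works from any start state, not only $s_0$, and is insensitive to how ties are broken.

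Your remark that the tree hypothesis is inessential is worth keeping. The paper's KG is actually built as a DAG after state deduplication, not a tree. Your argument, together with your convention of treating leaves as absorbing for paths that end before $H$, covers that setting directly. The proposition as stated assumes a tree, so your version shows that hypothesis can be dropped.
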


Let $\tau^* = (s_0^*, a_0^*, \ldots, s_{H-1}^*, a_{H-1}^*)$ denote an optimal path induced by $\pi_{\text{greedy}}$. Define the \textit{critical set} $\mathcal{C}$ as the collection of state-actions that must be ranked correctly to recover $\tau^*$:
\begin{align}
\label{eq:critical_set}
    \mathcal{C} = \bigcup_{t=0}^{H-1} \left\{ (s_t^*, a) : a \in \mathcal{A}(s_t^*) \right\}.
\end{align}
Let $|\mathcal{C}|=H \cdot \max_s |\mathcal{A}(s)|$. Define the minimum action gap along the optimal path:
\begin{align}
\label{eq:minimal_gap}
    \Delta^*_{\min} := \min_{t \in \{0, \ldots, H-1\}} \left[ Q^{\pi_u}(s_t^*, a_t^*) - \max_{a \neq a_t^*} Q^{\pi_u}(s_t^*, a) \right].
\end{align}
We train $Q_\theta$ using target values computed via uniform Bellman backup (Eq.~\ref{eq:q_update}) and perform MCTS at inference to extract the optimal path.

\subsection{Main Results}
\label{sec:main_results}

Next, we give the bias consistency guarantee on the critical set.

\begin{theorem}[Bias Consistency on $\mathcal{C}$]
\label{th:bias_consistency}
With probability at least $1-\delta$, the learned predictor $Q_\theta$ satisfies
\begin{align}
\|Q_\theta - Q^{\pi_u}\|_{2,\rho_{\mathcal{C}}} \le \epsilon_{\mathrm{bias}}(m,\delta)
\end{align}
where
\begin{align}
\epsilon_{\mathrm{bias}}(m,\delta) := \sqrt{\frac{1}{2}\left(\epsilon_{\mathrm{approx}} + 2\varepsilon_{\mathrm{gen}}(m,\delta/2) + \epsilon_{\mathrm{opt}}\right)}.
\end{align}
$\epsilon_{\mathrm{approx}}$ is the in-class approximation error, $\varepsilon_{\mathrm{gen}}(m,\delta)$ is the generalization error depending on $m$ and Rademacher complexity, and $\epsilon_{\mathrm{opt}}$ is the optimization error.
\end{theorem}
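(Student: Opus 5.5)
The plan is a standard excess-risk decomposition for the soft-label cross-entropy loss (Eq.~\ref{loss:BCE}), followed by converting excess risk into an $L_2$ bound through Pinsker's inequality. The factor $\tfrac12$ inside the square root in $\epsilon_{\mathrm{bias}}$ is exactly the Pinsker constant.

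\textbf{Setup.} We work with the sampling distribution $\rho_{\mathcal{C}}$ over the critical set $\mathcal{C}$. We assume the Bellman targets $\hat Q$ produced by Eq.~\ref{eq:q_update} on the search trees coincide with $Q^{\pi_u}$ on $\mathcal{C}$. This holds on a tree-structured KG once the subtree below each $(s,a)\in\mathcal{C}$ is fully expanded, since the backup then reproduces Eq.~\ref{eq:Q_target_bellman} exactly. For a predictor $p=\sigma(Q_\theta)\in(0,1)$, define the population risk
\[
L(p)=\mathbb{E}_{\rho_{\mathcal{C}}}\big[-Q^{\pi_u}\log p-(1-Q^{\pi_u})\log(1-p)\big].
\]
Pointwise in $(s,a)$, the excess risk relative to the Bayes predictor $p=Q^{\pi_u}$ is
\[
\mathrm{KL}\big(\mathrm{Ber}(Q^{\pi_u})\,\|\,\mathrm{Ber}(p)\big).
\]
Pinsker's inequality for Bernoulli distributions gives $\mathrm{KL}\ge 2(Q^{\pi_u}-p)^2$. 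Integrating over $\rho_{\mathcal{C}}$ yields
\[
\|p-Q^{\pi_u}\|_{2,\rho_{\mathcal{C}}}^2\le \tfrac12\big(L(p)-L(Q^{\pi_u})\big).
\]
Here $Q_\theta$ in the statement is read as the predicted probability $p_\theta$.

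\textbf{Decomposition.} Let $\hat p$ be the returned model, $\hat L$ the empirical risk on $m$ samples, $\hat p_{\mathrm{ERM}}$ the empirical minimizer, and $p^\dagger$ the best-in-class predictor. We telescope
\[
L(\hat p)-L(Q^{\pi_u})=[L(\hat p)-\hat L(\hat p)]+[\hat L(\hat p)-\hat L(\hat p_{\mathrm{ERM}})]+[\hat L(\hat p_{\mathrm{ERM}})-\hat L(p^\dagger)]+[\hat L(p^\dagger)-L(p^\dagger)]+[L(p^\dagger)-L(Q^{\pi_u})].
\]
The terms are handled as follows:
\begin{itemize}
\item The second bracket is $\epsilon_{\mathrm{opt}}$.
\item The third bracket is $\le 0$ by definition of $\hat p_{\mathrm{ERM}}$.
\item The last bracket is $\epsilon_{\mathrm{approx}}$.
\item The first and fourth brackets are each bounded by $\varepsilon_{\mathrm{gen}}(m,\delta/2)$ with probability $1-\delta/2$.
\end{itemize}
For the first bracket we use a uniform Rademacher bound over the loss class. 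For the fourth, Hoeffding at the fixed $p^\dagger$ suffices, but bounding it by the same uniform quantity is simpler. A union bound gives total probability $1-\delta$ and the factor $2\varepsilon_{\mathrm{gen}}$. Combining with the Pinsker step and taking square roots gives the stated $\epsilon_{\mathrm{bias}}(m,\delta)$.

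\textbf{Main obstacle.} The hardest step is making the uniform generalization bound legitimate. The log loss is unbounded as $p\to 0$ or $1$, so we restrict the model class to logits in a bounded range $[-B,B]$, equivalently $p\in[\sigma(-B),\sigma(B)]$. On that range the loss is bounded by $b=\log(1+e^{B})$ and is $1$-Lipschitz in the logit. Talagrand's contraction lemma then gives
\[
\varepsilon_{\mathrm{gen}}(m,\delta)=2\mathfrak{R}_m(\mathcal{Q})+b\sqrt{\log(1/\delta)/(2m)},
\]
where $\mathcal{Q}$ is the class of logits.

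A secondary subtlety is that the MCTS-generated samples are not i.i.d. from a fixed $\rho_{\mathcal{C}}$, and that targets may be biased if subtrees are not fully expanded. I would handle both by assumption: samples are drawn i.i.d. from $\rho_{\mathcal{C}}$, and targets are exact on $\mathcal{C}$. Any residual target error could alternatively be absorbed into $\epsilon_{\mathrm{approx}}$.
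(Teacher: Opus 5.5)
Your proposal is correct and follows the paper's proof essentially step for step. Both arguments split the excess log-loss into estimation, optimization and approximation parts; your five-term telescope is the paper's excess-risk lemma plus its approximation term, written out. Both apply a uniform Rademacher bound at $Q_\theta$ and at the best-in-class predictor, let Pinsker's inequality for Bernoulli KL supply the factor $\tfrac12$, and identify the Bayes predictor with $Q^{\pi_u}$.

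The differences are minor. You get Lipschitz constant $1$ by working in the logit, whereas the paper restricts $p$ to $[\tau,1-\tau]$ with $L=1/\tau$; your explicit loss range $b=\log(1+e^{B})$ is also more careful than the paper's bounded-difference constant. You assume exact targets, while the paper only needs $\mathbb{E}[Y\mid X]=Q^{\pi_u}(X)$, which suffices because the log loss is affine in $y$. Your closing aside is slightly off: genuinely biased targets would shift the Bayes predictor away from $Q^{\pi_u}$. That would need a separate triangle-inequality term and cannot be absorbed into $\epsilon_{\mathrm{approx}}$.
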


Theorem~\ref{th:bias_consistency} shows that the proxy error decreases as training samples increase. This bias bound directly controls the accuracy of MCTS node evaluation: when $\epsilon_{\mathrm{bias}} < \Delta^*_{\min}/2$, the learned Q-model preserves correct action rankings on the critical set (see Appendix~\ref{proof:bias_consistency} for details).


Our second main result establishes the sample complexity for optimal path extraction.

\begin{theorem}[Sample Complexity for Optimal Path Extraction]
\label{th:sample_complexity}
Suppose $\epsilon_{\mathrm{bias}}(m,\delta/2) < \Delta_{\min}^*/2$. Let $\Delta_{\mathrm{eff}} := \Delta_{\min}^* - 2\epsilon_{\mathrm{bias}} > 0$ and $K = \max_s |\mathcal{A}(s)|$. Then for the greedy path $\hat{a}_t = \arg\max_{a} \bar{Q}_n(s_t^*, a)$ to coincide with $\tau^*$ with probability at least $1 - \delta$, the number of MCTS simulations per node must satisfy
\begin{align}
n \ge \frac{32 (K-1) c^2 \ln(Hn/\delta)}{\Delta_{\mathrm{eff}}^2} + 2(K-1)\left(2N_0 + \frac{\pi^2}{3}\right)
\end{align}
yielding total complexity $N_{\mathrm{total}} = O\left( \frac{HK c^2 \ln(Hn/\delta)}{(\Delta_{\min}^* - 2\epsilon_{\mathrm{bias}})^2} \right) + O(HKN_0)$,
where $c$ is the UCT exploration constant and $N_0$ is a burn-in threshold.
\end{theorem}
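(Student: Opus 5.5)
The plan is to reduce path recovery to a per-node best-arm identification problem and then apply a UCT-style analysis.

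\textbf{Step 1: the learned values preserve the ranking.} Work on the event $E_{\mathrm{bias}}$ given by Theorem~\ref{th:bias_consistency} applied at level $\delta/2$, which holds with probability at least $1-\delta/2$. I will read the $\|\cdot\|_{2,\rho_{\mathcal{C}}}$ bound as a pointwise bound on $\mathcal{C}$. This needs either a lower bound on $\rho_{\mathcal{C}}$ absorbed into $\epsilon_{\mathrm{bias}}$, or a sup-norm version of the same argument, and I will state this explicitly as an assumption. On $E_{\mathrm{bias}}$, every $(s_t^*,a)\in\mathcal{C}$ satisfies $|Q_\theta(s_t^*,a)-Q^{\pi_u}(s_t^*,a)|\le\epsilon_{\mathrm{bias}}$. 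Combining this with the definition of $\Delta^*_{\min}$ gives
\begin{equation*}
Q_\theta(s_t^*,a_t^*)-\max_{a\neq a_t^*}Q_\theta(s_t^*,a)\ge \Delta^*_{\min}-2\epsilon_{\mathrm{bias}}=\Delta_{\mathrm{eff}}>0 .
\end{equation*}
By Proposition~\ref{prop:greedy_optimal}, greedy selection with respect to $Q^{\pi_u}$ is optimal. So it suffices to show that, at each $s_t^*$, the empirical MCTS mean $\bar Q_n(s_t^*,\cdot)$ ranks $a_t^*$ first. Each node is then a $K$-armed bandit whose arm means (the $Q_\theta$-based backed-up values) have gap at least $\Delta_{\mathrm{eff}}$.

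\textbf{Step 2: UCT analysis at a single node.} I will follow the Kocsis--Szepesv\'ari argument. Values lie in $(0,1)$, so Hoeffding bounds the deviation of $\bar Q_{N(s,a)}$ by $\sqrt{\ln(\cdot)/(2N(s,a))}$. Before a burn-in of $N_0$ visits the drift of the backed-up means is not yet controlled, so that phase is treated as arbitrary. After burn-in, a suboptimal arm $a$ can be selected only if its UCT index exceeds that of $a_t^*$. This requires one of three things: $a$'s mean is overestimated, $a_t^*$'s mean is underestimated, or $N(s,a)<8c^2\ln n/\Delta_{\mathrm{eff}}^2$. The standard peeling and union bound gives
\begin{equation*}
\mathbb{E}[N_n(s,a)]\le \frac{8c^2\ln(Hn/\delta)}{(\Delta_{\mathrm{eff}}/2)^2}\cdot\tfrac12+2N_0+\frac{\pi^2}{3}.
\end{equation*}
Here the $\pi^2/3$ comes from $\sum_t 2t^{-2}$. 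The halved gap appears because we need $\bar Q$ within $\Delta_{\mathrm{eff}}/2$ of its mean for correct ranking, and this produces the factor $32$.

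\textbf{Step 3: sum over arms and steps.} Summing Step 2 over the $K-1$ suboptimal arms bounds the number of pulls wasted on them. If $n$ exceeds this sum with a factor-$2$ slack, giving exactly the stated threshold, then $a_t^*$ receives at least $n/2$ visits, and a high-probability (Markov or concentration) version of the bound controls the deviation. Then $|\bar Q_n(s_t^*,a)-Q_\theta(s_t^*,a)|<\Delta_{\mathrm{eff}}/2$ for all $a$ with failure probability $\delta/(2H)$ at that node, once the confidence parameter $\delta/(Hn)$ inside the logarithm is chosen. A union bound over the $H$ nodes on $\tau^*$ and the event $E_{\mathrm{bias}}$ then gives total failure probability at most $\delta$. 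Multiplying the per-node $n$ by $H$ nodes gives $N_{\mathrm{total}}$, with $K-1\le K$.

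\textbf{Main obstacle.} The hardest step is Step 2. Backed-up values in MCTS are nonstationary, because the child subtrees are themselves being explored, so plain Hoeffding does not apply. I would first invoke the drift condition of Kocsis--Szepesv\'ari: the mean drifts by at most $\Delta_{\mathrm{eff}}/4$ after $N_0$ visits, which is precisely what $N_0$ is for. Failing that, I would simplify by treating leaf evaluations as direct $Q_\theta$ queries plus bounded noise, which makes the arms stationary. A secondary delicate point is turning the expected-pull bound into a high-probability statement without losing more than the logarithmic factor.
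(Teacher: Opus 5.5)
Your proposal follows essentially the same route as the paper's proof: a per-node Kocsis--Szepesv\'ari bound of $16c^2\ln(\cdot)/\Delta_{\mathrm{eff}}^2+2N_0+\pi^2/3$ expected pulls per suboptimal arm, summed over the $K-1$ suboptimal arms and doubled so that $a_t^*$ gets more than $n/2$ of the pulls, then a union bound with $\delta/(2H)$ per node plus $\delta/2$ for the bias event. (The paper enters $\epsilon_{\mathrm{bias}}$ as the K--S drift around $Q^{\pi_u}$ instead of first shifting the means to $Q_\theta$, which gives the same $\Delta^*_{\min}-2\epsilon_{\mathrm{bias}}$.) Your pointwise reading of the $L_2(\rho_{\mathcal C})$ bound and the expectation-to-high-probability step you flag are both assumptions the paper makes silently, and the per-arm constant is $16$, with the $32$ coming solely from the factor-$2$ slack in your Step 3, not from halving the gap as your Step 2 remark suggests.
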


\begin{table*}
  \begin{center}
      \resizebox{\textwidth}{!}{
        \begin{tabular}{lccccc}
          \toprule
          Method  & Type         & Input      & AndroidWorld (\%)  & MobileMiniWob++ (\%)  & DroidTask (\%)    \\
          \midrule
          GPT-4o   & GPT-4o & SoM & 34.5 & 56.5 & 57.0 \\
          Qwen 2.5-VL-3B  & Qwen 2.5-VL-3B  & SoM & 2.6 & 32.6 & 13.3  \\
          UI-TARS-2B  & UI-TARS-2B & screen & 6.9 & 31.5 & 34.8  \\
          UI-TARS-7B   & UI-TARS-7B & screen & 33.0 & 53.3 & 55.0  \\
          M3A    & GPT-4o & SoM & 40.5 & 68.5 & 72.2   \\
          AutoDroid-V2      & Llama-3-8B-ft & SoM & 26.0 & 53.3 & 54.4 \\
          AppAgentX     & GPT-4o & SoM & 62.5 & 72.8 & 88.6 \\
          GUI-Explorer      & GPT-4o & SoM & 47.4 & 80.4 & 88.0  \\
          EAM (Ours)   & GPT-4o, Qwen2.5-3B-instruct-ft & SoM & \textbf{52.6} & \textbf{76.1} & \textbf{86.1}  \\
          \bottomrule
        \end{tabular}}
  \end{center}
  \vskip 0.05in
  \caption{Success rate (\%) comparison between our method and baselines on AndroidWorld, MobileMiniWob++, and DroidTask benchmarks. ``SoM'' refers to Set-of-Mark prompting, which utilizes the bounding boxes recorded in the accessibility tree to annotate UI elements with numerical labels in screenshots. All results are averaged over three independent runs.}
\label{table:comparison}
  \vskip -0.1in
\end{table*}

Theorem~\ref{th:sample_complexity} shows that the simulation complexity scales polynomially with horizon $H$, branching factor $K$, and inversely with the squared effective action gap. This provides a theoretical foundation for the efficiency of our approach: as the Q-model improves (reducing $\epsilon_{\mathrm{bias}}$), fewer MCTS simulations are needed to recover the optimal path. Complete proofs are provided in Appendix~\ref{proof:sample_complexity}.

\section{Experiment}
In this section, we will present the results of our empirical study to answer the following question: 

\begin{itemize}[leftmargin=*,itemsep=-3pt,topsep=-3pt]
\item How does our proposed method perform on standard GUI benchmarks compared to both on-device and cloud-based baselines in terms of success rate and efficiency?
\item Does our self-training pipeline enable stable iterative performance improvements and exhibit theoretically expected properties?
\item How do the various components in our method affect performance, and does the trained Q-model demonstrate cross-environment generalization?
\end{itemize}    

\subsection{Experimental Setup}

\begin{table}
  \begin{center}
      \resizebox{\linewidth}{!}{
        \begin{tabular}{lcc}
          \toprule
          Method      & Latency (s)     & API Tokens Cost (K) \\
          \midrule
          GPT-4o    &  9.3   & 50.8\\
          Qwen2.5-VL-3B   &  7.7  &  - \\
          UI-TARS-2B   &  6.0  & - \\
          UI-TARS-7B     & 8.8 & 32.7\\
          M3A       &  16.9  &  62.6 \\
          AutoDroid-V2        & 2.1 & -\\
          AppAgentX     &  16  & 6.2\\
          GUI-Explorer       & 66.4    &  73.1\\
          EAM (Ours)   & \textbf{2.8} & \textbf{8.3}\\
          \bottomrule
        \end{tabular}
  }
  \end{center}
  \vskip 0.05in
  \caption{Efficiency comparison between EAM and baselines in terms of latency and token cost. ``Latency (s)'' denotes the average execution time per step. ``API Tokens Cost (K)'' indicates the total token consumption (in thousands) per step for LLM API calls. ``-'' indicates that the method uses locally deployed models without API calls.}
  \label{table:efficiency}
  \vskip -0.1in
\end{table}

\textbf{Benchmarks.}
We evaluate the  effectiveness and efficiency our method on three benchmarks: AndroidWorld~\cite{rawles2024androidworld} (116 tasks across 20 real-world apps), MobileMiniWob++~\cite{rawles2024androidworld} (92 web tasks), and DroidTask~\cite{wen2024autodroid} (158 tasks across 13 apps).

\textbf{Baselines.}
For on-device agents, we consider four baselines: 1) Qwen2.5-VL-3B, the vanilla VLM for on-device deployment; 2) UI-TARS-2B~\cite{qin2025ui}, the lightweight SFT version of UI-TARS-7B; 3) UI-TARS-7B~\cite{qin2025ui}, a SOTA GUI agent model; 4) AutoDroid-V2~\cite{wen2025autodroid}, a code-generation agent fine-tuned on Llama-3-8B that produces executable scripts for one-shot task execution. For cloud-based and knowledge-enhanced agents, we consider: 1) GPT-4o, the base VLM for cloud-based agents; 2) M3A~\cite{rawles2024androidworld}, a SOTA ReAct-based agent framework; 3) AppAgentX~\cite{jiang2025appagentx}, which extracts and reuses high-level actions from GUI transitions for task guidance; 4) GUI-Explorer~\cite{xie2025gui}, an exploration-augmented framework that collects trajectories, extracts element-wise knowledge, and uses RAG for decision-making.

\textbf{Implementation.}
Our framework is implemented as a plug-and-play module built on UI-TARS-2B, which serves as a local action executor following memory-grounded planning. We use GPT-4o for task-oriented exploration and knowledge mining. The knowledge base is constructed with Neo4j for app-wise knowledge graphs and Pinecone for screenshot embeddings.
We fine-tune Qwen2.5-Instruct for path extraction with three model sizes: 0.5B, 1.5B, and 3B. For Q-value estimation, we append a value head to output scalar predictions. The self-training pipeline runs for four rounds. All training is conducted on 4$\times$A800-80GB GPUs, and inference experiments are performed on a single RTX 4090-16GB to simulate on-device deployment.

\subsection{Experimental Results}

\textbf{Comparative results.}
Table~\ref{table:comparison} reports success rates on the three benchmarks. Our method achieves \textbf{52.6\%} on AndroidWorld, \textbf{76.1\%} on MobileMiniWob++, and \textbf{86.1\%} on DroidTask, surpassing all on-device baselines by significant margins (+19.6, +22.8, and +31.1, respectively). Notably, despite utilizing a 3B model for path extraction, our method substantially outperforms GPT-4o based M3A (+7.6, +13.9, and +29.1) and achieves performance comparable to knowledge-enhanced agents like AppAgentX and GUI-Explorer. These gains indicate that grounding decision-making in a structured knowledge graph effectively bridges the reasoning gap between small language models and frontier LLMs.
Table~\ref{table:efficiency} demonstrates that our approach achieves an average latency of \textbf{2.8\,s} and token cost of \textbf{8.3K} per step. This efficiency stems from our plan-then-execute framework: unlike cloud-based agents requiring massive iterative API calls, our method necessitates only a single API call to filter the extracted paths, reducing token cost by approximately $6\times$ compared to GPT-4o (50.8K). While AutoDroid-V2 also adopts plan-then-execute to achieve low latency (2.1\,s), its performance suffers due to a lack of rigorous knowledge guidance during inference.

\textbf{Substantial improvement through self-training.}
To answer Q.2, we analyze performance gains through iterative self-training. As shown in Fig.~\ref{fig:round_success_AW}--\ref{fig:round_success_DT}, all three model sizes (0.5B, 1.5B, and 3B) exhibit consistent improvement on both AndroidWorld and DroidTask as self-training rounds increase. Notably, the 3B model achieves the most substantial gains. Fig.~\ref{fig:optimal_gap_AW}--\ref{fig:optimal_gap_DT} further shows the value gap between optimal and suboptimal actions on critical paths: as self-training progresses, the model develops more accurate value estimates, enabling better separation of optimal actions. The initially negative or near-zero margins in early rounds indicate that the untrained model struggles to distinguish optimal actions, while later rounds show increasingly positive margins, demonstrating improved discriminative ability. This aligns with our theoretical analysis bounding path extraction ability by sample size and value estimation quality: as $\epsilon_{\text{bias}}$ decreases through training, the effective action gap $\Delta_{\text{eff}} = \Delta^*_{\min} - 2\epsilon_{\text{bias}}$ increases, requiring fewer MCTS simulations to recover optimal paths (Theorem~\ref{th:sample_complexity}).

\begin{figure}[ht]
  \vskip 0.1in
  \centering
  \begin{subfigure}{0.48\columnwidth}
    \includegraphics[width=\linewidth]{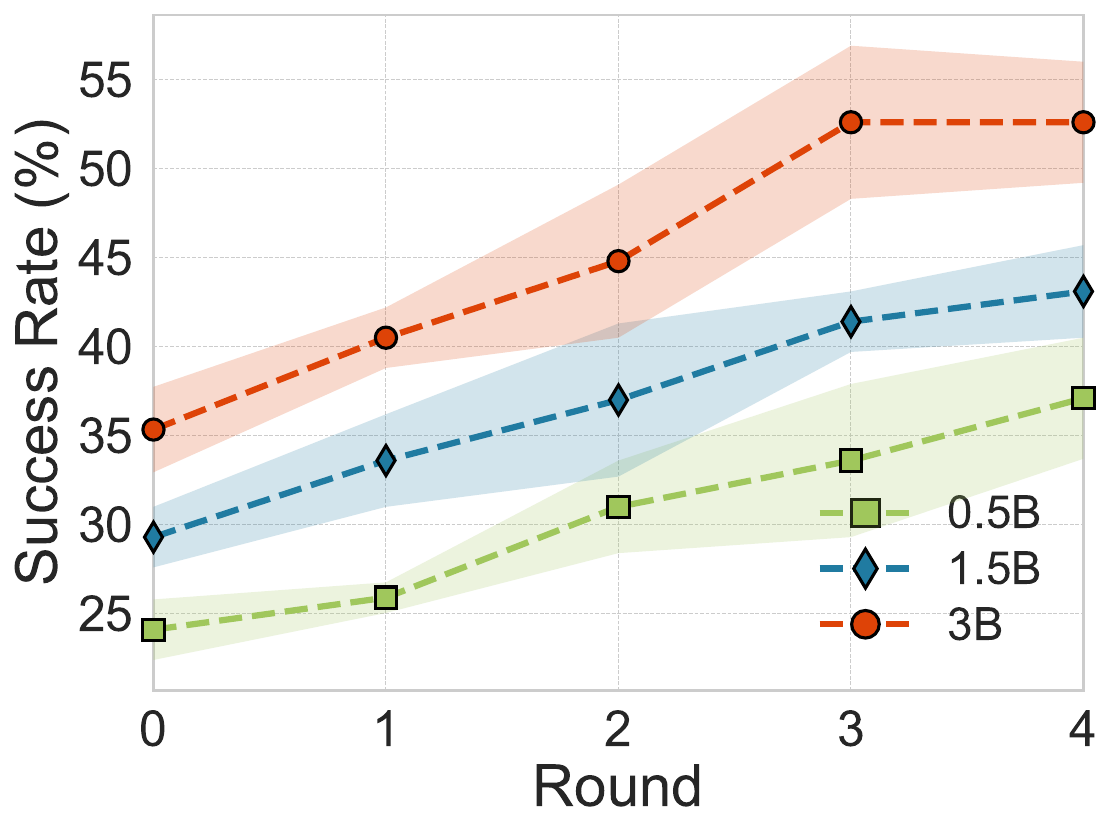}
    \caption{AndroidWorld-SR}
    \label{fig:round_success_AW}
  \end{subfigure}
  \hfill
  \begin{subfigure}{0.48\columnwidth}
    \includegraphics[width=\linewidth]{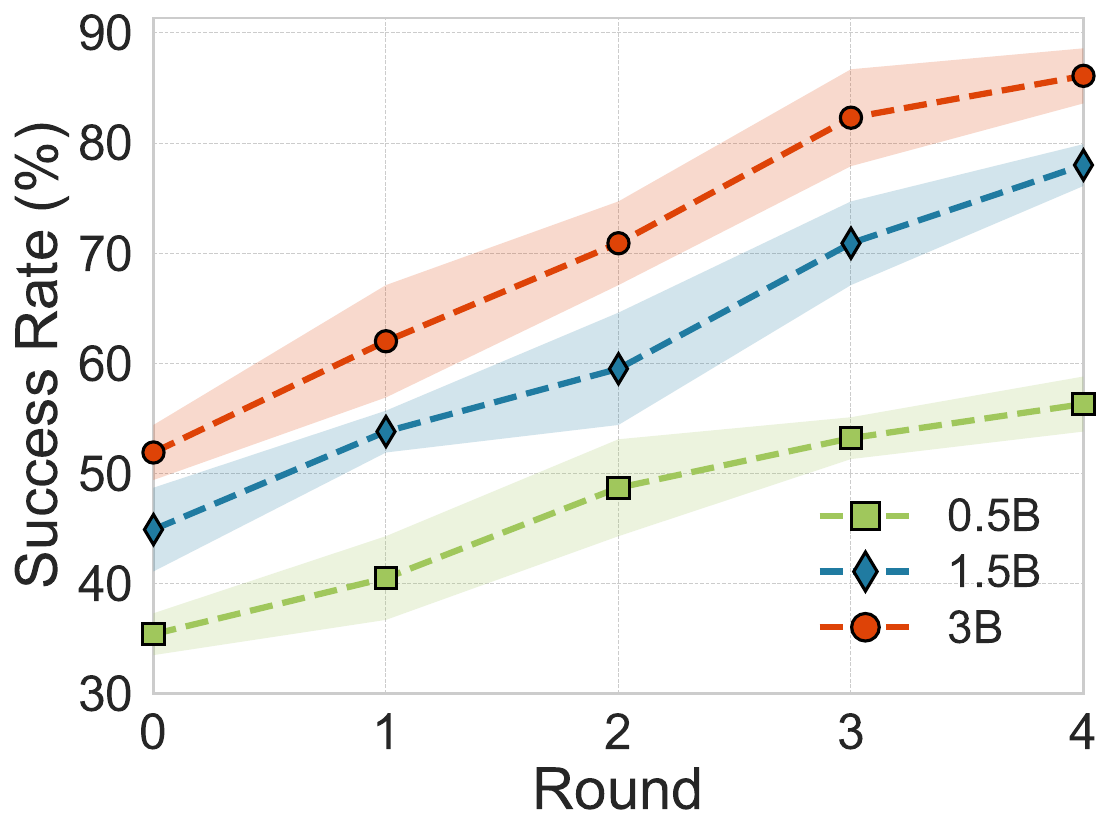}
    \caption{DroidTask-SR}
    \label{fig:round_success_DT}
  \end{subfigure}
  \begin{subfigure}{0.48\columnwidth}
    \includegraphics[width=\linewidth]{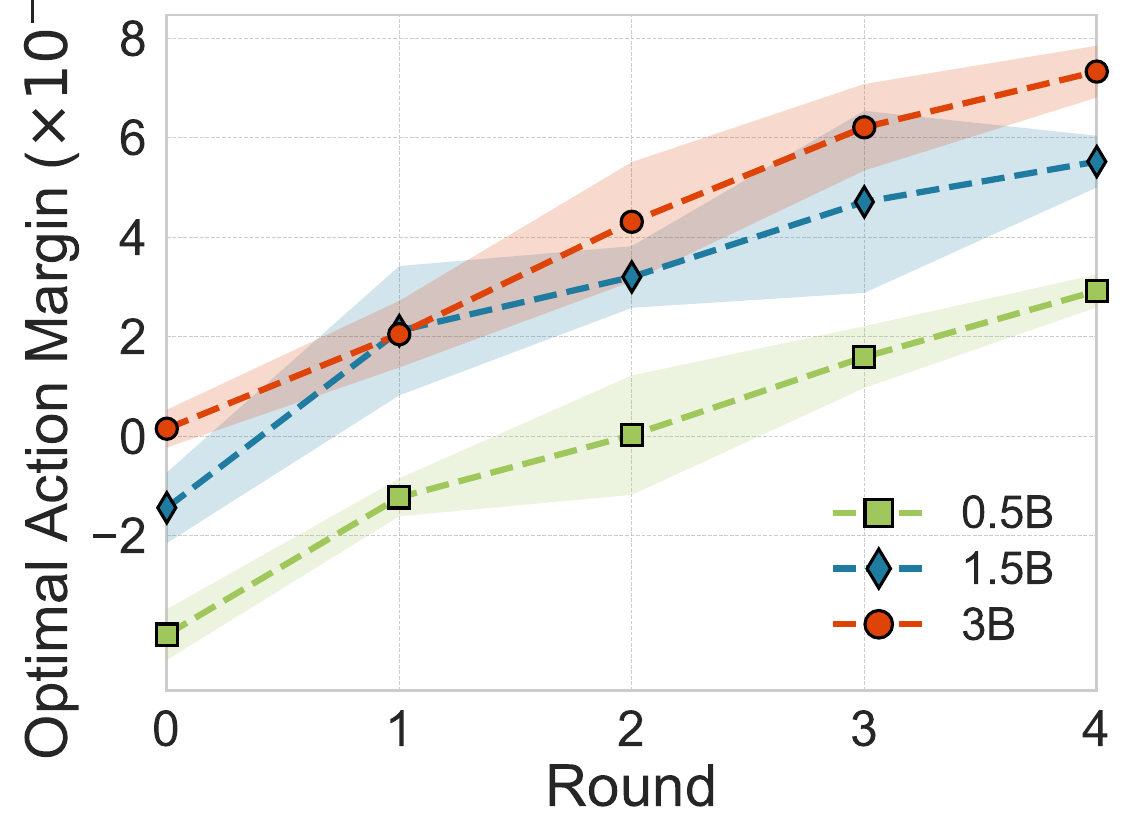}
    \caption{AndroidWorld-Gap}
    \label{fig:optimal_gap_AW}
  \end{subfigure}
  \hfill
  \begin{subfigure}{0.48\columnwidth}
    \includegraphics[width=\linewidth]{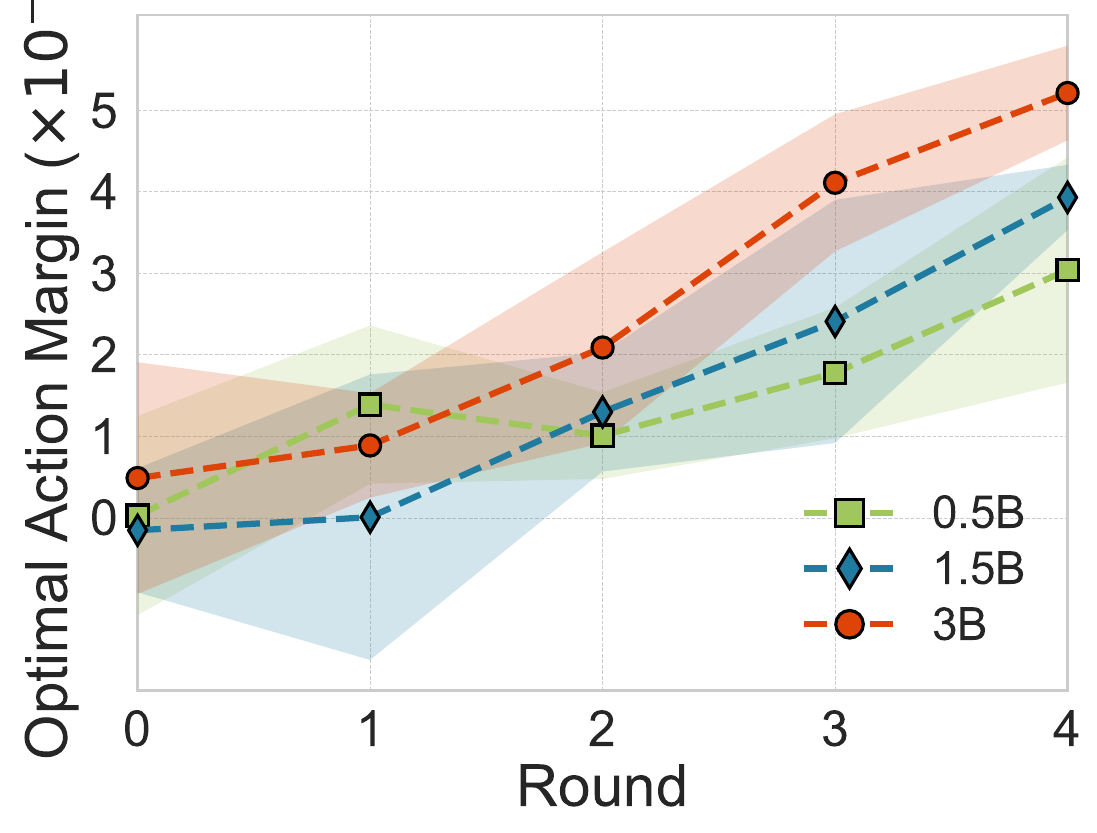}
    \caption{DroidTask-Gap}
    \label{fig:optimal_gap_DT}
  \end{subfigure}
  \caption{Self-training dynamics: (a-b) Success rate and (c-d) optimal action margin across rounds.}
  \label{fig:round_success}
\end{figure}

\textbf{Ablation Study and Cross-Environment Generalization.}
Fig.~\ref{fig:whole_ablation} quantifies the contribution of each component. Starting from the Baseline (UI-TARS-2B without KG guidance), introducing the knowledge graph with an initialization-trained model (+KB w/ init model) yields substantial gains, demonstrating the value of structured knowledge. Training the Q-model in-environment (+In-env Model) achieves the best performance, confirming that domain-specific training further refines value estimates. Notably, replacing with a cross-environment trained model (+Cross-env Model, trained on AndroidWorld) also improves performance on DroidTask and MobileMiniWob++, despite never seeing these environments during training. This addresses Q.3: the learned value estimation captures transferable knowledge about GUI navigation patterns, enabling reliable path extraction in unseen scenarios. While in-environment training remains optimal, cross-environment results suggest a well-trained Q-model can serve as strong initialization for new environments.

\begin{figure}[ht]
  \vskip 0.1in
  \centering
  \begin{subfigure}{0.48\columnwidth}
    \includegraphics[width=\linewidth]{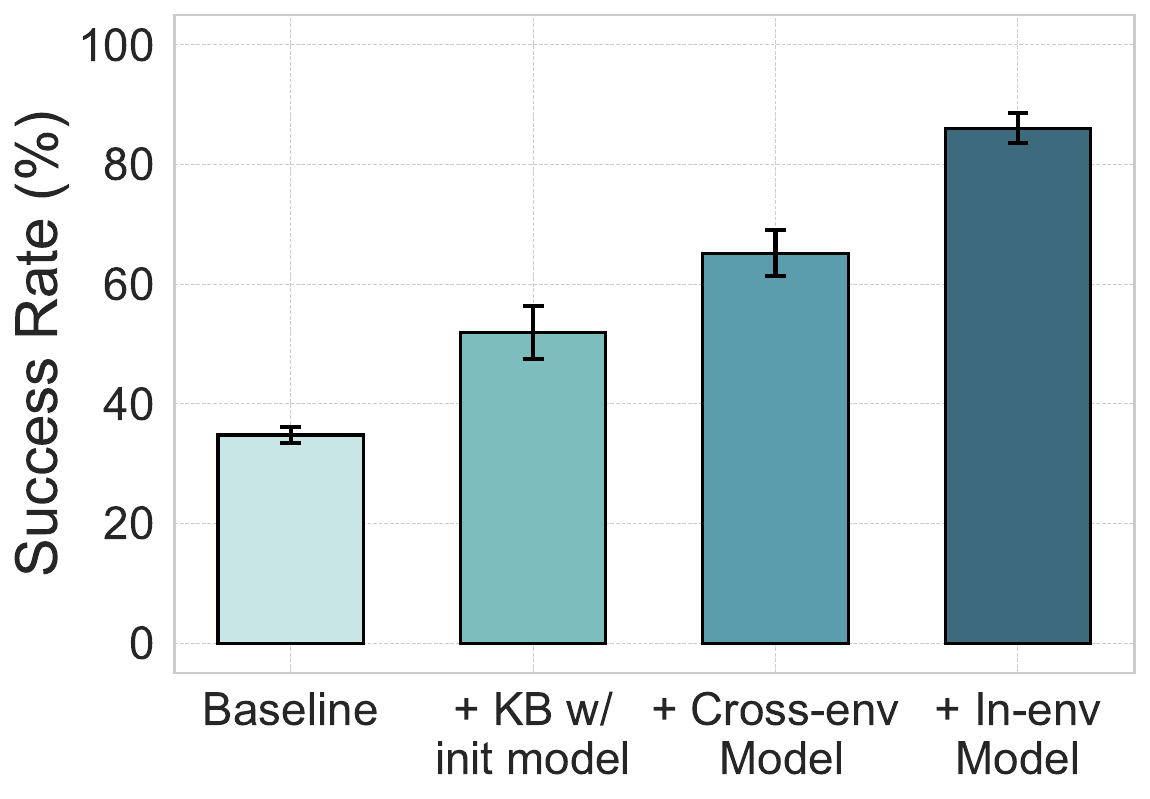}
    \caption{DroidTask}
    \label{fig:whole_ablation_DT}
  \end{subfigure}
  \hfill
  \begin{subfigure}{0.48\columnwidth}
    \includegraphics[width=\linewidth]{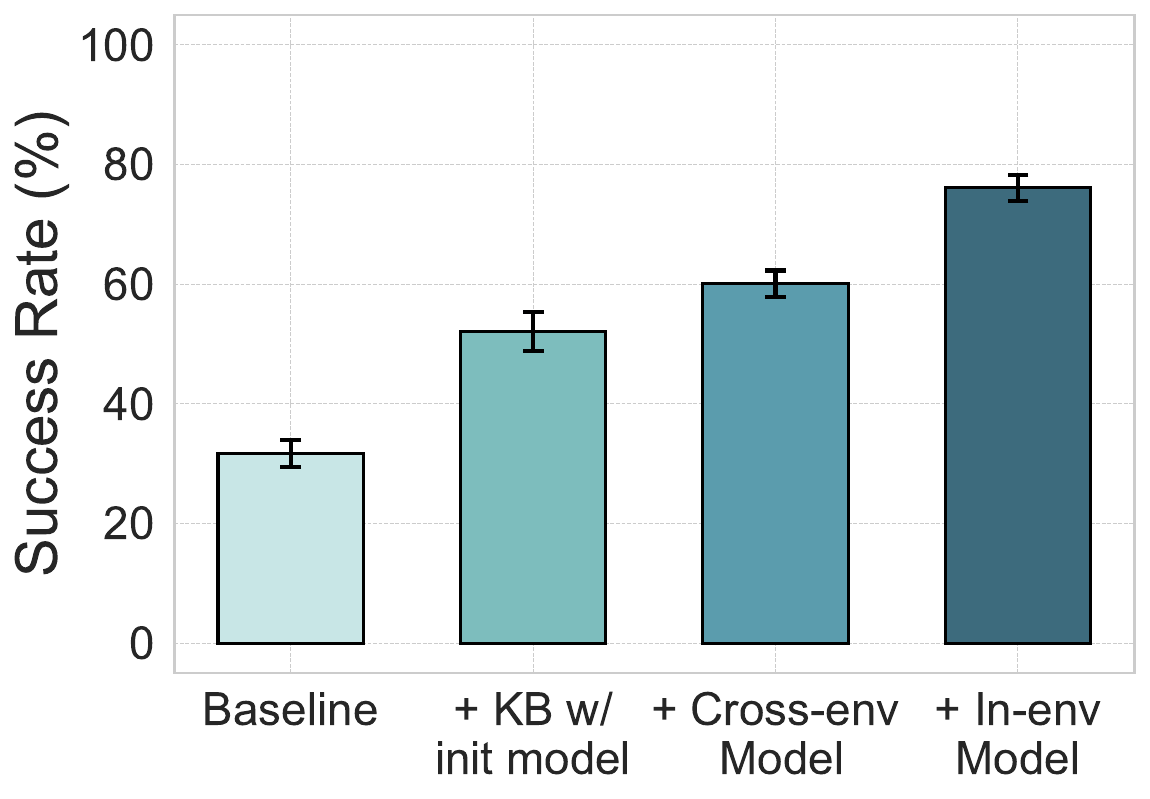}
    \caption{MobileMiniWob++}
    \label{fig:whole_ablation_MM}
  \end{subfigure}
  \caption{Ablation study on agentic memory with different model: cross-environment vs. in-environment trained models.}
  \label{fig:whole_ablation}
\end{figure}

More experimental details, including analyses of training loss curves, action groups, model size, and MCTS hyperparameters, can be found in the Appendix~\ref{app:additional_experiment}.

\section{Conclusion}
This paper presents Executable Agentic Memory (EAM), a structured knowledge graph that shifts GUI planning from free-form generation to a robust retrieval-and-execution process. Unlike prior knowledge-augmented approaches that rely on LLM reasoning over retrieved context, EAM enables agents to directly extract executable paths guaranteed to stay on valid transitions. We propose a sample-efficient memory construction pipeline and a value-guided MCTS framework with theoretical guarantees for reliable path extraction. Experimental results across three benchmarks demonstrate significant improvements in both success rate and efficiency. These findings show that treating the KG as an executable state machine, rather than a retrieval source for in-context injection, enables reliable, efficient, and long-horizon GUI automation.

\paragraph{Limitations.}
Our current framework assumes a relatively static UI environment. When applications undergo significant updates, the knowledge graph may become outdated. Developing efficient incremental evolution mechanisms for the knowledge graph to adapt to the frequently updating environments remains an important direction for future work.

\section*{Impact Statement}

This paper presents Executable Agentic Memory (EAM), a framework for improving the reliability and efficiency of long-horizon GUI automation agents. If deployed responsibly, such systems could reduce repetitive digital work, improve productivity, and support accessibility by helping users complete multi-step tasks in mobile and web applications.
At the same time, GUI automation can be misused for harmful purposes, including unauthorized actions, automated abuse of online services, and privacy-invasive data collection. Our approach also raises privacy and security considerations because building and using agent memory may involve storing interaction traces or screenshots that could contain sensitive information.
To mitigate these risks, we recommend deployments incorporate explicit user consent, least-privilege access, careful handling and redaction of stored artifacts, and monitoring/auditing of automated actions. We encourage future work on safety constraints for high-risk operations and privacy-preserving mechanisms for agent memory.


\bibliography{ref}
\bibliographystyle{icml2026}

\newpage
\appendix
\onecolumn
\section{Theoretical Proof}
\label{app:proof}
This appendix provides complete proofs of all theoretical results stated in Section~\ref{sec:Theory_anal}. We organize the proofs following the structure of the main text.

\subsection{Proof of theorem~\ref{th:bias_consistency}}
\label{proof:bias_consistency}
Classical MCTS with terminal rollouts produces node estimates that concentrate around their own expectations. In contrast, we use a learned value model $Q_\theta$ to guide tree search, which introduces additional bias that must be controlled. A necessary condition for reliable navigation is that the proxy $Q_\theta$ is consistent with the reference value $Q^{\pi_u}$ on the \emph{critical set} $\mathcal{C}$, so that action rankings on the critical path are preserved. Thus, we first aim to control the \emph{bias} induced by using a learned value model.
Specifically, for any critical pair $(s,a)\in\mathcal C$, we aim to bound
\begin{align}
\label{eq:bias_goal}
\left|
\mathbb{E}\!\left[\bar{Q}_N(s, a)\right]-Q^{\pi_u}(s, a)
\right|,
\end{align}
$\bar{Q}_N(s,a)$ denotes the MCTS estimate produced at inference time after $N$ visits. The expectation is taken over the internal randomness of MCTS. This bias control provides the interface to standard UCT/MCTS finite-sample analysis, where $\bar Q_N(s,a)$ concentrates around its mean. Formally, for any $(s,a)\in\mathcal{C}$,

\begin{align}
\label{eq:bias_variance}
\left|\bar{Q}_N(s,a)-Q^{\pi_u}(s,a)\right|
\le &\left|\bar{Q}_N(s,a)-\mathbb{E}\!\left[\bar{Q}_N(s,a)\right]\right| \notag \\
&+\left|\mathbb{E}\!\left[\bar{Q}_N(s,a)\right]-Q^{\pi_u}(s,a)\right|.
\end{align}

The first term is the standard finite-sample deviation of MCTS and will be bounded by standard UCT/MCTS concentration results. 
In this section, we focus on controlling the second term, which captures the bias induced by using a learned value model. Such proxy bias can be further decomposed into two components:
\begin{itemize}
\item \textbf{Target error (estimation noise).} The target $\hat Q$ deviates from $Q^{\pi_u}$ due to finite terminal rollouts.
\item \textbf{Learning/generalization error.} The learned predictor $Q_\theta$ deviates from the target mapping due to finite training samples and optimization error.
\end{itemize}

\textbf{Target Estimation:}
Let $Z(s, a) \in[0,1]$ be the discounted terminal return obtained by rolling out from $(s, a)$ to termination under $\pi_u$. Then
\begin{align}
    \label{eq:unbiased_rollout}
    \mathbb{E}[Z(s, a) \mid s, a]=Q^{\pi_u}(s, a) .
\end{align}

\begin{lemma}[Unbiasedness of Bellman Backup]
\label{lm:bellman_unbiased}
Let the uniform-policy Bellman operator $\mathcal{T}$ for deterministic transitions be
\begin{align}
    (\mathcal{T} \hat{Q})(s, a) = r(s, a) + \frac{1}{|\mathcal{A}(s')|} \sum_{a' \in \mathcal{A}(s')} \hat{Q}(s', a')
\end{align}
where $s' = T(s, a)$. If each child estimate is unbiased for $Q^{\pi_u}$, then the backed-up value is also unbiased.
\end{lemma}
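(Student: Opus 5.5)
The plan is to use linearity of expectation together with the fact that, for a fixed $(s,a)$, the successor $s' = T(s,a)$ and the action set $\mathcal{A}(s')$ are deterministic. We condition on the node $(s,a)$ throughout, so the only randomness is in the child estimates $\hat{Q}(s',a')$, which come from finite terminal rollouts or from earlier backups. The hypothesis is that $\mathbb{E}[\hat{Q}(s',a') \mid s',a'] = Q^{\pi_u}(s',a')$ for every $a' \in \mathcal{A}(s')$, and the goal is $\mathbb{E}[(\mathcal{T}\hat{Q})(s,a)] = Q^{\pi_u}(s,a)$.

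The steps, in order, are as follows.

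First, since transitions are deterministic and $r(s,a)\in\{0,1\}$ is a deterministic function of the pair, $r(s,a)$ and the normalizer $1/|\mathcal{A}(s')|$ are constants that come outside the expectation. Second, linearity gives
\begin{align*}
\mathbb{E}\big[(\mathcal{T}\hat{Q})(s,a)\big] = r(s,a) + \frac{1}{|\mathcal{A}(s')|}\sum_{a'\in\mathcal{A}(s')}\mathbb{E}\big[\hat{Q}(s',a')\big].
\end{align*}
Note that this step needs no independence among the child estimates; correlation from shared rollouts is harmless for the mean. Third, substituting the unbiasedness hypothesis turns the right-hand side into exactly the uniform-policy Bellman equation (Eq.~\ref{eq:Q_target_bellman}), and that expression equals $Q^{\pi_u}(s,a)$ by definition of $Q^{\pi_u}$.

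To make the lemma usable for the bottom-up backup of Eq.~\ref{eq:q_update}, I would also add an induction on remaining depth in the search tree. At leaves or terminal nodes, the estimate is either the terminal reward (exact) or a rollout return $Z(s,a)$, which is unbiased by Eq.~\ref{eq:unbiased_rollout}. The lemma then propagates unbiasedness one level up at a time, and the tree (DAG) structure guarantees that this induction terminates within $H$ levels.

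The only delicate point is the conditioning. If the set of expanded children, or the number of visits, depends on the same random rollouts being averaged, selection bias could break the identity. I would handle this by stating the lemma conditional on the tree structure. The backup formula averages over all of $\mathcal{A}(s')$ with fixed weights, and the expansion step expands every child, so the weights do not depend on the observed values and the conditional argument goes through.
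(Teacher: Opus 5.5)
Your argument is correct and is essentially the paper's proof: pull the deterministic quantities $r(s,a)$ and $1/|\mathcal{A}(s')|$ out of the expectation, apply linearity (no independence needed), substitute the unbiased child estimates, and close with the uniform-policy Bellman fixed-point equation for $Q^{\pi_u}$; your version also avoids the stray $\gamma$ that appears in the paper's displayed computation. The depth induction and the conditioning discussion go beyond what the lemma asserts (the paper only remarks afterwards that unbiasedness propagates through the tree), and for that extension the property you actually need is that leaf rollouts are drawn independently of the selection decisions that shaped the tree, not merely that the backup weights are fixed.
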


\begin{proof}
    Let $s' = T(s,a)$ denote the successor state under deterministic transition.
    Suppose for all $a' \in \mathcal{A}(s')$, the child estimates satisfy
    \begin{align}
    \mathbb{E}[\hat{Q}(s', a')] = Q^{\pi_u}(s', a').
    \end{align}
    
    The backed-up value is
    \begin{align}
    \hat{Q}(s,a) = r(s,a) + \frac{1}{|\mathcal{A}|} \sum_{a' \in \mathcal{A}(s')} \hat{Q}(s', a').
    \end{align}
    
    Taking expectations and using linearity:
    \begin{align}
    \mathbb{E}[\hat{Q}(s,a)] 
    &= r(s,a) + \gamma \frac{1}{|\mathcal{A}|} \sum_{a' \in \mathcal{A}(s')} \mathbb{E}[\hat{Q}(s', a')] \\
    &= r(s,a) + \gamma \frac{1}{|\mathcal{A}|} \sum_{a' \in \mathcal{A}(s')} Q^{\pi_u}(s', a') \\
    &= r(s,a) + \gamma \mathbb{E}_{a' \sim \pi_u}[Q^{\pi_u}(s', a')] \\
    &= (\mathcal{T} Q^{\pi_u})(s,a) \\
    &= Q^{\pi_u}(s,a),
    \end{align}
    where the last equality follows from the Bellman fixed-point equation for $Q^{\pi_u}$ under policy $\pi_u$.
\end{proof}

Given the target values of leaf nodes obtained by rolling out under $\pi_u$, Lemma \ref{lm:bellman_unbiased} propagate Unbiasedness to the whole tree. We can further bound the error between target values and the true values on the critical set. 

\begin{lemma}
    \label{lm:target_bounding}
    Assume that every leaf-edge value contributing to $\hat{Q}(s, a)$ for any $(s, a) \in \mathcal{C}$ is estimated by at least $n_{\min }^{\operatorname{tr}}$ i.i.d. terminal rollouts under $\pi_u$, each bounded in $[0,1]$. Then with probability at least $1-\delta$,
    $$
    \sup _{(s, a) \in \mathcal{C}}\left|\hat{Q}(s, a)-Q^{\pi_u}(s, a)\right| \leq \epsilon_{\operatorname{tr}}(\delta):=\sqrt{\frac{\ln (2|\mathcal{C}| / \delta)}{2 n_{\min }^{\operatorname{tr}}}}
    $$
\end{lemma}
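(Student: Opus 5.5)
The bound follows from Hoeffding's inequality at the leaves, propagated upward by the fact that the uniform Bellman backup is an averaging operation, and finished with a union bound over $\mathcal{C}$.

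\textbf{Leaf estimates.} Fix any leaf edge $(\ell,b)$ that contributes to $\hat{Q}(s,a)$ for some $(s,a)\in\mathcal{C}$. Its value $\hat{Q}(\ell,b)$ is the empirical mean of $n\ge n_{\min}^{\mathrm{tr}}$ i.i.d.\ terminal returns $Z\in[0,1]$. By \eqref{eq:unbiased_rollout}, each return has mean $Q^{\pi_u}(\ell,b)$. Hoeffding's inequality then gives
\begin{align}
\Pr\left(\left|\hat{Q}(\ell,b)-Q^{\pi_u}(\ell,b)\right|>\epsilon\right)\le 2\exp(-2n_{\min}^{\mathrm{tr}}\epsilon^2).
\end{align}

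\textbf{Propagation through the backup.} Next I show that the error at an internal pair is no larger than the worst error among its descendant leaf edges. Subtract the fixed-point equation $Q^{\pi_u}=\mathcal{T}Q^{\pi_u}$ from the update \eqref{eq:q_update}. The deterministic reward $r(s,a)$ cancels, leaving
\begin{align}
\hat{Q}(s,a)-Q^{\pi_u}(s,a)=\frac{1}{|\mathcal{A}(s')|}\sum_{a'\in\mathcal{A}(s')}\left(\hat{Q}(s',a')-Q^{\pi_u}(s',a')\right).
\end{align}
This is a convex combination of the children's errors, so $|\hat{Q}(s,a)-Q^{\pi_u}(s,a)|\le\max_{a'}|\hat{Q}(s',a')-Q^{\pi_u}(s',a')|$. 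Inducting bottom-up over the finite tree (depth at most $H$) bounds the error at each $(s,a)\in\mathcal{C}$ by the maximum error over its descendant leaf edges. This step matches the unbiasedness argument of Lemma~\ref{lm:bellman_unbiased}, with expectations replaced by a deterministic sup-norm contraction.

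\textbf{Union bound.} I expect this step to be the main obstacle. The natural route is to apply Hoeffding directly to $\hat{Q}(s,a)$ for each $(s,a)\in\mathcal{C}$, then union-bound over $|\mathcal{C}|$ events. Setting $2|\mathcal{C}|\exp(-2n_{\min}^{\mathrm{tr}}\epsilon^2)=\delta$ yields exactly $\epsilon_{\mathrm{tr}}(\delta)$.

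This direct route is valid only if $\hat{Q}(s,a)$ is itself an average of at least $n_{\min}^{\mathrm{tr}}$ independent $[0,1]$ terms. That does hold: $\hat{Q}(s,a)$ is a fixed positive-weighted average of the leaf means, and the weights sum to one when rewards sit only at terminal nodes. The effective sample count is therefore at least $n_{\min}^{\mathrm{tr}}$, because the weighted average of independent bounded variables has Hoeffding exponent $2\epsilon^2/\sum_i w_i^2$ with $\sum_i w_i^2\le 1/n_{\min}^{\mathrm{tr}}$.

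I will spell out this weighted-Hoeffding computation first. Union-bounding over the leaf edges instead would replace $|\mathcal{C}|$ by the number of leaves, which does not give the stated constant. Once the weighted-Hoeffding step is in place, the union bound over the $|\mathcal{C}|$ critical pairs gives the claim with probability at least $1-\delta$.
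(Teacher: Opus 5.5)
Your proof is correct and has the same skeleton as the paper's. Both use a per-pair Hoeffding bound with exponent $2n_{\min}^{\mathrm{tr}}\epsilon^2$ for each $(s,a)\in\mathcal{C}$, then a union bound over the $|\mathcal{C}|$ pairs, which gives exactly $\epsilon_{\mathrm{tr}}(\delta)$.

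The difference is in how the per-pair step is justified. The paper simply writes $\hat{Q}(s,a)=\frac{1}{n}\sum_{i=1}^{n}Z_i$ with $Z_i$ i.i.d.\ rollouts of mean $Q^{\pi_u}(s,a)$. That is literally accurate only when $(s,a)$ is itself a leaf edge. For an internal pair, the target produced by the uniform backup is a fixed convex combination of leaf means, i.e.\ a weighted sum of independent but non-identically distributed rollouts.

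Your weighted-Hoeffding computation closes exactly that gap. The per-rollout coefficients are $w_\ell/n_\ell$, so the Hoeffding denominator is $\sum_\ell w_\ell^2/n_\ell\le \max_\ell w_\ell/n_{\min}^{\mathrm{tr}}\le 1/n_{\min}^{\mathrm{tr}}$. The price is that it makes explicit two assumptions the paper leaves implicit: rollouts at different leaves are mutually independent, and the search tree (hence the weights) does not depend on the rollout outcomes.

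Two small remarks. First, the leaf weights sum to at most one irrespective of where rewards sit, since $r$ enters only as a deterministic shift that cancels. Branches ending at genuinely terminal states just contribute zero-variance terms, so your qualification about terminal rewards is unnecessary. Second, your sup-norm propagation paragraph is not used in the final bound. As you yourself note, union-bounding over leaves would give the wrong constant, so that paragraph can be dropped.
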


\begin{proof}
    Fix $(s,a) \in \mathcal{C}$.
    Let $Z_1, Z_2, \ldots, Z_{n}$ be the i.i.d.\ terminal rollout returns contributing to $\hat{Q}$, where $n \ge n_{\min}^{\mathrm{tr}}$.
    Each $Z_i \in [0,1]$ and by \eqref{eq:unbiased_rollout}, $\mathbb{E}[Z_i] = Q^{\pi_u}(s,a)$.
    
    The target estimate is $\hat{Q} = \frac{1}{n}\sum_{i=1}^{n} Z_i$.
    
    By Hoeffding's inequality for bounded random variables:
    \begin{align}
    \Pr\left( \left| \hat{Q} - Q^{\pi_u}(s,a) \right| \ge \epsilon \right)
    \le 
    2 \exp\left( -2n\epsilon^2 \right)
    \le
    2 \exp\left( -2n_{\min}^{\mathrm{tr}} \epsilon^2 \right).
    \end{align}
    
    Setting $\epsilon = \sqrt{\frac{\ln(2|\mathcal{C}|/\delta)}{2n_{\min}^{\mathrm{tr}}}}$, we obtain
    \begin{align}
    \Pr\left( \left| \hat{Q} - Q^{\pi_u}(s,a) \right| \ge \epsilon \right)
    \le 
    \frac{\delta}{|\mathcal{C}|}.
    \end{align}
    
    Taking a union bound over all $|\mathcal{C}|$ pairs in $\mathcal{C}$:
    \begin{align}
    \Pr\left( \sup_{(s,a) \in \mathcal{C}} \left| \hat{Q}(s,a) - Q^{\pi_u}(s,a) \right| \ge \epsilon \right)
    \le 
    |\mathcal{C}| \cdot \frac{\delta}{|\mathcal{C}|} = \delta.
    \end{align}
    
    Thus with probability at least $1-\delta$, the stated bound holds.
\end{proof}

Lemma \ref{lm:target_bounding} shows that the target error bound is tightened as the number of samples increases.

\paragraph{Learning Error.}
We now relate the learning error of the Q-model $Q_\theta$ to the training sample size. Let $S = \{(x_i, y_i)\}_{i=1}^{m}$ be the training data, where $x_i = (s_i, a_i)$ denotes a state-action pair and $y_i = \hat{Q}(s_i, a_i) \in [0, 1]$ is the corresponding target value. We analyze the binary cross-entropy loss:
\begin{align}
\label{eq:log_loss}
    \ell(p, y) = -y \log p - (1-y) \log(1-p)
\end{align}
For analysis, we restrict predictors to $[\tau,1-\tau]$ for some $\tau\in(0,1/2)$, solely to ensure $\ell(\cdot,y)$ is Lipschitz with a finite constant $L:=1/\tau$.

Define the expected and empirical risks under $\mathcal{D}$:

\begin{align}
\label{eq:risk_definition}
\mathcal{R}(Q) &:= \mathbb{E}_{(X,Y)\sim\mathcal{D}}[\ell(Q(X),Y)], \\
\hat{\mathcal{R}}_S(Q) &:= \frac{1}{m}\sum_{i=1}^m \ell(Q(x_i),y_i). \notag
\end{align}

Let $\mathcal{Q}$ be the function class and $\hat{\Re}_m(\mathcal{Q})$ the empirical Rademacher complexity on $\{x_i\}_{i=1}^m$.

\begin{lemma}[Rademacher generalization bound]
\label{lm:rademacher}
With probability at least $1-\delta$ over the draw of $S$,
\begin{align}
\label{eq:rademacher}
\sup_{Q\in\mathcal{Q}}
\left|\mathcal{R}(Q)-\hat{\mathcal{R}}_S(Q)\right|
\le
2L\,\hat{\Re}_m(\mathcal{Q})
+
3\sqrt{\frac{\ln(2/\delta)}{2m}}.
\end{align}
\end{lemma}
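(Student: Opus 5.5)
The plan is the standard symmetrization argument applied to the loss class $\ell\circ\mathcal{Q} := \{(x,y)\mapsto \ell(Q(x),y) : Q\in\mathcal{Q}\}$, followed by Talagrand's contraction lemma to pass from $\ell\circ\mathcal{Q}$ back to $\mathcal{Q}$.

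First I would check boundedness, since McDiarmid needs it. For predictors valued in $[\tau,1-\tau]$ and $y\in[0,1]$, the loss satisfies $0\le \ell(p,y)\le \ln(1/\tau)$. I will treat the loss as rescaled to $[0,1]$, which is the convention under which the constant $3\sqrt{\ln(2/\delta)/(2m)}$ is stated. Otherwise the deviation terms pick up a factor $\ln(1/\tau)$, and I would note that explicitly.

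The argument then proceeds in four steps.

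\emph{Concentration.} Define $\Phi(S):=\sup_{Q\in\mathcal{Q}}(\mathcal{R}(Q)-\hat{\mathcal{R}}_S(Q))$. Replacing one sample changes $\Phi$ by at most $1/m$. McDiarmid's inequality therefore gives, with probability at least $1-\delta/2$,
\[
\Phi(S)\le \mathbb{E}\Phi+\sqrt{\ln(2/\delta)/(2m)}.
\]

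\emph{Symmetrization.} Introduce a ghost sample and Rademacher signs. This gives $\mathbb{E}\Phi\le 2\,\mathbb{E}\,\Re_m(\ell\circ\mathcal{Q})$.

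\emph{Expected to empirical complexity.} The empirical Rademacher complexity also has bounded differences, with constant $1/m$. A second application of McDiarmid, with probability $1-\delta/2$, bounds $2\,\mathbb{E}\,\Re_m$ by $2\hat{\Re}_S(\ell\circ\mathcal{Q})+2\sqrt{\ln(2/\delta)/(2m)}$. Combining the two deviation terms gives the constant $3\sqrt{\ln(2/\delta)/(2m)}$.

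\emph{Contraction.} For each fixed $y_i$, the map $p\mapsto \ell(p,y_i)$ has derivative $-y/p+(1-y)/(1-p)$, whose magnitude is at most $1/\tau=L$ on $[\tau,1-\tau]$. It is therefore $L$-Lipschitz. Talagrand's contraction lemma, applied coordinatewise with $y_i$ fixed, gives $\hat{\Re}_S(\ell\circ\mathcal{Q})\le L\,\hat{\Re}_m(\mathcal{Q})$.

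So far this controls only one side, $\mathcal{R}-\hat{\mathcal{R}}_S$. For the other side I would repeat the argument with $\Phi'(S)=\sup_Q(\hat{\mathcal{R}}_S-\mathcal{R})$. To keep the total failure probability at $\delta$, I would share the single McDiarmid event for $\hat{\Re}$ between the two sides and split the remaining budget between the two $\Phi$ events. If the constants do not come out exactly, I would accept a $\ln(4/\delta)$ in place of $\ln(2/\delta)$, or absorb the difference into the factor 3.

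The step I expect to be the main obstacle is this constant bookkeeping: combining the two-sided bound, the empirical rather than expected complexity, and the loss-range scaling so that exactly the stated form comes out. The contraction step is routine once the Lipschitz constant $L=1/\tau$ has been verified.
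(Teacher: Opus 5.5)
Your route is the paper's route: symmetrization, contraction with $L=1/\tau$, McDiarmid, and a union bound over the two signs. Your one-sided argument is more careful than the paper's. The paper identifies the expected Rademacher average with $\hat{\Re}_m(\mathcal{Q})$, uses a McDiarmid tail that does not evaluate to $\delta/2$ at its chosen $t$, and attributes the factor $3$ to unspecified ``refinements''. Your accounting $3=1+2$ from the two concentration steps is the right one.

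\textbf{Two-sided bookkeeping.} This is the gap you flagged, and your proposed patch does not close it. With $\delta/2$ for the complexity event and $\delta/4$ for each $\Phi$-event, you obtain $\sqrt{\ln(4/\delta)/(2m)}+2\sqrt{\ln(2/\delta)/(2m)}$. That is strictly larger than the claimed bound. There is no slack in the $3$ to absorb the excess, because it is already used up as $1+2$. The fix is to concentrate the combined quantity once per sign:
\[
\Psi_{\pm}(S):=\sup_{Q\in\mathcal{Q}}\pm\bigl(\mathcal{R}(Q)-\hat{\mathcal{R}}_S(Q)\bigr)-2\hat{\Re}_S(\ell\circ\mathcal{Q}).
\]
Each $\Psi_\pm$ has bounded differences $1/m+2/m=3/m$. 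Also $\mathbb{E}\Psi_\pm\le 0$ by symmetrization; the minus sign is handled because the Rademacher average of $-\ell\circ\mathcal{Q}$ equals that of $\ell\circ\mathcal{Q}$. McDiarmid then gives $\Pr(\Psi_\pm\ge t)\le \exp(-2mt^2/9)$, which equals $\delta/2$ exactly at $t=3\sqrt{\ln(2/\delta)/(2m)}$. A union bound over the two signs, followed by your contraction step, yields the lemma exactly as stated.

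\textbf{Boundedness.} Your diagnosis is right, but rescaling is not a convention you may adopt silently. Dividing $\ell$ by $B$ also divides $\mathcal{R}$, $\hat{\mathcal{R}}_S$ and $L$ by $B$, so the factor $B$ just reappears on the deviation term. It has to be an explicit hypothesis: the oscillation $\ln\frac{1-\tau}{\tau}$ of $\ell$ on $[\tau,1-\tau]\times[0,1]$ must be at most $1$, equivalently $\tau\ge 1/(1+e)$. Without it the statement is false. Take the singleton class $\mathcal{Q}=\{Q\equiv\tau\}$, so $\hat{\Re}_m(\mathcal{Q})=0$. With labels $Y\sim\mathrm{Bern}(1/2)$ and $\bar y$ the empirical label mean, one has $|\mathcal{R}(Q)-\hat{\mathcal{R}}_S(Q)|=\ln\frac{1-\tau}{\tau}\,\bigl|\bar y-\tfrac12\bigr|$. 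For odd $m$ this is at least $\ln\frac{1-\tau}{\tau}/(2m)$, which surely exceeds $3\sqrt{\ln(2/\delta)/(2m)}$ once $\tau$ is small enough. The paper's proof hides the same issue behind its assertion $\|\ell\|_\infty\le 1$.
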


\begin{proof}
    The proof proceeds in four steps.
    \paragraph{Step 1: Symmetrization.}
    Let $S = \{(x_i, y_i)\}_{i=1}^m$ and $S' = \{(x_i', y_i')\}_{i=1}^m$ be two independent samples from $\mathcal{D}$.
    By standard symmetrization arguments (see, e.g., Theorem 26.5 in Shalev-Shwartz \& Ben-David, 2014):
    \begin{align}
    \mathbb{E}_S\left[ \sup_{Q \in \mathcal{Q}} \left( \mathcal{R}(Q) - \hat{\mathcal{R}}_S(Q) \right) \right]
    \le 
    2 \mathbb{E}_{S,\sigma}\left[ \sup_{Q \in \mathcal{Q}} \frac{1}{m} \sum_{i=1}^m \sigma_i \ell(Q(x_i), y_i) \right],
    \end{align}
    where $\sigma_1, \ldots, \sigma_m$ are i.i.d.\ Rademacher random variables ($\Pr(\sigma_i = \pm 1) = 1/2$).
    
    \paragraph{Step 2: Lipschitz contraction.}
    Since $\ell(\cdot, y)$ is $L$-Lipschitz on $[\tau, 1-\tau]$ (with $L = 1/\tau$), and the Rademacher complexity satisfies the contraction principle:
    \begin{align}
    \mathbb{E}_{S,\sigma}\left[ \sup_{Q \in \mathcal{Q}} \frac{1}{m} \sum_{i=1}^m \sigma_i \ell(Q(x_i), y_i) \right]
    \le 
    L \cdot \mathbb{E}_{S,\sigma}\left[ \sup_{Q \in \mathcal{Q}} \frac{1}{m} \sum_{i=1}^m \sigma_i Q(x_i) \right]
    = L \hat{\Re}_m(\mathcal{Q}).
    \end{align}
    
    \paragraph{Step 3: McDiarmid's inequality.}
    Define $\Phi(S) := \sup_{Q \in \mathcal{Q}} \left( \mathcal{R}(Q) - \hat{\mathcal{R}}_S(Q) \right)$.
    Changing a single sample $(x_i, y_i)$ changes $\Phi(S)$ by at most $\frac{2 \|\ell\|_\infty}{m} \le \frac{2}{m}$ (since losses are bounded when outputs are in $[\tau, 1-\tau]$ and labels in $[0,1]$).
    
    By McDiarmid's inequality:
    \begin{align}
    \Pr\left( \Phi(S) - \mathbb{E}[\Phi(S)] \ge t \right) \le \exp\left( -\frac{2t^2}{m \cdot (2/m)^2} \right) = \exp\left( -\frac{mt^2}{2} \right).
    \end{align}
    
    Setting $t = \sqrt{\frac{\ln(2/\delta)}{2m}}$ and combining:
    \begin{align}
    \Pr\left( \sup_{Q \in \mathcal{Q}} \left( \mathcal{R}(Q) - \hat{\mathcal{R}}_S(Q) \right) \ge 2L\hat{\Re}_m(\mathcal{Q}) + \sqrt{\frac{\ln(2/\delta)}{2m}} \right) \le \frac{\delta}{2}.
    \end{align}
    
    \paragraph{Step 4: Two-sided bound.}
    Applying the same argument to $\hat{\mathcal{R}}_S(Q) - \mathcal{R}(Q)$ and taking a union bound yields the two-sided result with probability $1-\delta$.
    The constant $3$ (instead of $2$) in the final bound accounts for technical refinements in the symmetrization step.
\end{proof}

We now convert the uniform bound into an excess-risk bound between the learned predictor and the best achievable predictor. we assume approximate ERM:

\begin{align}
\label{eq:empirical_erm}
\hat{\mathcal{R}}_S(Q_\theta)\le \inf_{Q\in\mathcal{Q}}\hat{\mathcal{R}}_S(Q)+\epsilon_{\mathrm{opt}},
\end{align}
where $\epsilon_{\mathrm{opt}}\ge 0$ is the optimization error.
Define the in-class optimal predictor
\begin{align}
\label{eq:in_class_optimal}
Q_{\mathcal{Q}}^{\star}:=\arg\min_{Q\in\mathcal{Q}}\mathcal{R}(Q).
\end{align}

Then we can have the excess risk bound.

\begin{lemma}[Excess risk bound]
\label{lm:excess_risk}
On the event of Lemma~\ref{lm:rademacher}, we have
\begin{align}
\label{eq:excess_risk}
\mathcal{R}(Q_\theta)-\mathcal{R}(Q_{\mathcal{Q}}^{\star})
\le
2\varepsilon_{\mathrm{gen}}(m,\delta)+\epsilon_{\mathrm{opt}},
\end{align}
where
\begin{align}
\label{eq:eps_gen}
\varepsilon_{\mathrm{gen}}(m,\delta)
:=
2L\,\hat{\Re}_m(\mathcal{Q})
+
3\sqrt{\frac{\ln(2/\delta)}{2m}}.
\end{align}
\end{lemma}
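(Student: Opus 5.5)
The plan is the standard three-term ERM decomposition, run on the event of Lemma~\ref{lm:rademacher}. Fix any competitor $Q \in \mathcal{Q}$ and split the excess risk as
\begin{align*}
\mathcal{R}(Q_\theta)-\mathcal{R}(Q)
= \bigl[\mathcal{R}(Q_\theta)-\hat{\mathcal{R}}_S(Q_\theta)\bigr]
+ \bigl[\hat{\mathcal{R}}_S(Q_\theta)-\hat{\mathcal{R}}_S(Q)\bigr]
+ \bigl[\hat{\mathcal{R}}_S(Q)-\mathcal{R}(Q)\bigr].
\end{align*}
I will bound each bracket separately and then specialize to $Q = Q_{\mathcal{Q}}^{\star}$.

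First, I want $Q_\theta \in \mathcal{Q}$, so that the uniform bound \eqref{eq:rademacher} covers it. This holds because the learned predictor is drawn from the same clipped class $[\tau,1-\tau]$ used in the analysis. The first bracket is then at most $\sup_{Q'\in\mathcal{Q}}|\mathcal{R}(Q')-\hat{\mathcal{R}}_S(Q')| \le \varepsilon_{\mathrm{gen}}(m,\delta)$ by Lemma~\ref{lm:rademacher}. The third bracket is bounded by the same supremum, since $Q$ is also in $\mathcal{Q}$, so it is at most $\varepsilon_{\mathrm{gen}}(m,\delta)$ as well. Because Lemma~\ref{lm:rademacher} is a two-sided uniform bound, both brackets are controlled simultaneously on one event, with no further union bound and no loss in $\delta$.

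The second bracket is handled by the approximate-ERM assumption \eqref{eq:empirical_erm}. It gives $\hat{\mathcal{R}}_S(Q_\theta) \le \inf_{Q'}\hat{\mathcal{R}}_S(Q') + \epsilon_{\mathrm{opt}} \le \hat{\mathcal{R}}_S(Q) + \epsilon_{\mathrm{opt}}$, so this bracket is at most $\epsilon_{\mathrm{opt}}$. Summing the three bounds yields $\mathcal{R}(Q_\theta) - \mathcal{R}(Q) \le 2\varepsilon_{\mathrm{gen}}(m,\delta) + \epsilon_{\mathrm{opt}}$ for every $Q\in\mathcal{Q}$.

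The only step that needs care is the final specialization, since the minimizer $Q_{\mathcal{Q}}^{\star}$ in \eqref{eq:in_class_optimal} might not be attained. If it exists, I plug in $Q = Q_{\mathcal{Q}}^{\star}$ directly. Otherwise, I take a minimizing sequence $Q_k$ with $\mathcal{R}(Q_k) \downarrow \inf_{\mathcal{Q}}\mathcal{R}$. Since the bound holds uniformly in $Q$, passing to the limit gives \eqref{eq:excess_risk}, with $\mathcal{R}(Q_{\mathcal{Q}}^{\star})$ read as the infimum. Everything else is bookkeeping.
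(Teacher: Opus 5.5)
Your proposal is correct and is essentially the paper's argument: on the two-sided uniform-deviation event of Lemma~\ref{lm:rademacher}, you pass from $\mathcal{R}(Q_\theta)$ to $\hat{\mathcal{R}}_S(Q_\theta)$, use approximate ERM to reach the competitor's empirical risk, and return to its population risk, for a total of $2\varepsilon_{\mathrm{gen}}(m,\delta)+\epsilon_{\mathrm{opt}}$. Your explicit notes that $Q_\theta\in\mathcal{Q}$ is required, and that a minimizing sequence covers a non-attained $Q_{\mathcal{Q}}^{\star}$, are small refinements that the paper leaves implicit.
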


\begin{proof}
    On the event of Lemma~\ref{lm:rademacher}, for all $Q \in \mathcal{Q}$:
    \begin{align}
    \mathcal{R}(Q) &\le \hat{\mathcal{R}}_S(Q) + \varepsilon_{\mathrm{gen}}, \label{eq:gen_upper}\\
    \hat{\mathcal{R}}_S(Q) &\le \mathcal{R}(Q) + \varepsilon_{\mathrm{gen}}. \label{eq:gen_lower}
    \end{align}
    
    Now we bound the excess risk:
    \begin{align}
    \mathcal{R}(Q_\theta) - \mathcal{R}(Q_{\mathcal{Q}}^\star)
    &\le \hat{\mathcal{R}}_S(Q_\theta) + \varepsilon_{\mathrm{gen}} - \mathcal{R}(Q_{\mathcal{Q}}^\star) 
    && \text{(by \eqref{eq:gen_upper} applied to $Q_\theta$)} \\
    &\le \inf_{Q \in \mathcal{Q}} \hat{\mathcal{R}}_S(Q) + \epsilon_{\mathrm{opt}} + \varepsilon_{\mathrm{gen}} - \mathcal{R}(Q_{\mathcal{Q}}^\star) 
    && \text{(by approximate ERM \eqref{eq:empirical_erm})} \\
    &\le \hat{\mathcal{R}}_S(Q_{\mathcal{Q}}^\star) + \epsilon_{\mathrm{opt}} + \varepsilon_{\mathrm{gen}} - \mathcal{R}(Q_{\mathcal{Q}}^\star) 
    && \text{(since $Q_{\mathcal{Q}}^\star \in \mathcal{Q}$)} \\
    &\le \mathcal{R}(Q_{\mathcal{Q}}^\star) + \varepsilon_{\mathrm{gen}} + \epsilon_{\mathrm{opt}} + \varepsilon_{\mathrm{gen}} - \mathcal{R}(Q_{\mathcal{Q}}^\star) 
    && \text{(by \eqref{eq:gen_lower} applied to $Q_{\mathcal{Q}}^\star$)} \\
    &= 2\varepsilon_{\mathrm{gen}} + \epsilon_{\mathrm{opt}}.
    \end{align}
\end{proof}

In our context, the loss function $\ell(p, y)$ is a strictly proper scoring rule for Bernoulli distributions. Thus, we can use Pinsker's inequality to obtain the following direct link from risk to $L_2$ error. Let $Q^{\dagger}(x):=\mathbb{E}[Y\mid X=x]$ denote the true conditional expectation under $\mathcal{D}$.
In our setting, because training targets are generated from terminal rollouts under $\pi_u$ and unbiased Bellman backups (Lemma~\ref{lm:bellman_unbiased}), we have $Q^{\dagger}(x)=Q^{\pi_u}(x)$.

\begin{lemma}
\label{eq:L2_error_about_excess_risk}
Let $Q^{\dagger}(x):=\mathbb{E}[Y \mid X=x]$ denote the true conditional expectation under $\mathcal{D}$. Then for any predictor $Q$,

$$
\mathbb{E}_{X \sim \mathcal{D}}\left[\left(Q(X)-Q^{\dagger}(X)\right)^2\right] \leq \frac{1}{2}\left(\mathcal{R}(Q)-\mathcal{R}\left(Q^{\dagger}\right)\right) .
$$
    
\end{lemma}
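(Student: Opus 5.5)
The proof will work pointwise in $x$ and then integrate over $X\sim\mathcal{D}$. The key observation is that the binary cross-entropy $\ell(p,y)=-y\log p-(1-y)\log(1-p)$ is affine in the label $y$. So once we condition on $X=x$, the conditional risk depends on the label distribution only through $q:=Q^{\dagger}(x)=\mathbb{E}[Y\mid X=x]$:
\begin{align}
\mathbb{E}\left[\ell(p,Y)\mid X=x\right]=-q\log p-(1-q)\log(1-p).
\end{align}
This holds even though the targets $Y=\hat{Q}(s,a)$ are soft labels in $[0,1]$ rather than Bernoulli draws. By the tower property, $\mathcal{R}(Q)=\mathbb{E}_X\left[-Q^{\dagger}(X)\log Q(X)-(1-Q^{\dagger}(X))\log(1-Q(X))\right]$, and similarly for $\mathcal{R}(Q^{\dagger})$.

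Next I would compute the pointwise excess risk with $p=Q(x)$ and $q=Q^{\dagger}(x)$:
\begin{align}
\mathbb{E}[\ell(p,Y)\mid x]-\mathbb{E}[\ell(q,Y)\mid x]
= q\log\frac{q}{p}+(1-q)\log\frac{1-q}{1-p}
=\mathrm{KL}\big(\mathrm{Ber}(q)\,\|\,\mathrm{Ber}(p)\big),
\end{align}
using the convention $0\log 0=0$. This identity is exactly the strict propriety of the log score. Pinsker's inequality gives $\mathrm{KL}(\mathrm{Ber}(q)\|\mathrm{Ber}(p))\ge 2\,\mathrm{TV}^2$. For Bernoulli distributions the total variation distance is $|p-q|$, so $(p-q)^2\le \tfrac12\,\mathrm{KL}(\mathrm{Ber}(q)\|\mathrm{Ber}(p))$ pointwise. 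Taking $\mathbb{E}_{X\sim\mathcal{D}}$ of both sides and using linearity then yields $\mathbb{E}[(Q(X)-Q^{\dagger}(X))^2]\le\tfrac12(\mathcal{R}(Q)-\mathcal{R}(Q^{\dagger}))$.

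The main obstacle is technical rather than conceptual: making sure the subtraction $\mathcal{R}(Q)-\mathcal{R}(Q^{\dagger})$ is well defined and that integrating the pointwise KL is legitimate. Two issues arise. First, $Q^{\dagger}$ need not lie in $[\tau,1-\tau]$ or in $\mathcal{Q}$; it may take the values $0$ or $1$, for instance on dead-end nodes where $Q^{\pi_u}=0$. In that case $\mathcal{R}(Q^{\dagger})$ is still finite, since it is the expected binary entropy, which lies in $[0,\log 2]$. Second, if $\mathcal{R}(Q)=\infty$ the inequality is trivial. So I would write the excess risk directly as $\mathbb{E}_X[\mathrm{KL}(\cdot\|\cdot)]$, which is a nonnegative integrand, and thereby avoid any $\infty-\infty$ ambiguity. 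The lemma needs no assumption that $Q^{\dagger}\in\mathcal{Q}$; it applies to any measurable predictor $Q$ with values in $(0,1)$.
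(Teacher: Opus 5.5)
Your proposal is correct and follows the same route as the paper: write the pointwise excess log-loss as $\mathrm{KL}(\mathrm{Ber}(Q^{\dagger}(x))\,\|\,\mathrm{Ber}(Q(x)))$, apply Pinsker with $\mathrm{TV}=|p-q|$ for Bernoulli distributions, and integrate over $X\sim\mathcal{D}$. Your explicit use of the fact that $\ell(p,\cdot)$ is affine in $y$, which gives $\mathbb{E}[\ell(p,Y)\mid X=x]=\ell(p,Q^{\dagger}(x))$, justifies the conditioning step more cleanly than the paper's remark about a cancelling conditional-entropy term, and your checks that $\mathcal{R}(Q^{\dagger})$ is finite (and that the case $\mathcal{R}(Q)=\infty$ is trivial) are a useful addition.
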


\begin{proof}
The proof proceeds in three steps: (i) express excess log-loss as KL divergence, (ii) apply Pinsker's inequality, (iii) specialize to Bernoulli distributions.

\paragraph{Step 1: Log-loss decomposition.}
For the log-loss $\ell(p,y) = -y \log p - (1-y) \log(1-p)$ with $p, y \in (0,1)$, we have the identity:
\begin{align}
\ell(p,y) = \ell(y,y) + \mathrm{KL}(\mathrm{Bern}(y) \| \mathrm{Bern}(p)),
\end{align}
where $\mathrm{KL}(\mathrm{Bern}(y) \| \mathrm{Bern}(p)) = y \log \frac{y}{p} + (1-y) \log \frac{1-y}{1-p}$.

\textit{Verification:}
\begin{align}
&\ell(y,y) + \mathrm{KL}(\mathrm{Bern}(y) \| \mathrm{Bern}(p)) \\
&= \left[ -y \log y - (1-y) \log(1-y) \right] + \left[ y \log \frac{y}{p} + (1-y) \log \frac{1-y}{1-p} \right] \\
&= -y \log y - (1-y) \log(1-y) + y \log y - y \log p + (1-y) \log(1-y) - (1-y) \log(1-p) \\
&= -y \log p - (1-y) \log(1-p) \\
&= \ell(p,y).
\end{align}

\paragraph{Step 2: Pinsker's inequality.}
The classical Pinsker's inequality states that for any two probability distributions $P$ and $Q$:
\begin{align}
\mathrm{TV}(P, Q)^2 \le \frac{1}{2} \mathrm{KL}(P \| Q),
\end{align}
where $\mathrm{TV}(P,Q) = \sup_A |P(A) - Q(A)|$ is the total variation distance.

\paragraph{Step 3: Specialization to Bernoulli.}
For Bernoulli distributions $\mathrm{Bern}(y)$ and $\mathrm{Bern}(p)$:
\begin{align}
\mathrm{TV}(\mathrm{Bern}(y), \mathrm{Bern}(p)) = |y - p|.
\end{align}
(This follows by taking $A = \{1\}$ in the TV definition.)

Applying Pinsker's inequality:
\begin{align}
(y - p)^2 \le \frac{1}{2} \mathrm{KL}(\mathrm{Bern}(y) \| \mathrm{Bern}(p)) = \frac{1}{2} \left( \ell(p,y) - \ell(y,y) \right).
\end{align}

\paragraph{Step 4: Conditional expectation and integration.}
Fix $x$ and let $y = Q^\dagger(x) = \mathbb{E}[Y \mid X = x]$.
Taking the prediction $p = Q(x)$:
\begin{align}
(Q(x) - Q^\dagger(x))^2 \le \frac{1}{2} \left( \ell(Q(x), Q^\dagger(x)) - \ell(Q^\dagger(x), Q^\dagger(x)) \right).
\end{align}

Taking expectation over $X \sim \mathcal{D}$:
\begin{align}
\mathbb{E}_X\left[ (Q(X) - Q^\dagger(X))^2 \right] 
&\le \frac{1}{2} \mathbb{E}_X\left[ \ell(Q(X), Q^\dagger(X)) - \ell(Q^\dagger(X), Q^\dagger(X)) \right] \\
&= \frac{1}{2} \left( \mathbb{E}_X[\ell(Q(X), Q^\dagger(X))] - \mathbb{E}_X[\ell(Q^\dagger(X), Q^\dagger(X))] \right).
\end{align}

Since $Q^\dagger(x) = \mathbb{E}[Y \mid X = x]$ minimizes the conditional expected log-loss, we have
\begin{align}
\mathbb{E}_{Y|X}[\ell(Q(X), Y)] \ge \mathbb{E}_{Y|X}[\ell(Q^\dagger(X), Y)] = \ell(Q^\dagger(X), Q^\dagger(X)) + H(Y|X),
\end{align}
where $H(Y|X)$ is the conditional entropy (which cancels in the difference).

Thus:
\begin{align}
\mathbb{E}_X\left[ (Q(X) - Q^\dagger(X))^2 \right] \le \frac{1}{2} \left( \mathcal{R}(Q) - \mathcal{R}(Q^\dagger) \right).
\end{align}
\end{proof}

We now prove theorem~\ref{th:bias_consistency}.

\begin{theorem}[Restatement of Theorem~\ref{th:bias_consistency}]
With probability at least $1-\delta$, the learned predictor $Q_\theta$ satisfies
\begin{align}
\|Q_\theta - Q^{\pi_u}\|_{2,\rho_{\mathcal{C}}} \le \epsilon_{\mathrm{bias}}(m,\delta),
\end{align}
where
\begin{align}
\epsilon_{\mathrm{bias}}(m,\delta) := \sqrt{\frac{1}{2}\left( \epsilon_{\mathrm{approx}} + 2\varepsilon_{\mathrm{gen}}(m,\delta/2) + \epsilon_{\mathrm{opt}} \right)},
\end{align}
with $\epsilon_{\mathrm{approx}} := \mathcal{R}(Q_{\mathcal{Q}}^\star) - \mathcal{R}(Q^\dagger) \ge 0$.
\end{theorem}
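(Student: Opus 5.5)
The plan is to chain the three lemmas already established: the Pinsker-type inequality (Lemma~\ref{eq:L2_error_about_excess_risk}), the excess risk bound (Lemma~\ref{lm:excess_risk}), and the Rademacher bound (Lemma~\ref{lm:rademacher}). We interpret $\rho_{\mathcal{C}}$ as the marginal of $X$ under $\mathcal{D}$ restricted to (or supported on) the critical set $\mathcal{C}$. Then $\|Q_\theta - Q^{\pi_u}\|_{2,\rho_{\mathcal{C}}}^2 = \mathbb{E}_{X\sim\rho_{\mathcal{C}}}[(Q_\theta(X)-Q^{\pi_u}(X))^2]$.

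\textbf{Step 1: replace $Q^{\pi_u}$ by the regression function.} By construction the targets $y_i=\hat Q(s_i,a_i)$ arise from terminal rollouts under $\pi_u$ followed by uniform Bellman backups. Lemma~\ref{lm:bellman_unbiased}, applied inductively from the leaves upward using \eqref{eq:unbiased_rollout} at the leaves, gives $\mathbb{E}[Y\mid X=x]=Q^{\pi_u}(x)$. Hence $Q^\dagger=Q^{\pi_u}$ on the support of $\rho_{\mathcal{C}}$.

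\textbf{Step 2: convert $L_2$ error to excess risk.} Lemma~\ref{eq:L2_error_about_excess_risk} with $Q=Q_\theta$ gives
\begin{align*}
\|Q_\theta-Q^{\pi_u}\|_{2,\rho_{\mathcal{C}}}^2 \le \tfrac12\big(\mathcal{R}(Q_\theta)-\mathcal{R}(Q^\dagger)\big).
\end{align*}
We then telescope through the in-class optimum:
\begin{align*}
\mathcal{R}(Q_\theta)-\mathcal{R}(Q^\dagger) = \big[\mathcal{R}(Q_\theta)-\mathcal{R}(Q^\star_{\mathcal{Q}})\big] + \epsilon_{\mathrm{approx}}.
\end{align*}

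\textbf{Step 3: probabilistic control.} Invoke Lemma~\ref{lm:rademacher} at confidence $\delta/2$. On that event, Lemma~\ref{lm:excess_risk} bounds the bracket by $2\varepsilon_{\mathrm{gen}}(m,\delta/2)+\epsilon_{\mathrm{opt}}$. The remaining $\delta/2$ budget is reserved for the target-estimation event of Lemma~\ref{lm:target_bounding}, if one wants the targets themselves controlled on $\mathcal{C}$; a union bound then gives total failure probability at most $\delta$. Substituting and taking square roots yields $\epsilon_{\mathrm{bias}}(m,\delta)$ exactly.

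\textbf{Main obstacle.} The delicate point is the mismatch between measures. Lemma~\ref{eq:L2_error_about_excess_risk} is an inequality under the training distribution $\mathcal{D}$, whereas the claim is stated under $\rho_{\mathcal{C}}$. I would handle this by defining $\mathcal{D}$'s $X$-marginal to be $\rho_{\mathcal{C}}$, i.e.\ training pairs sampled from the critical set, which is what the stated bound (with no concentrability factor) implicitly requires.

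A second subtlety is that the Pinsker step needs $Q^\dagger$ to be the Bayes-optimal predictor under log-loss. Here the clipping of predictors to $[\tau,1-\tau]$ must be assumed not to exclude $Q^{\pi_u}$, or else $\epsilon_{\mathrm{approx}}$ absorbs the discrepancy. Its nonnegativity holds because $Q^\dagger$ minimizes the conditional log-loss.
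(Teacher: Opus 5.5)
Your proposal is correct and follows the paper's proof essentially step for step. You identify $Q^\dagger=Q^{\pi_u}$ via the unbiased Bellman backups, split $\mathcal{R}(Q_\theta)-\mathcal{R}(Q^\dagger)$ through $Q^\star_{\mathcal{Q}}$, bound the estimation-plus-optimization part by Lemma~\ref{lm:excess_risk} at confidence $\delta/2$, and then pass to $L_2$ error with the Pinsker lemma and take square roots. Two small differences: your explicit assumption that the $X$-marginal of $\mathcal{D}$ is $\rho_{\mathcal{C}}$ states openly the measure switch the paper makes silently in its last step, and your extra union bound with Lemma~\ref{lm:target_bounding} is unnecessary, since $Q^\dagger=Q^{\pi_u}$ needs only unbiasedness of the targets and the paper simply uses $1-\delta/2\ge 1-\delta$.
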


\begin{proof}
The proof proceeds in five steps.
\paragraph{Step 1: Decompose total excess risk.}
We decompose the excess risk of $Q_\theta$ relative to $Q^\dagger$:
\begin{align}
\mathcal{R}(Q_\theta) - \mathcal{R}(Q^\dagger) 
&= \underbrace{\left( \mathcal{R}(Q_\theta) - \mathcal{R}(Q_{\mathcal{Q}}^\star) \right)}_{\text{estimation + optimization}} 
+ \underbrace{\left( \mathcal{R}(Q_{\mathcal{Q}}^\star) - \mathcal{R}(Q^\dagger) \right)}_{\text{approximation}}.
\end{align}

\paragraph{Step 2: Bound estimation + optimization error.}
By Lemma~\ref{lm:excess_risk} with confidence $\delta/2$:
\begin{align}
\mathcal{R}(Q_\theta) - \mathcal{R}(Q_{\mathcal{Q}}^\star) \le 2\varepsilon_{\mathrm{gen}}(m, \delta/2) + \epsilon_{\mathrm{opt}}.
\end{align}

\paragraph{Step 3: Identify approximation error.}
The approximation error is:
\begin{align}
\epsilon_{\mathrm{approx}} := \mathcal{R}(Q_{\mathcal{Q}}^\star) - \mathcal{R}(Q^\dagger) \ge 0,
\end{align}
which is non-negative since $Q^\dagger$ is the Bayes-optimal predictor (minimizer of population risk over all measurable functions).

\paragraph{Step 4: Combine and apply Lemma~\ref{eq:L2_error_about_excess_risk}.}
With probability at least $1 - \delta/2$:
\begin{align}
\mathcal{R}(Q_\theta) - \mathcal{R}(Q^\dagger) \le \epsilon_{\mathrm{approx}} + 2\varepsilon_{\mathrm{gen}}(m, \delta/2) + \epsilon_{\mathrm{opt}}.
\end{align}

By Lemma~\ref{eq:L2_error_about_excess_risk}:
\begin{align}
\mathbb{E}_{X \sim \mathcal{D}}\left[ (Q_\theta(X) - Q^\dagger(X))^2 \right] \le \frac{1}{2} \left( \mathcal{R}(Q_\theta) - \mathcal{R}(Q^\dagger) \right).
\end{align}

\paragraph{Step 5: Identify $Q^\dagger = Q^{\pi_u}$ and conclude.}
In our setting, training targets are generated from unbiased terminal rollouts under $\pi_u$ with Bellman backups (Lemma~\ref{lm:bellman_unbiased}).
Thus $Q^\dagger(s,a) = \mathbb{E}[Y \mid X = (s,a)] = Q^{\pi_u}(s,a)$.

Taking square roots:
\begin{align}
\|Q_\theta - Q^{\pi_u}\|_{2,\rho_{\mathcal{C}}} 
&= \sqrt{\mathbb{E}_{(S,A) \sim \rho_{\mathcal{C}}}\left[ (Q_\theta(S,A) - Q^{\pi_u}(S,A))^2 \right]} \\
&\le \sqrt{\frac{1}{2}\left( \epsilon_{\mathrm{approx}} + 2\varepsilon_{\mathrm{gen}}(m, \delta/2) + \epsilon_{\mathrm{opt}} \right)} \\
&= \epsilon_{\mathrm{bias}}(m, \delta).
\end{align}
\end{proof}

\subsection{Proof of Theorem~\ref{th:sample_complexity}}
\label{proof:sample_complexity}

\begin{theorem}[Restatement of Theorem~\ref{th:sample_complexity}]
Suppose the bias consistency condition holds: $\epsilon_{\mathrm{bias}}(m,\delta) < \Delta_{\min}^*/2$.
Let $\Delta_{\mathrm{eff}} := \Delta_{\min}^* - 2\epsilon_{\mathrm{bias}}(m,\delta) > 0$.
Then for the greedy path $\hat{a}_t = \arg\max_{a \in \mathcal{A}(s^*_t)} \bar{Q}_n(s_t^*, a)$ to coincide with the optimal path with probability at least $1-\delta$, it suffices that
\begin{align}
n \ge \frac{32(K-1)c^2 \ln(Hn/\delta)}{\Delta_{\mathrm{eff}}^2} + 2(K-1)\left( 2N_0 + \frac{\pi^2}{3} \right).
\end{align}
\end{theorem}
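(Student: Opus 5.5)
The plan is to reduce path recovery to $H$ independent per-node best-arm identification problems along $\tau^*$. We combine them with a union bound over depths, and at each depth use a UCT-style regret argument to bound how often suboptimal arms are pulled.

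\textbf{Step 1 (per-node event).} Fix a depth $t$ and the node $s_t^*$. The greedy choice $\hat a_t=\arg\max_a \bar Q_n(s_t^*,a)$ equals $a_t^*$ if, for every $a\neq a_t^*$,
\begin{align*}
|\bar Q_n(s_t^*,a)-Q^{\pi_u}(s_t^*,a)|<\Delta^*_{\min}/2 .
\end{align*}
We decompose this error as in \eqref{eq:bias_variance} into two parts. The bias part $|\mathbb{E}\bar Q_n - Q^{\pi_u}|\le \epsilon_{\mathrm{bias}}$ is supplied by Theorem~\ref{th:bias_consistency}; here we treat the $L_2$ bound as a pointwise bound on $\mathcal{C}$, or take $\rho_{\mathcal{C}}$ uniform on a finite $\mathcal{C}$ and accept the loss. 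It therefore suffices for the deviation part to be below $\Delta_{\mathrm{eff}}/2$ for every arm at every critical node. Equivalently, the arm means seen by MCTS, namely $\mathbb{E}\bar Q_n$, have gap at least $\Delta_{\mathrm{eff}}$ at $s_t^*$.

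\textbf{Step 2 (UCT pull counts).} At $s_t^*$ we treat the children as a $K$-armed bandit with rewards in $[0,1]$ and gap at least $\Delta_{\mathrm{eff}}$. We then follow the Kocsis--Szepesv\'ari drift-condition analysis. After a burn-in of $N_0$ visits, the arm means are within the required drift. From that point, each suboptimal arm is pulled in expectation at most
\begin{align*}
\frac{16c^2\ln(Hn/\delta)}{\Delta_{\mathrm{eff}}^2}+2N_0+\frac{\pi^2}{3}
\end{align*}
times. The $\pi^2/3$ term comes from $\sum_s 2s^{-2}$ in the Auer et al.\ UCB1 proof, and the $\ln(Hn/\delta)$ factor comes from choosing the confidence level so that a union bound over $H$ depths and $n$ rounds costs $\delta$. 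Summing over the $K-1$ suboptimal arms and doubling to obtain a high-probability version bounds the number of suboptimal visits by half of the right-hand side. When $n$ meets the stated threshold, the optimal arm therefore receives at least $n/2$ visits. Hoeffding's inequality with $n/2$ samples (Lemma~\ref{lm:target_bounding}-style) then puts its empirical mean within $\Delta_{\mathrm{eff}}/2$. For each suboptimal arm, either its empirical mean is controlled by the UCB confidence radius, or it was pulled so rarely that its upper confidence bound stayed below that of $a_t^*$.

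\textbf{Step 3 (union over depths and total count).} We take a union bound of the per-depth failure probabilities, each $\delta/H$, over the $H$ critical nodes. Multiplying $n$ by $H$ and absorbing $K-1\le K$ gives $N_{\mathrm{total}}$.

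The main obstacle is Step 2, specifically the conversion from bounded suboptimal pull counts to correct ordering of the empirical means $\bar Q_n$, together with the non-stationarity of the backed-up values. Because Q-model initializations and backups change the child estimates over time, the rewards are not i.i.d. I would handle this with the drift assumption, defining $N_0$ as the point after which the drift is below $\Delta_{\mathrm{eff}}/4$, and accept the constant $32$ as absorbing this slack.
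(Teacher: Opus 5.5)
Your skeleton is the paper's. At each $s_t^*$ the paper applies the Kocsis--Szepesv\'ari drift bound with drift $\epsilon_{\mathrm{bias}}$ to get $\mathbb{E}[T_a(n)]\le 16c^2\ln n/\Delta_{\mathrm{eff}}^2+2N_0+\pi^2/3$ per suboptimal arm and sums over the $K-1$ arms. It then requires this sum to be below $n/2$, which alone produces the $32$ and the $2(K-1)(2N_0+\pi^2/3)$, and union-bounds over the $H$ depths at $\delta/(2H)$ each, keeping $\delta/2$ for Theorem~\ref{th:bias_consistency}. The gap is in the steps you add to turn this into a high-probability statement about $\arg\max_a \bar Q_n$, and the paper does not supply these either. ``Doubling to obtain a high-probability version'' fails. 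From a bound on $\mathbb{E}[T_a(n)]$, Markov's inequality only gives $\Pr(T_a(n)\ge 2\,\mathbb{E}[T_a(n)])\le 1/2$, not $\delta/H$. The budget does not allow it anyway: the $K-1$ per-arm expectations already sum to exactly half the stated threshold, so after doubling, an $n$ at the threshold guarantees $a_t^*$ nothing. Nor can $\ln(Hn/\delta)$ come from ``choosing the confidence level''. The bonus $c\sqrt{\ln N(s)/N(s,a)}$ is fixed by the algorithm and contains no $\delta$. The failure probability of its confidence intervals is therefore set by $c$ (for i.i.d.\ rewards, Hoeffding gives $t^{-2c^2}$ per interval). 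It cannot be tuned to $\delta/H$ without changing the bonus or imposing a condition on $c$. The paper's own Step~2 carries $\ln n$, and its per-node $\delta/(2H)$ is asserted rather than derived.

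The conversion you flag as the main obstacle is also left open; the paper's Step~5 merely asserts that $\mathbb{E}[T_{a_t^*}(n)]>n/2$ suffices, which does not follow. Your Hoeffding step needs independent samples around a fixed mean. The values backed up into $(s_t^*,a_t^*)$ are instead generated by the adaptive UCT policy in its subtree. Also, a rarely pulled suboptimal arm has a wide confidence radius, so its $\bar Q_n$ can exceed that of $a_t^*$, and your ``either/or'' does not exclude this. UCB does offer a deterministic handle: if $a_t^*$ is the current UCT choice at $s_t^*$ and $N(s_t^*,a)\le N(s_t^*,a_t^*)$, then $\bar Q(s_t^*,a)\le \bar Q(s_t^*,a_t^*)$, since $a$ carries the larger bonus. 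You would still need $a_t^*$ to be the UCT choice at the end, or switch to a visit-count output rule, which changes the statement.

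Finally, your Step~1 uses Theorem~\ref{th:bias_consistency} for something it does not state. The theorem bounds $Q_\theta$ in $L_2(\rho_{\mathcal{C}})$, and converting this to a sup bound under uniform $\rho_{\mathcal{C}}$ costs a factor $\sqrt{|\mathcal{C}|}$, which changes $\Delta_{\mathrm{eff}}$. More basically, $\bar Q_n(s,a)$ averages $Q_\theta$ over UCT-selected descendants. So even if $Q_\theta=Q^{\pi_u}$, its mean is pulled toward the favoured descendants rather than the uniform average that defines $Q^{\pi_u}(s,a)$. Your Step~3 also spends all of $\delta$ on the $H$ depths, leaving nothing for the event of Theorem~\ref{th:bias_consistency}.
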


\begin{proof}
The proof adapts the UCT analysis of Kocsis \& Szepesvári (2006) to our guided-MCTS setting.

\paragraph{Step 1: Setup and notation.}
At each decision node $s_t^*$, UCT treats action selection as a multi-armed bandit with $K = |\mathcal{A}(s^*_t)|$ arms.
Let $T_a(n)$ denote the number of times action $a$ is selected after $n$ total simulations.
The payoff distributions are non-stationary because subtree estimates evolve with exploration.

\paragraph{Step 2: Apply Kocsis-Szepesvári Theorem 1.}
By Theorem 1 of~\cite{kocsis2006bandit}, for UCB1 applied to a non-stationary bandit with bias (drift) bounded by $\epsilon$, each suboptimal arm $a$ with gap $\Delta_a > 2\epsilon$ satisfies:
\begin{align}
\mathbb{E}[T_a(n)] \le \frac{16c^2 \ln n}{(\Delta_a - 2\epsilon)^2} + 2N_0 + \frac{\pi^2}{3}.
\end{align}

In our setting, the bias is $\epsilon = \epsilon_{\mathrm{bias}}(m,\delta)$ (from Theorem~\ref{th:bias_consistency}), and for each suboptimal action $a \neq a_t^*$:
\begin{align}
\Delta_a := Q^{\pi_u}(s_t^*, a_t^*) - Q^{\pi_u}(s_t^*, a) \ge \Delta_{\min}^*.
\end{align}

Thus:
\begin{align}
\mathbb{E}[T_a(n)] \le \frac{16c^2 \ln n}{(\Delta_a - 2\epsilon_{\mathrm{bias}})^2} + 2N_0 + \frac{\pi^2}{3}
\le \frac{16c^2 \ln n}{\Delta_{\mathrm{eff}}^2} + 2N_0 + \frac{\pi^2}{3}.
\end{align}

\paragraph{Step 3: Sum over suboptimal actions.}
Summing over all $K-1$ suboptimal actions:
\begin{align}
\sum_{a \neq a_t^*} \mathbb{E}[T_a(n)] \le \frac{16(K-1)c^2 \ln n}{\Delta_{\mathrm{eff}}^2} + (K-1)\left( 2N_0 + \frac{\pi^2}{3} \right).
\end{align}

\paragraph{Step 4: Lower bound optimal action visits.}
Since $\sum_{a} T_a(n) = n$:
\begin{align}
\mathbb{E}[T_{a_t^*}(n)] = n - \sum_{a \neq a_t^*} \mathbb{E}[T_a(n)] 
\ge n - \frac{16(K-1)c^2 \ln n}{\Delta_{\mathrm{eff}}^2} - (K-1)\left( 2N_0 + \frac{\pi^2}{3} \right).
\end{align}

\paragraph{Step 5: Condition for optimal action dominance.}
For the optimal action to be selected (i.e., have the highest empirical mean), it suffices that $\mathbb{E}[T_{a_t^*}(n)] > n/2$, which requires:
\begin{align}
n > \frac{32(K-1)c^2 \ln n}{\Delta_{\mathrm{eff}}^2} + 2(K-1)\left( 2N_0 + \frac{\pi^2}{3} \right).
\end{align}

\paragraph{Step 6: Union bound over path.}
The optimal path has $H$ decision points.
Taking a union bound:
\begin{align}
\Pr(\text{all } \hat{a}_t = a_t^*) \ge 1 - H \cdot \frac{\delta}{2H} = 1 - \frac{\delta}{2}.
\end{align}

Combined with the $1-\delta/2$ probability from the bias consistency guarantee (Theorem~\ref{th:bias_consistency}), the total success probability is at least $1 - \delta$.
\end{proof}

\section{Additional Experimental Results}
\label{app:additional_experiment}
This appendix provides detailed analyses of the additional experimental results presented in the supplementary figures. We systematically examine training dynamics, path extraction strategies, training methodologies, hyperparameter sensitivity, and their implications for the proposed Executable Agentic Memory (EAM) framework.

\subsection{Effect of Action Groups}

\begin{figure}[ht]
  \vskip 0.2in
  \centering
  \begin{subfigure}{0.35\columnwidth}
    \includegraphics[width=\linewidth]{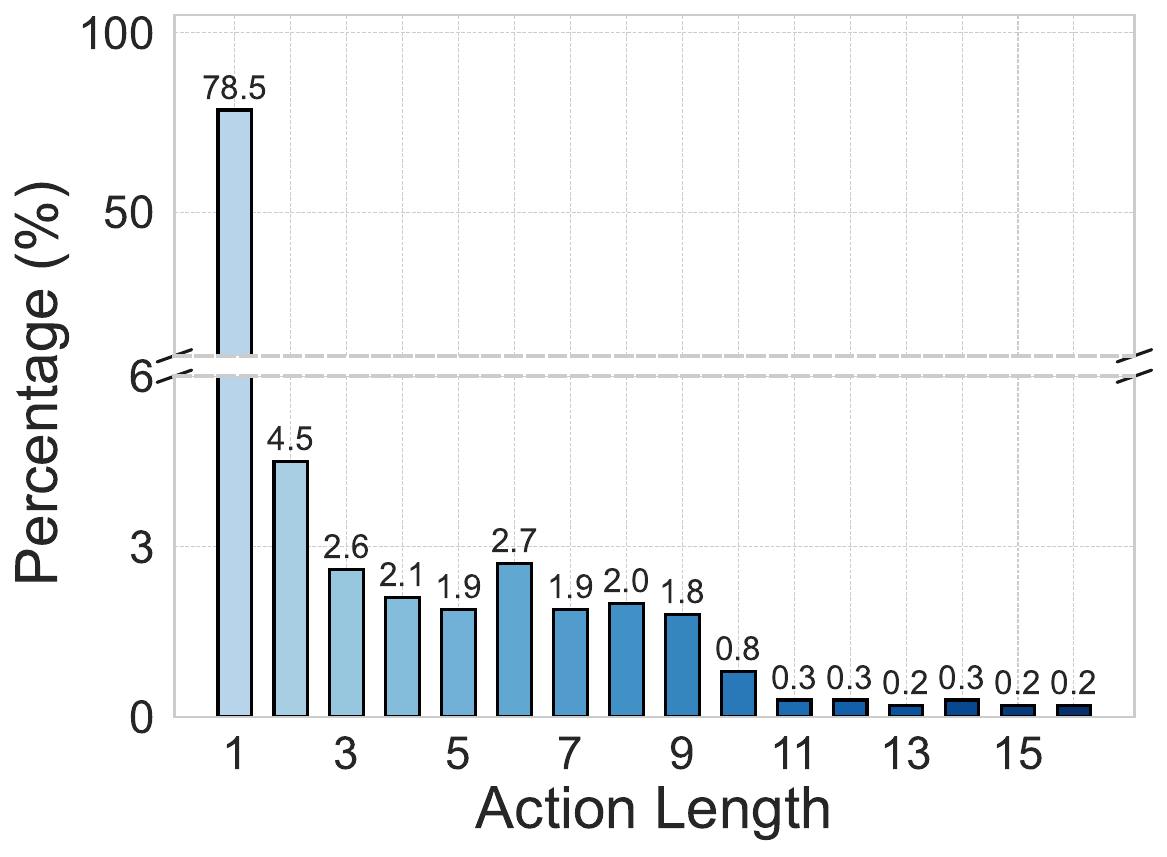}
    \caption{AndroidWorld}
    \label{fig:action_length_AW}
  \end{subfigure}
  \begin{subfigure}{0.35\columnwidth}
    \includegraphics[width=\linewidth]{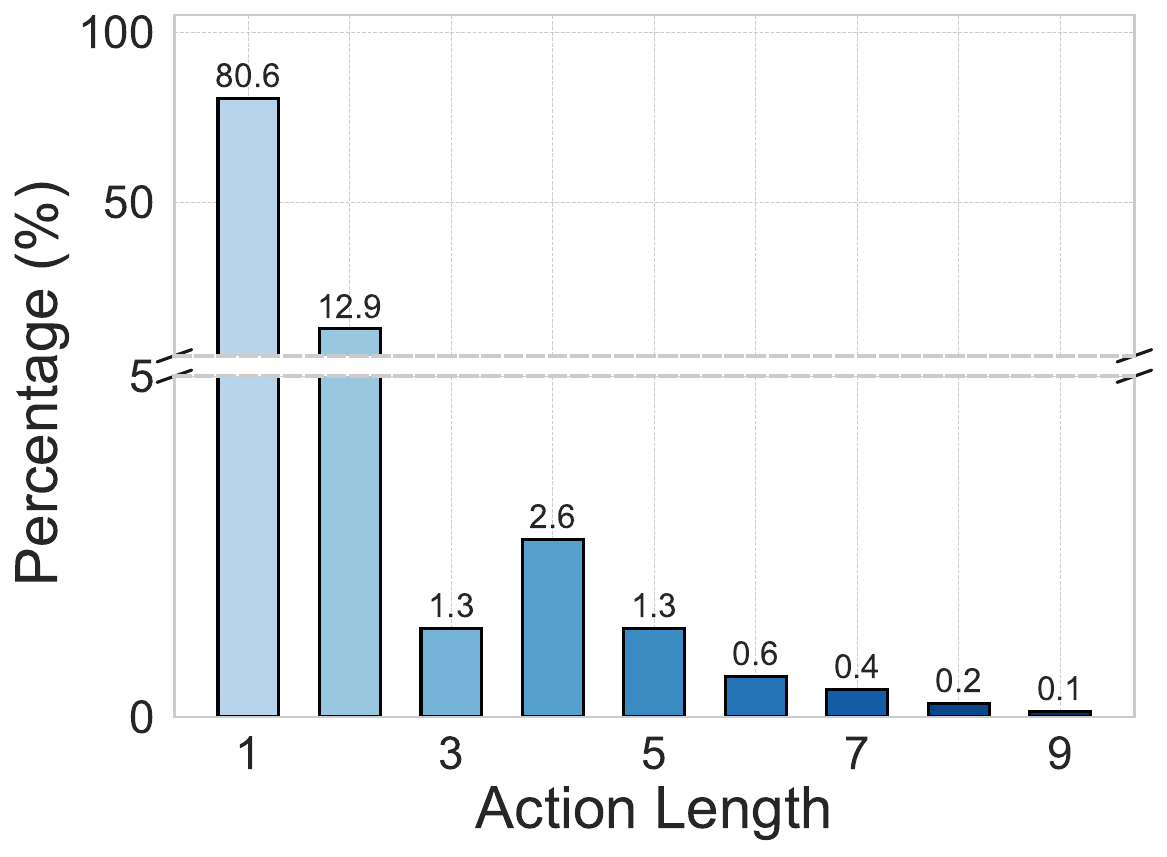}
    \caption{DroidTask}
    \label{fig:action_length_DT}
  \end{subfigure}
  \caption{Distribution action sequence lengths in KGs}
  \label{fig:action_length}
\end{figure}

\begin{figure}[ht]
  \vskip 0.2in
  \centering
  \begin{subfigure}{0.35\columnwidth}
    \includegraphics[width=\linewidth]{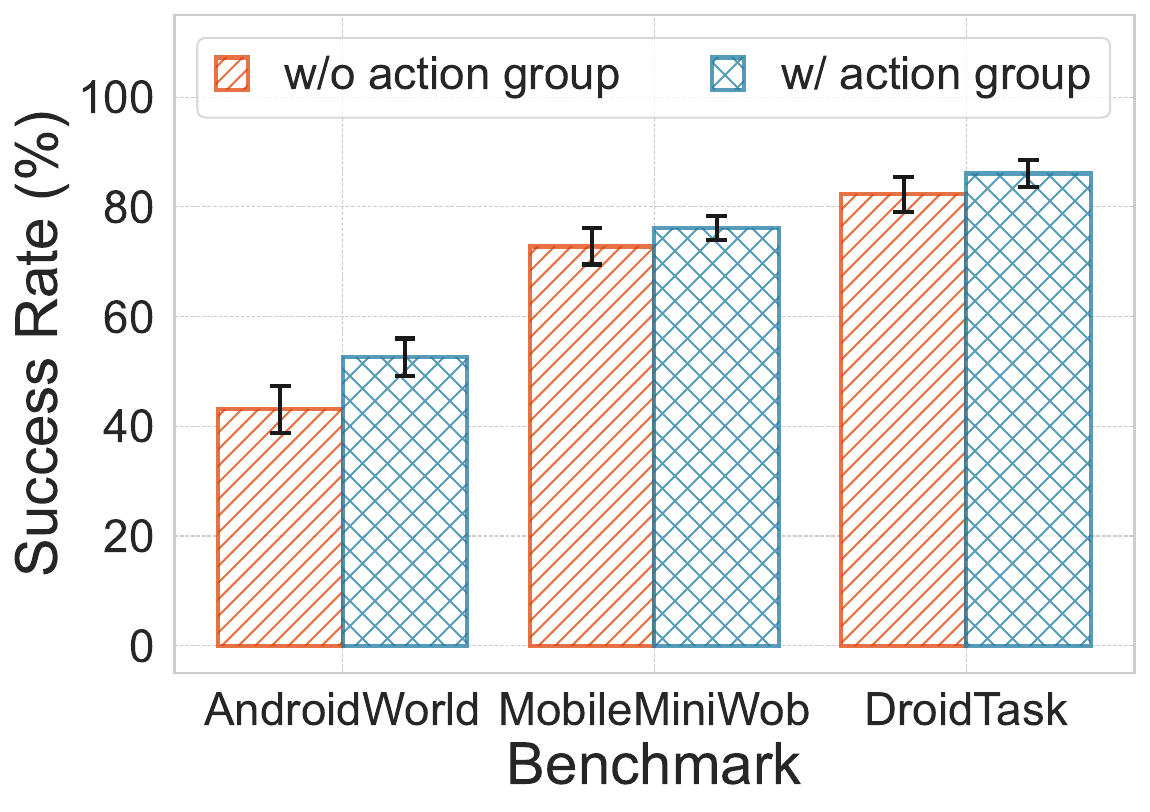}
    \caption{Success Rate}
    \label{fig:ablation_action_score}
  \end{subfigure}
  \begin{subfigure}{0.35\columnwidth}
    \includegraphics[width=\linewidth]{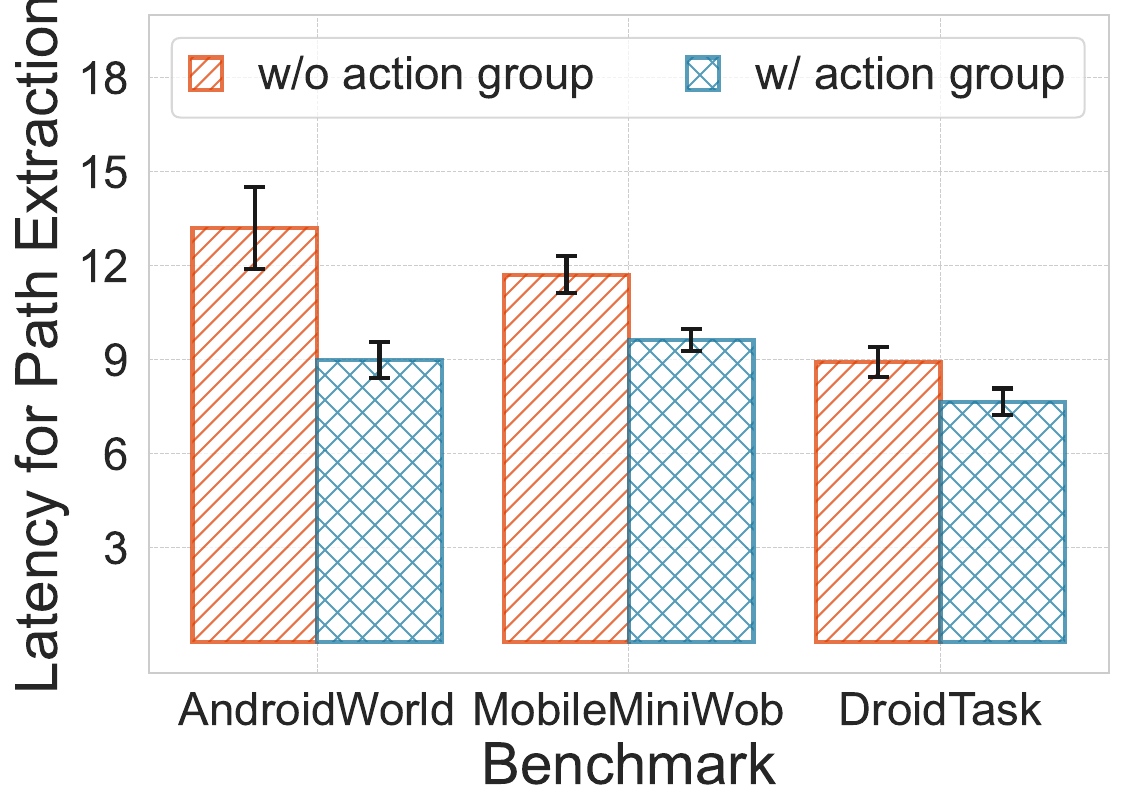}
    \caption{Latency}
    \label{fig:ablation_action_latency}
  \end{subfigure}
  \caption{Effect of action group on performance and efficiency.}
  \label{fig:ablation_action}
\end{figure}

We construct ablation experiments on action groups. Fig.~\ref{fig:action_length} presents the length distribution of actions in the KGs. Beyond atomic actions, our BPE-based merging mechanism constructs action groups encapsulating multi-step sequences, with lengths following a long-tail distribution. AndroidWorld exhibits more long-sequence groups due to higher task complexity. Fig.~\ref{fig:ablation_action} shows that action groups yield significant improvements in both success rate and latency, with benefits more pronounced on AndroidWorld. By consolidating frequent sequences into reusable groups, the MCTS search space is significantly reduced, enabling more efficient planning.

\subsection{Training Loss Dynamics Across Self-Learning Rounds}
\label{app:training_loss}

\begin{figure*}
    \centering
    \begin{minipage}{0.33\linewidth}
    \centering
    \includegraphics[width=1.0\linewidth]{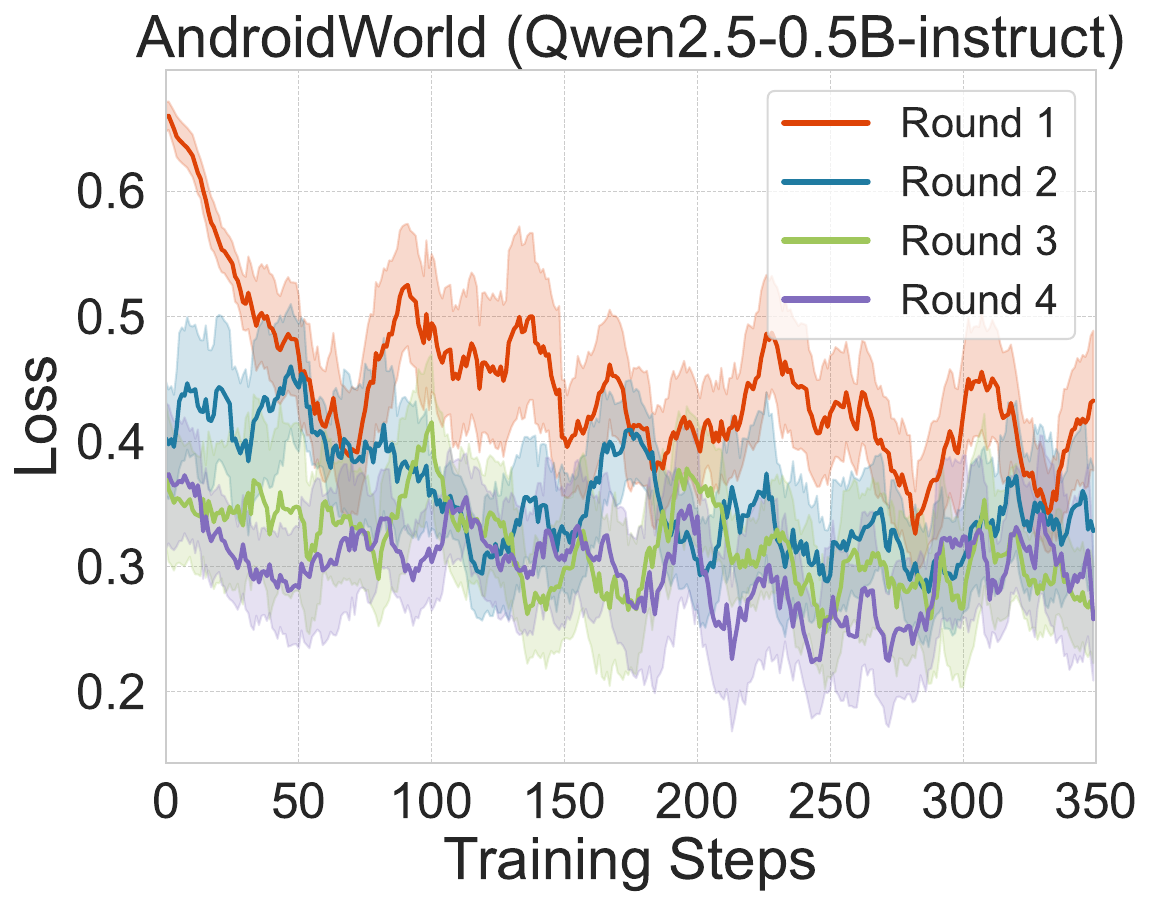}
    \end{minipage}
    \begin{minipage}{0.33\linewidth}
    \centering
    \includegraphics[width=1.0\linewidth]{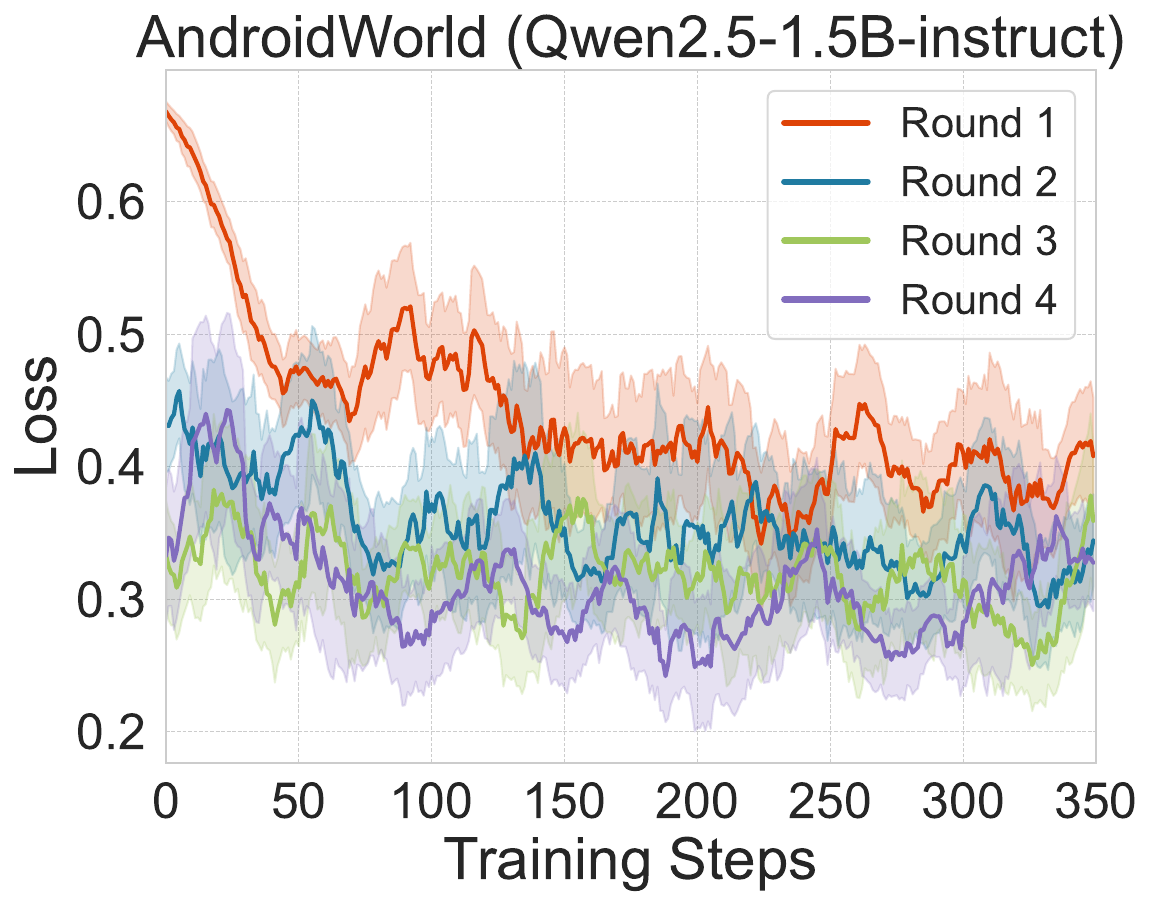}
    \end{minipage}
    \begin{minipage}{0.33\linewidth}
    \centering
    \includegraphics[width=1.0\linewidth]{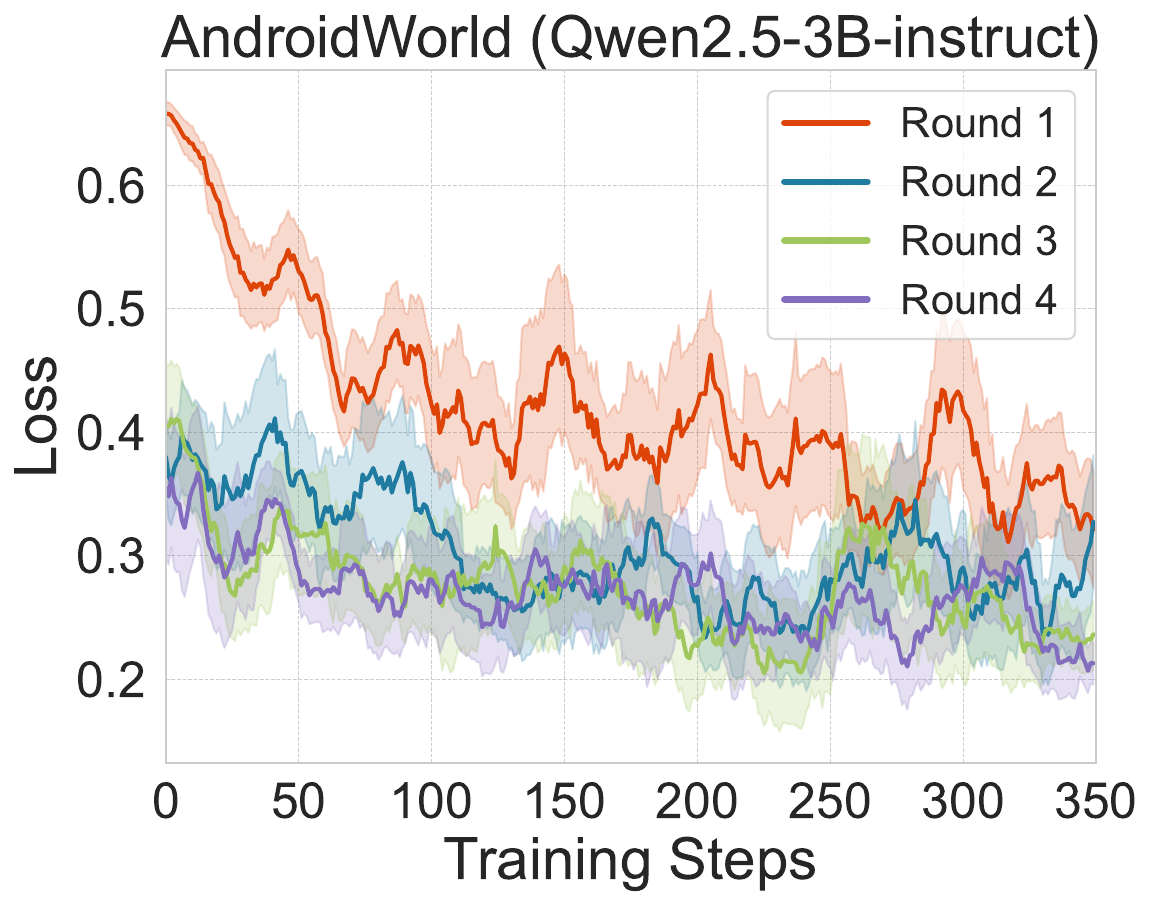}
    \end{minipage}
    \begin{minipage}{0.33\linewidth}
    \centering
    \includegraphics[width=1.0\linewidth]{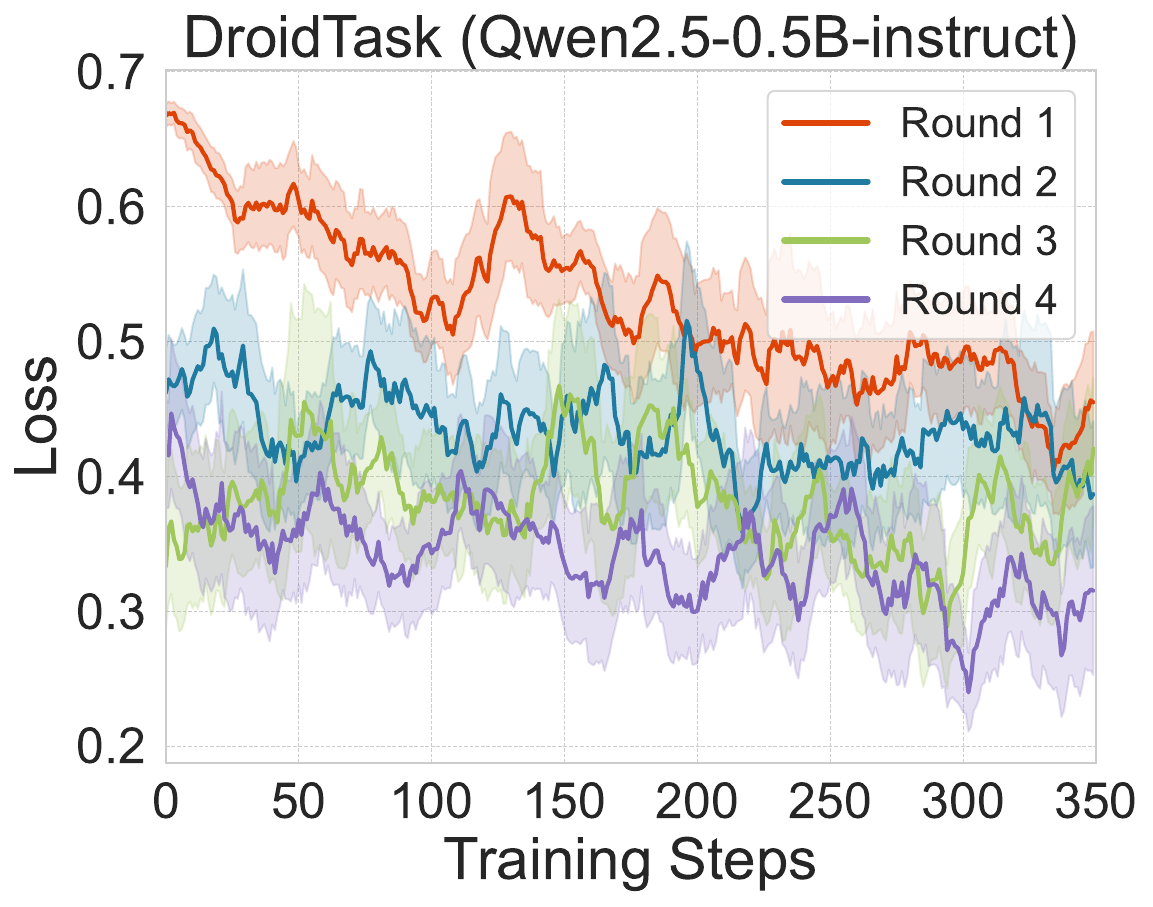}
    \end{minipage}
    \begin{minipage}{0.33\linewidth}
    \centering
    \includegraphics[width=1.0\linewidth]{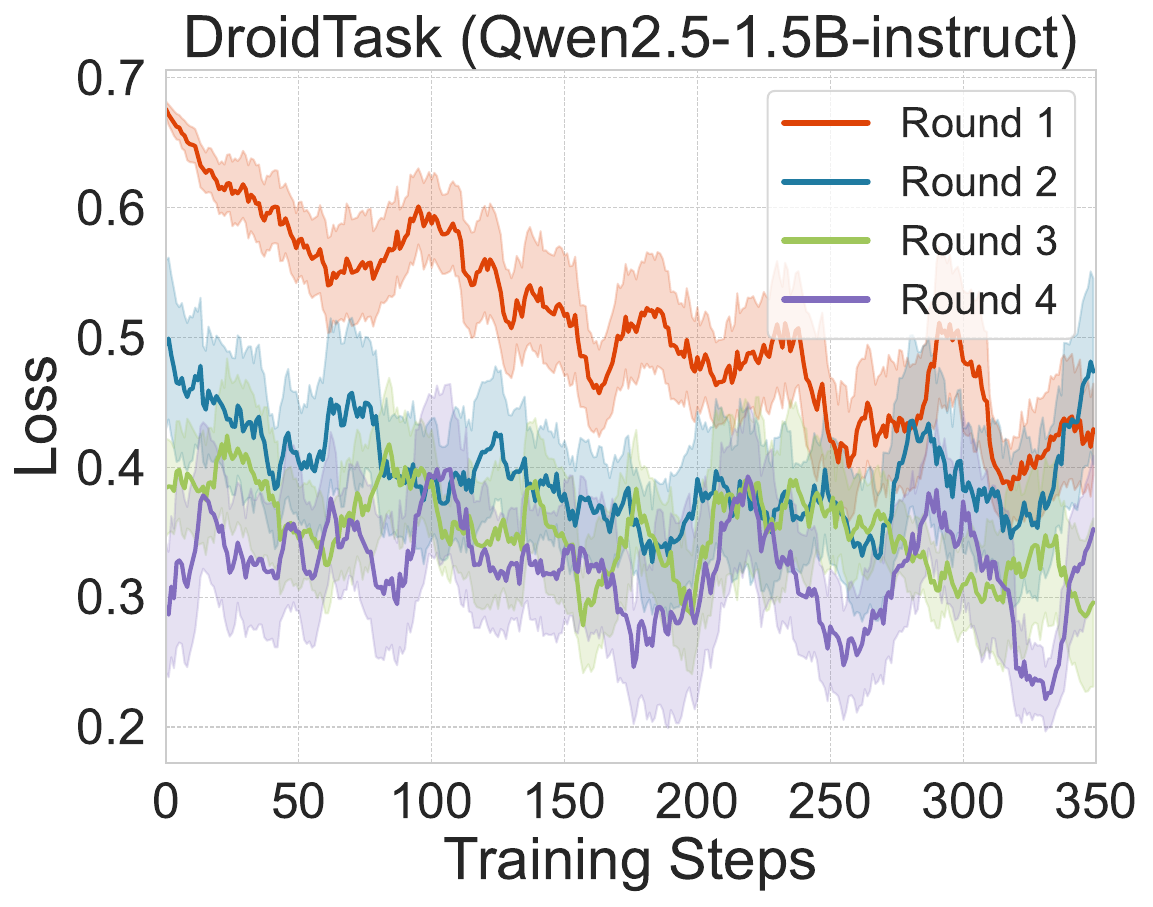}
    \end{minipage}
    \begin{minipage}{0.33\linewidth}
    \centering
    \includegraphics[width=1.0\linewidth]{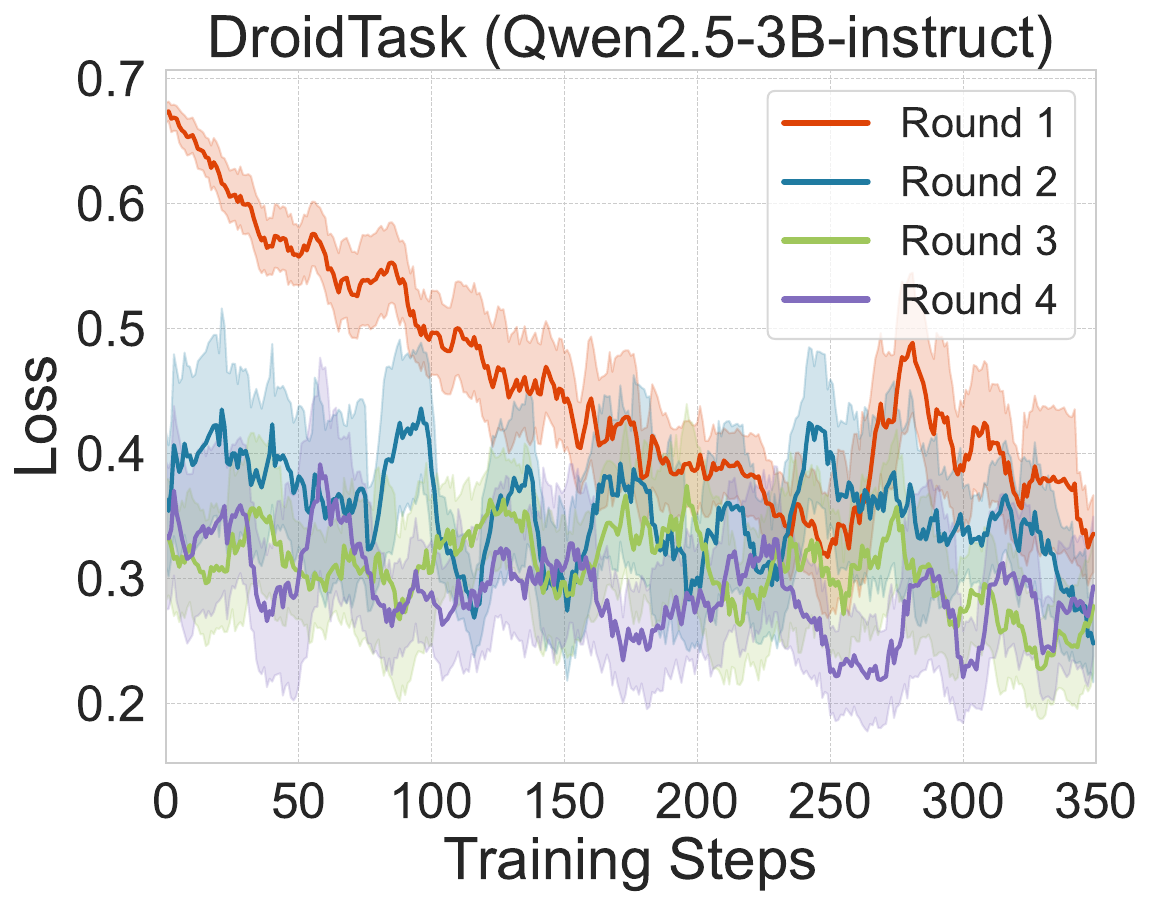}
    \end{minipage}
    \begin{minipage}{0.33\linewidth}
    \centering
    \includegraphics[width=1.0\linewidth]{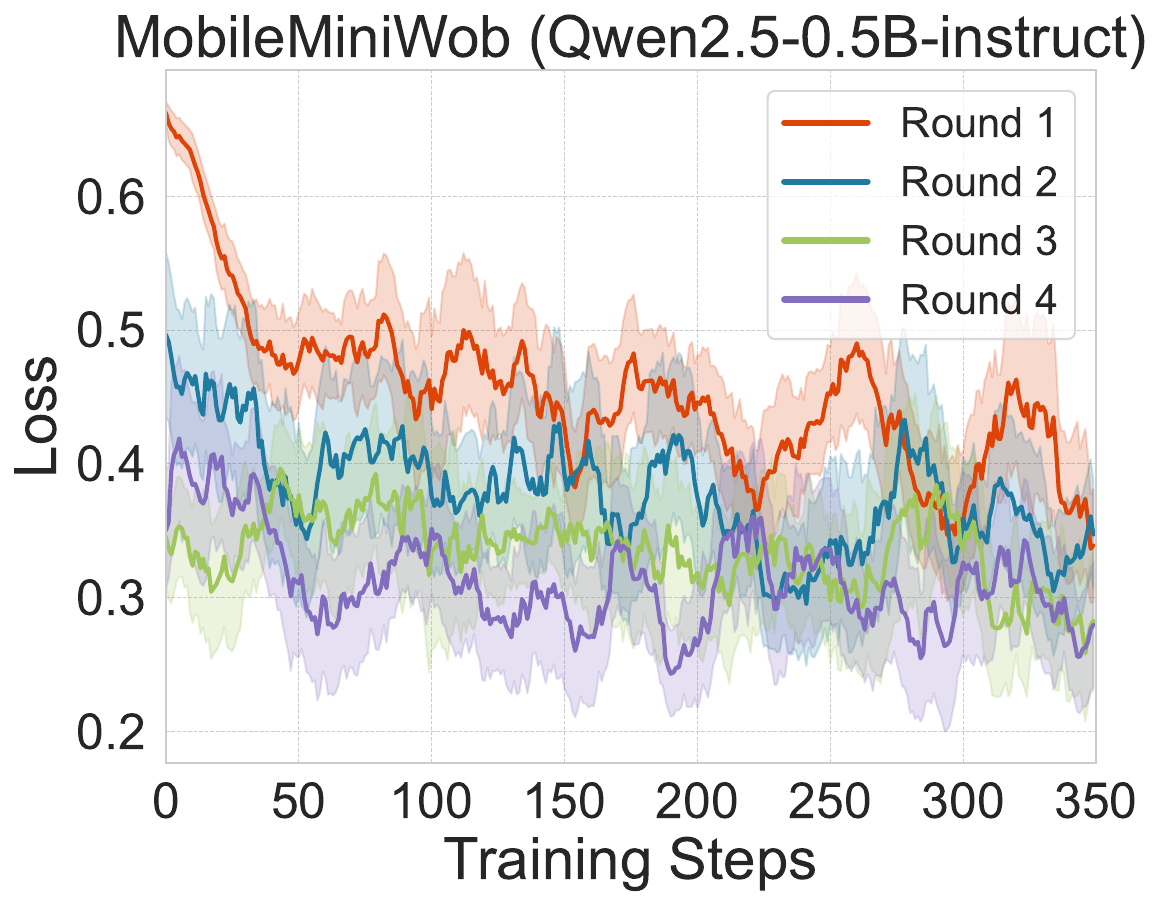}
    \end{minipage}
    \begin{minipage}{0.33\linewidth}
    \centering
    \includegraphics[width=1.0\linewidth]{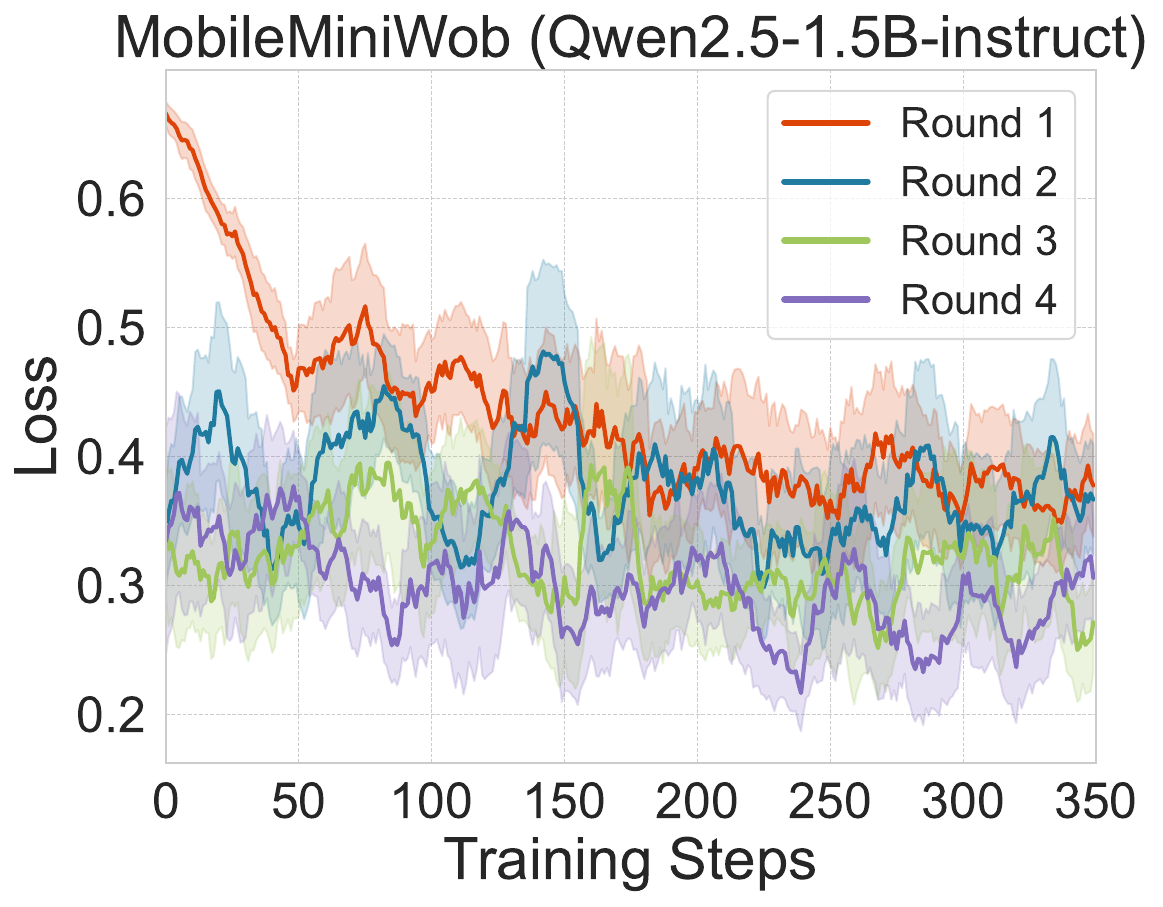}
    \end{minipage}
    \begin{minipage}{0.33\linewidth}
    \centering
    \includegraphics[width=1.0\linewidth]{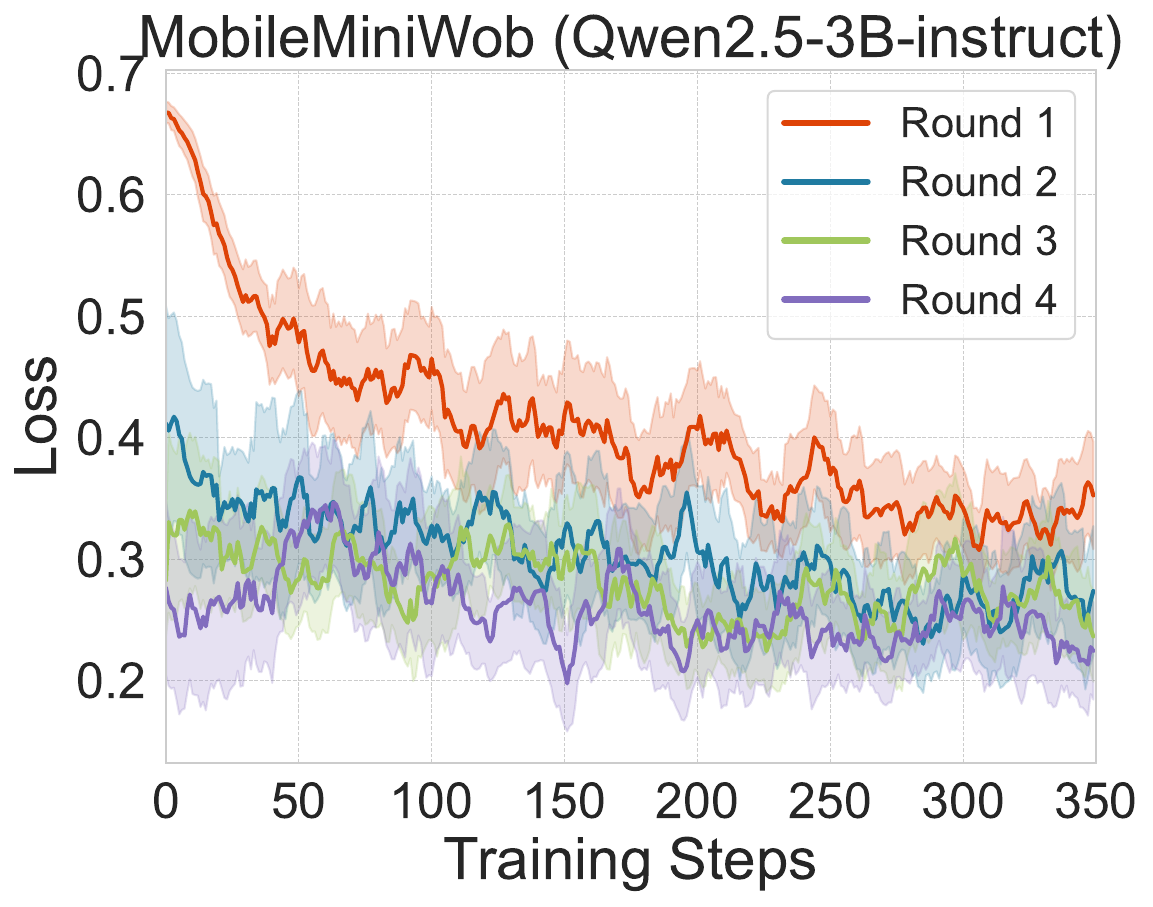}
    \end{minipage}
    \caption{Training loss curves across self-learning rounds. 
We show the training loss for each round across different model sizes 
and benchmarks.}
    \label{fig:convergence}
    \vspace{-0.2in}
\end{figure*}

Fig.~\ref{fig:convergence} presents the training loss curves across four self-learning rounds for three model sizes (Qwen2.5-0.5B-Instruct, Qwen2.5-1.5B-Instruct, and Qwen2.5-3B-Instruct) evaluated on three benchmarks (AndroidWorld, DroidTask, and MobileMiniWob). 

\textbf{Progressive Loss Reduction.} A salient pattern emerges across all configurations: the initial loss at the beginning of each subsequent round starts consistently lower than the previous round. This progressive reduction in starting loss demonstrates that the Q-model successfully retains and builds upon knowledge acquired in previous iterations, validating the effectiveness of our iterative self-training pipeline. Formally, this observation suggests that the empirical risk $\hat{\mathcal{R}}_{\mathcal{S}}(Q_\theta)$ decreases across rounds, which according to Lemma~\ref{lm:excess_risk}, implies corresponding reductions in the true risk $\mathcal{R}(Q_\theta)$.

\textbf{Convergence Characteristics.} All training curves exhibit rapid initial descent followed by stabilization, typically converging within the first 100--150 training steps. The converged loss values decrease monotonically across rounds, indicating that the quality of self-generated training data improves as the Q-model becomes more accurate at value estimation. This phenomenon creates a virtuous cycle: better value estimates lead to more informative MCTS rollouts, which in turn produce higher-quality Bellman backup targets, further improving subsequent training rounds.

\textbf{Model Capacity Effects.} Larger models consistently achieve lower final loss values across all benchmarks. This performance gap reflects the increased representational capacity of larger models to capture complex relationships between GUI states, actions, and their associated Q-values. The relationship can be understood through the lens of approximation error $\epsilon_{\text{approx}}$ in Theorem~\ref{th:bias_consistency}: larger model classes $\mathcal{Q}$ reduce the gap $\mathcal{R}(Q^\star_{\mathcal{Q}}) - \mathcal{R}(Q^\dagger)$, yielding tighter bias bounds.

\subsection{Comparison of Path Extraction Methods}
\label{app:path_extraction}

\begin{figure}[ht]
  \vskip 0.2in
  \centering
  \begin{subfigure}{0.35\columnwidth}
    \includegraphics[width=\linewidth]{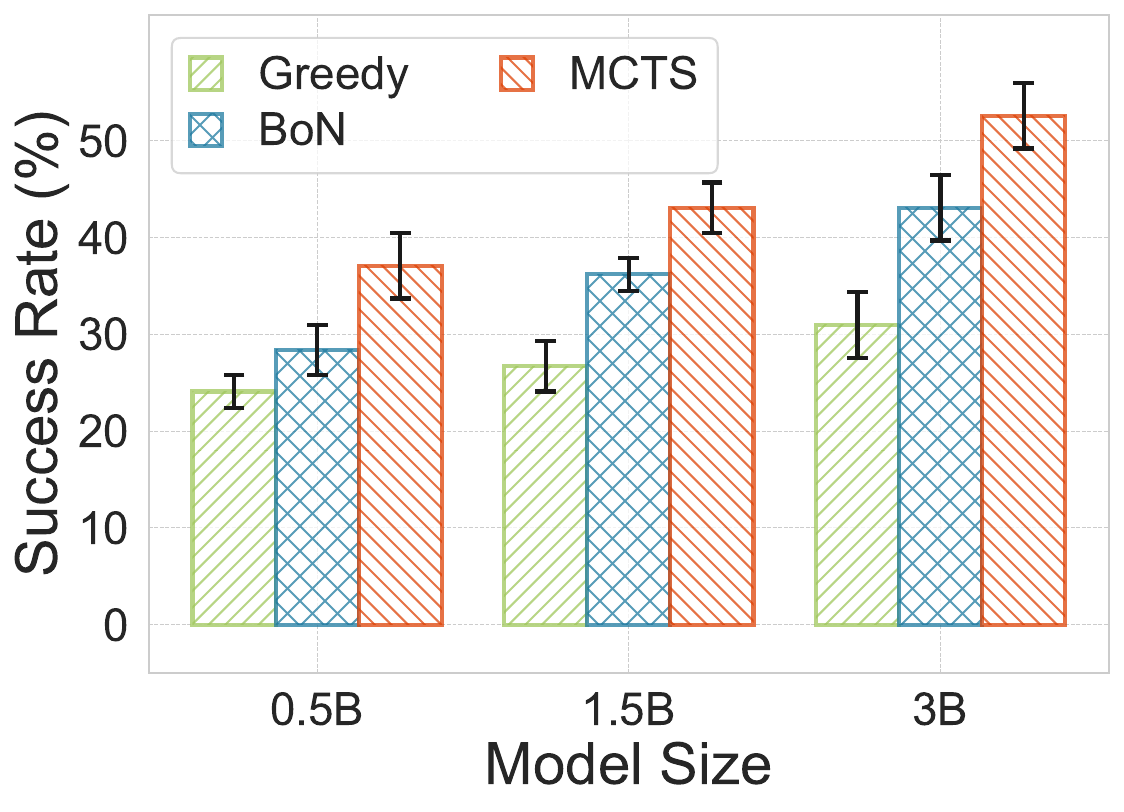}
    \caption{AndroidWorld}
    \label{fig:greedy_MCTS_AW}
  \end{subfigure}
  \begin{subfigure}{0.35\columnwidth}
    \includegraphics[width=\linewidth]{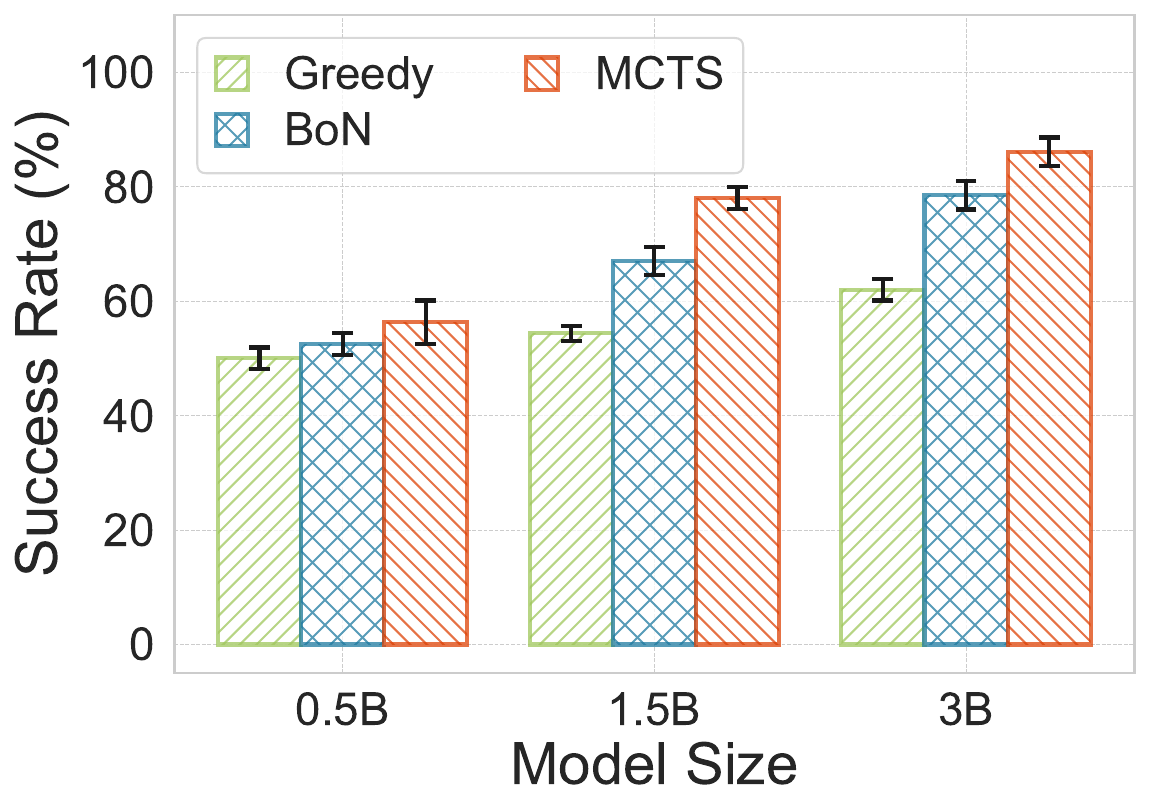}
    \caption{DroidTask}
    \label{fig:greedy_MCTS_DT}
  \end{subfigure}
  \caption{The impact of different path extraction methods on performance.}
  \label{fig:greedy_MCTS}
\end{figure}

Figure~\ref{fig:greedy_MCTS} compares three path extraction strategies across different model sizes (0.5B, 1.5B, and 3B parameters) on AndroidWorld and DroidTask:
\begin{itemize}
    \item \textbf{Greedy}: Selects actions with highest Q-values without exploration, i.e., $a_t = \arg\max_{a \in \mathcal{A}(s_t)} Q_\theta(s_t, a)$.
    \item \textbf{Best-of-N (BoN)}: Samples 10 candidate paths independently and selects the 5 with the highest cumulative Q-value (for fair comparison with MCTS).
    \item \textbf{MCTS}: Employs the full tree search procedure with UCT-based exploration-exploitation balancing.
\end{itemize}

\textbf{Consistent MCTS Superiority.} MCTS consistently outperforms both Greedy and BoN strategies across all model sizes and benchmarks. The consistent advantage of MCTS demonstrates the substantial value of structured exploration during path extraction.

\textbf{Theoretical Interpretation.} These empirical findings align precisely with our theoretical analysis. Theorem~\ref{th:sample_complexity} establishes that MCTS recovers the optimal path with high probability when sufficient simulations are performed, with complexity scaling as $O\left(\frac{HKc^2 \ln(Hn/\delta)}{(\Delta^*_{\min} - 2\epsilon_{\text{bias}})^2}\right)$. The Greedy strategy, by contrast, commits irrevocably to the highest-valued action without accounting for estimation uncertainty, making it vulnerable to errors in $Q_\theta$.

\subsection{Impact of Training Strategies on Q-Model Performance}
\label{app:training_strategies}

\begin{figure}[ht]
  \vskip 0.2in
  \centering
  \begin{subfigure}{0.35\columnwidth}
    \includegraphics[width=\linewidth]{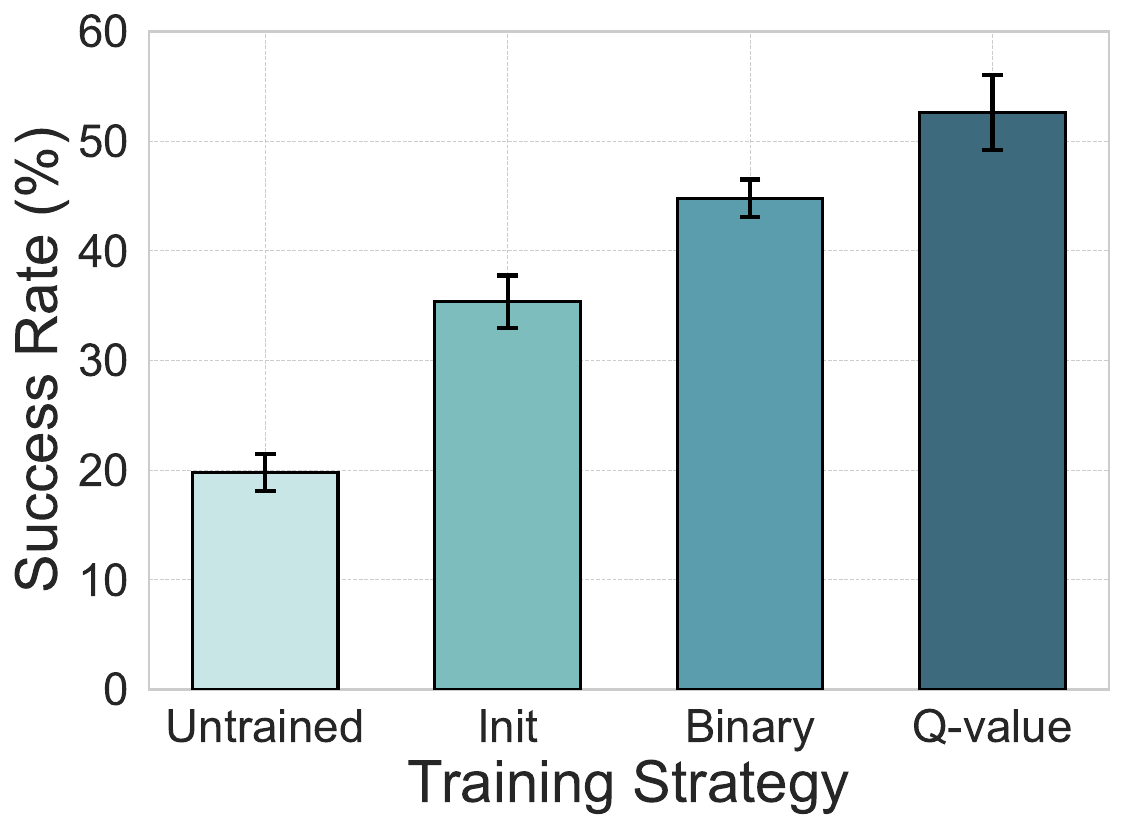}
    \caption{AndroidWorld}
    \label{fig:ablation_training_AW}
  \end{subfigure}
  \begin{subfigure}{0.35\columnwidth}
    \includegraphics[width=\linewidth]{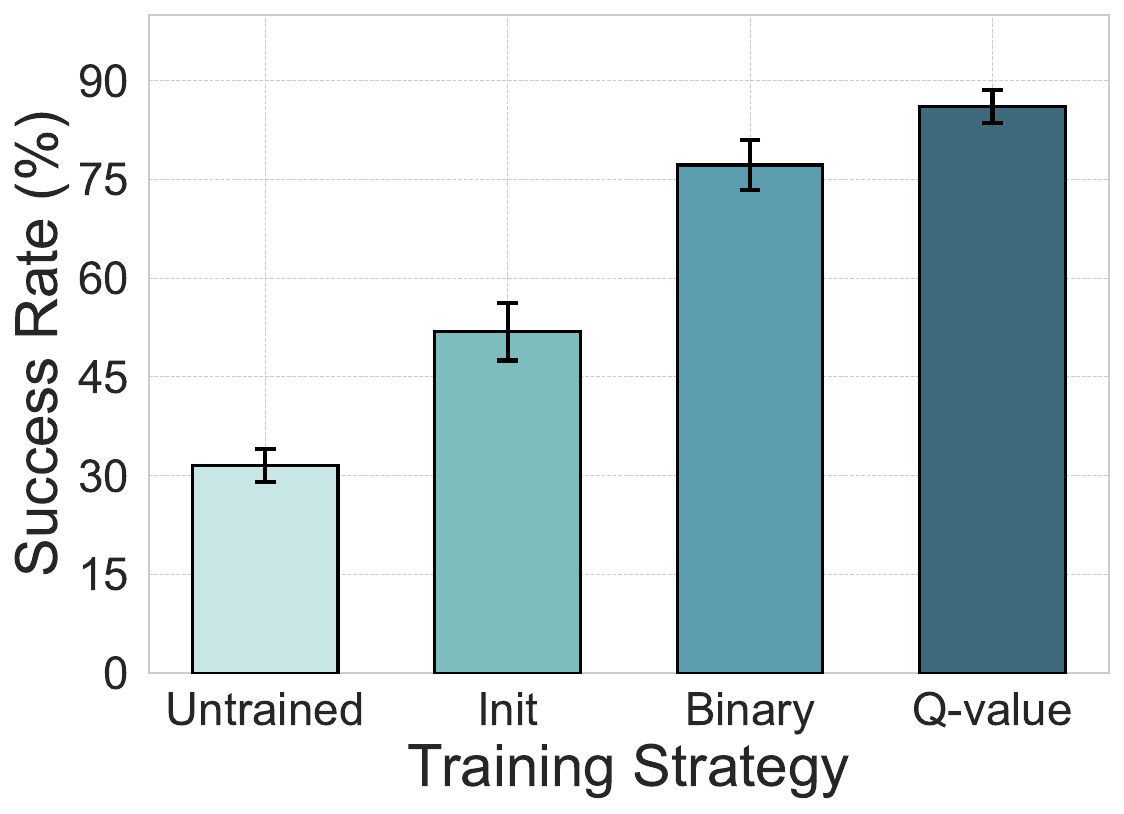}
    \caption{DroidTask}
    \label{fig:ablation_training_DT}
  \end{subfigure}
  \caption{Performance with Different Training Strategies.}
  \label{fig:ablation_training}
\end{figure}

\begin{enumerate}
    \item \textbf{Untrained}: The base language model without any task-specific fine-tuning.
    \item \textbf{Init}: Initialization via preference learning on expert trajectories using the pairwise ranking loss (Equation~\ref{loss:init}).
    \item \textbf{Binary}: Training with binary outcome labels indicating path success ($y=1$) or failure ($y=0$).
    \item \textbf{Q-value}: Our proposed approach using soft Q-value targets computed via Bellman backup.
\end{enumerate}
All experiments employ the 3B model with identical MCTS inference configurations ($N=50$ iterations, $c=10$).

\textbf{Necessity of Task-Specific Training.} The untrained model achieves the lowest success rate, establishing that raw language model capabilities are insufficient for effective Q-value estimation in GUI navigation. 

\textbf{Preference-Based Initialization.} The Init strategy achieves substantial improvements over the untrained baseline. This confirms that relative preference information from expert trajectories provides valuable supervision for warming up the Q-model. The Bradley-Terry formulation (Equation~\ref{loss:init}) effectively translates these preferences into initial value estimates that, while not perfectly calibrated, establish meaningful action rankings.

\textbf{Limitations of Binary Supervision.} Training with binary outcome labels yields moderate performance. While binary labels capture the ultimate success or failure of paths, they provide identical supervision signals to all actions along successful trajectories and all actions along failed trajectories. This coarse granularity fails to distinguish among the quality of intermediate decisions.

\textbf{Superiority of Q-Value Training.} Our proposed Q-value training achieves the highest performance: approximately $52.6\%$ on AndroidWorld and $86.1\%$ on DroidTask. The substantial gains over binary training demonstrate the value of continuous Q-value targets. By computing targets via Bellman backup:
\begin{equation}
    \hat{Q}(s, a) \leftarrow r(s, a) + \frac{1}{|\mathcal{A}(s')|} \sum_{a' \in \mathcal{A}(s')} \hat{Q}(s', a'),
\end{equation}
we obtain supervision signals that capture the nuanced probability of success from each state-action pair, enabling fine-grained credit assignment across the trajectory.

\subsection{Effect of MCTS Iteration Count}
\label{app:mcts_iterations}

\begin{figure}[ht]
  \vskip 0.2in
  \centering
  \begin{subfigure}{0.35\columnwidth}
    \includegraphics[width=\linewidth]{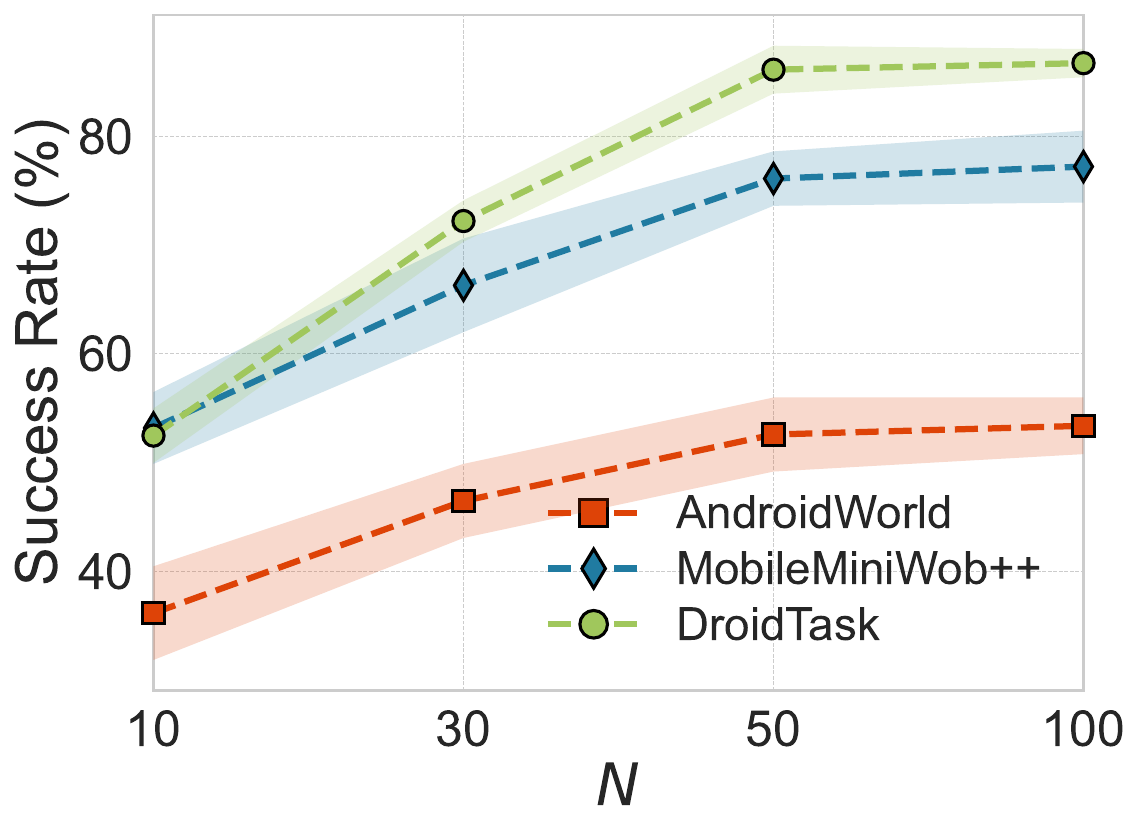}
    \caption{Success Rate}
    \label{fig:N_score}
  \end{subfigure}
  \begin{subfigure}{0.35\columnwidth}
    \includegraphics[width=\linewidth]{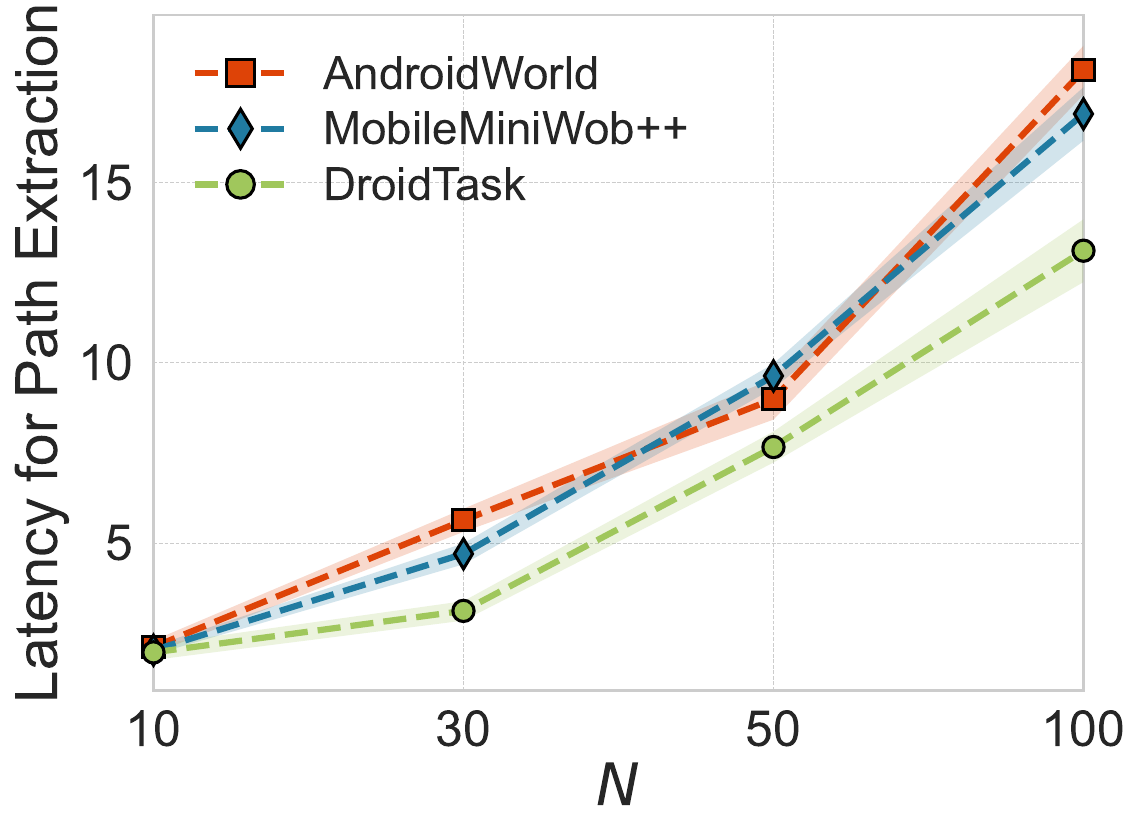}
    \caption{Latency}
    \label{fig:N_latency}
  \end{subfigure}
  \caption{The impact of different number of MCTS iterations on performance.}
  \label{fig:N_AW}
\end{figure}

Figure~\ref{fig:N_AW} investigates the trade-off between MCTS simulation budget and performance by varying the number of iterations $N \in \{10, 30, 50, 100\}$. We measure both success rate and path extraction latency across all three benchmarks using the 3B model.

\textbf{Monotonic Performance Scaling.} Success rate increases monotonically with MCTS iterations across all benchmarks that additional simulation budget enables more thorough exploration of the search space, increasing the probability of discovering optimal paths.

\textbf{Diminishing Returns.} The performance gains exhibit pronounced diminishing returns. Quantitatively, the marginal improvement per additional 10 iterations decreases substantially:
\begin{itemize}
    \item $N: 10 \rightarrow 30$: $+10$ points on AndroidWorld 
    \item $N: 30 \rightarrow 50$: $+6$ points on AndroidWorld
    \item $N: 50 \rightarrow 100$: $0.7$ points on AndroidWorld 
\end{itemize}
This sublinear scaling suggests that moderate iteration counts capture most of the benefit from tree search, with additional simulations primarily refining already-promising paths rather than discovering qualitatively better alternatives.

\textbf{Latency Characteristics.} Path extraction latency scales approximately linearly with iteration count. This linear scaling, combined with the sublinear performance gains, implies a favorable efficiency trade-off at moderate iteration counts.

\textbf{Practical Operating Points.} The results suggest $N=30$ or $N=50$ as practical operating points. At $N=50$, the system achieves over $95\%$ of the $N=100$ performance on all benchmarks while requiring only $50\%$ of the computation time. 

\textbf{Connection to Theoretical Bounds.} These findings align with Theorem~\ref{th:sample_complexity}, which establishes simulation complexity scaling as:
\begin{equation}
    n \geq \frac{32(K-1)c^2 \ln(Hn/\delta)}{\Delta_{\text{eff}}^2} + 2(K-1)\left(2N_0 + \frac{\pi^2}{3}\right).
\end{equation}
The logarithmic dependence on $n$ in the bound is consistent with the observed diminishing returns. The empirical observation that moderate $N$ suffices suggests that practical task instances have relatively large effective action gaps $\Delta_{\text{eff}} = \Delta^*_{\min} - 2\epsilon_{\text{bias}}$, enabling efficient path recovery without exhaustive search.

\subsection{Influence of Model Size and Exploration Constant}
\label{app:hyperparameters}

\begin{figure}[ht]
  \vskip 0.2in
  \centering
  \begin{subfigure}{0.35\columnwidth}
    \includegraphics[width=\linewidth]{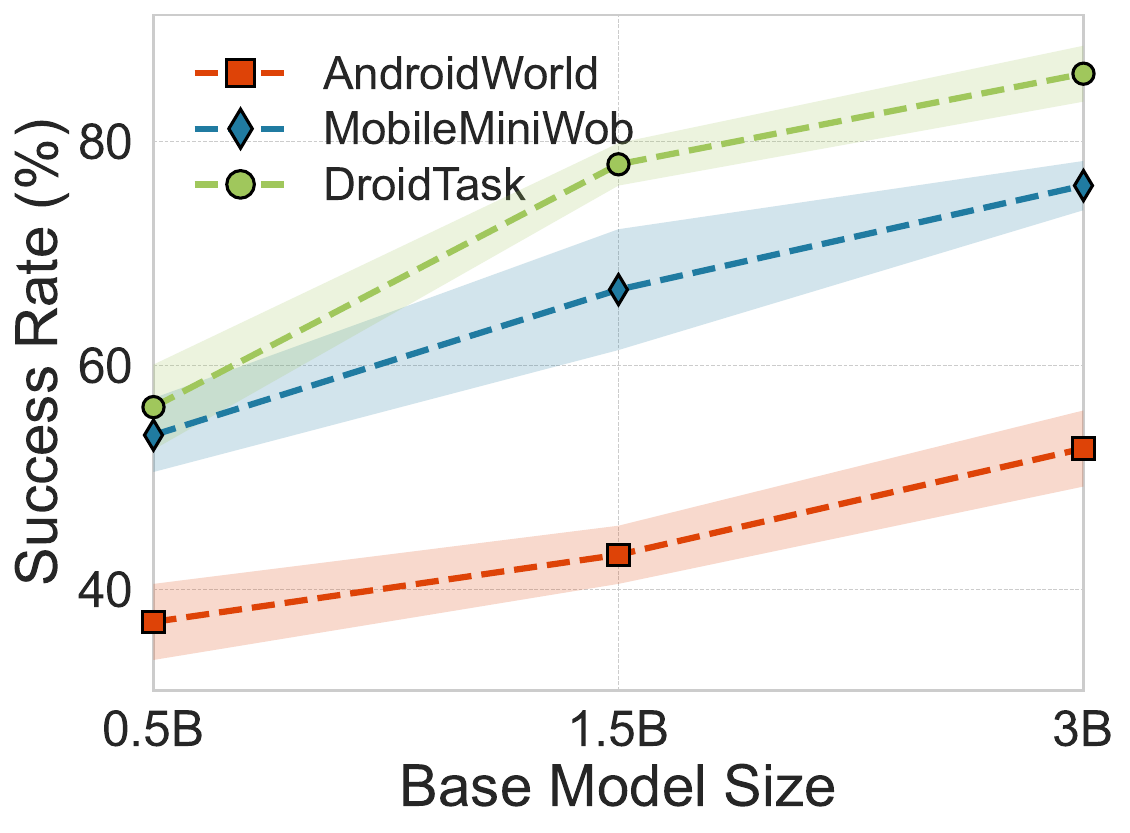}
    \caption{Model size}
    \label{fig:model_size}
  \end{subfigure}
  \begin{subfigure}{0.35\columnwidth}
    \includegraphics[width=\linewidth]{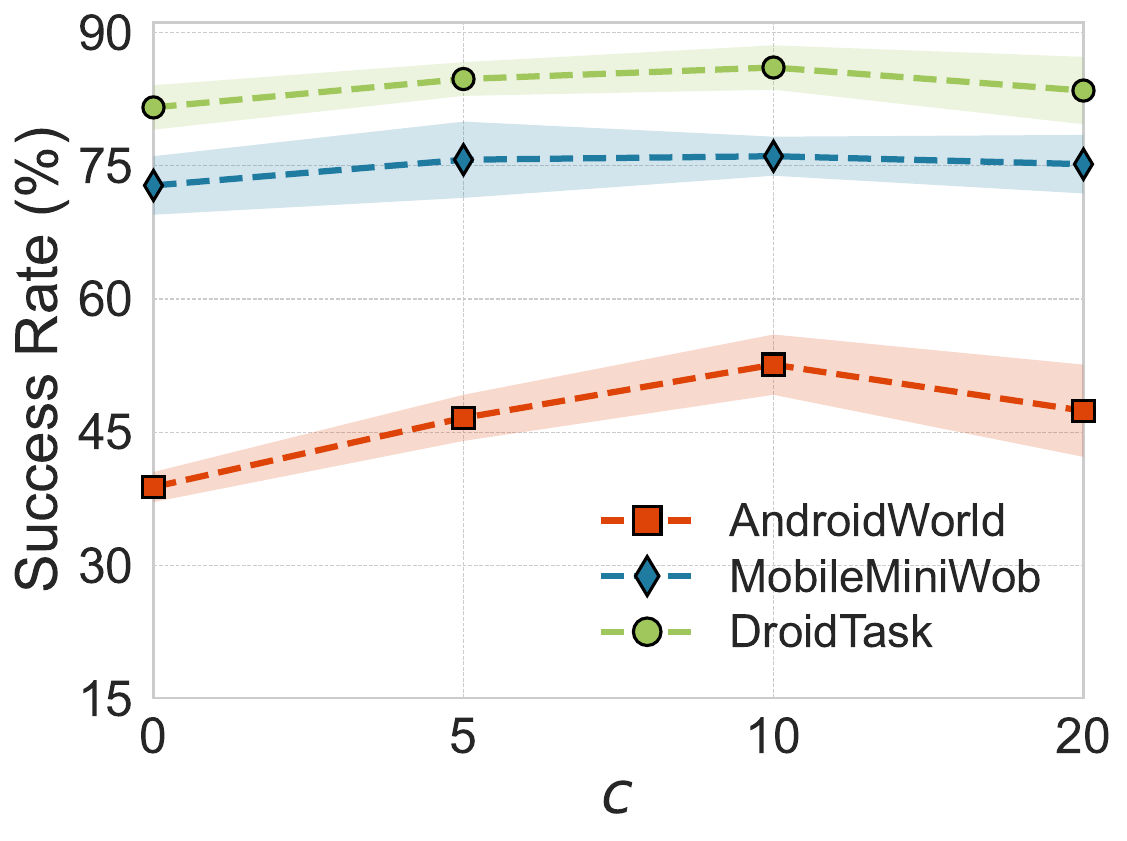}
    \caption{Exploration constant $c$}
    \label{fig:explore_constant}
  \end{subfigure}
  \caption{The impact of model size and exploration constant on performance.}
  \label{fig:model_exploration}
\end{figure}

Figure~\ref{fig:model_exploration} presents two hyperparameter studies: (a) the effect of base model size on success rate across 0.5B, 1.5B, and 3B parameters; and (b) the impact of the UCT exploration constant $c \in \{0, 5, 10, 20\}$ on performance, where $c=0$ corresponds to pure exploitation.

\textbf{Model Size Analysis.} Larger models achieve higher success rates across all benchmarks.

\textbf{Exploration Constant Analysis.} Performance peaks at $c=10$ across all benchmarks
Both pure exploitation ($c=0$) and excessive exploration ($c=20$) yield degraded performance. At $c=0$, the search degenerates to greedy selection, forfeiting the benefits of exploration demonstrated in Section~\ref{app:path_extraction}. At $c=20$, the exploration bonus dominates the Q-value term in UCT, causing the search to behave nearly randomly and waste simulation budget on unpromising branches.

\textbf{Benchmark-Specific Sensitivity.} AndroidWorld shows the highest sensitivity to $c$. DroidTask and MobileMiniWob++ exhibit more stable performance. This differential sensitivity correlates with task complexity: AndroidWorld's deeper search spaces and longer optimal paths create more opportunities for both beneficial exploration and wasteful over-exploration.

\textbf{Theoretical Perspective.} The exploration constant $c$ appears directly in the UCT criterion:
\begin{equation}
    \text{UCT}(s, a) = Q(s, a) + c\sqrt{\frac{\ln N(s)}{N(s, a)}},
\end{equation}
and in the sample complexity bound of Theorem~\ref{th:sample_complexity}, where it contributes to the $c^2$ term in the numerator. The empirical finding that moderate $c$ optimizes performance validates the theoretical exploration-exploitation trade-off: insufficient exploration risks committing to suboptimal paths before adequately evaluating alternatives, while excessive exploration incurs unnecessary simulation cost.

\section{Training Data}
We train our Q-model in two phases, including initialization training and self-training iterations.

\subsection{Step-level Preference Dataset for Initialization Training}
Our training data pairs are derived from AMEX, which contains a large collection of expert trajectories with semantic descriptions for each action step, as well as semantic annotations for task-irrelevant elements on each page. For each page in the expert trajectories, we construct preference optimization pairs consisting of $($expert action description, multiple task-irrelevant element descriptions$)$ as follows:

\promptbox{
\{\\
\hspace*{1em}"instruction": "Open Google\_Tasks. Delete all completed tasks in the \textbackslash"Work List\textbackslash".",\\[4pt]
\hspace*{1em}"history\_actions": [],\\[4pt]
\hspace*{1em}"page\_caption": "The page displays a list of tasks organized by categories such as work, health, and family, with options to mark tasks as complete, add stars, and view details.",\\[4pt]
\hspace*{1em}"correct\_actions": [\\
\hspace*{2em}"View the list of completed tasks"\\
\hspace*{1em}],\\[4pt]
\hspace*{1em}"false\_actions": [\\
\hspace*{2em}"View details of the task 'submit progress report' due on Monday, April 15",\\
\hspace*{2em}"View details of the task 'project x' with 1 subtask",\\
\hspace*{2em}"View tasks under 'work' category",\\
\hspace*{2em}"View details of the task 'send a draft to the team' due on Friday, April 12"\\
\hspace*{1em}]\\
\}
}

\subsection{Dataset Generated by MCTS Rollouts}
Following initialization, the value model is iteratively trained to predict the Q-value of state-action pairs. An example of dataset from AndroidWorld is illustrated below.

\promptbox{
\textbf{Input:}\\
\texttt{<|user|>}:\\
Task: Add the expenses from expenses.jpg in Simple Gallery Pro to pro expense.\\
You are at: This is an expense tracking dashboard that allows users to monitor their spending patterns through a weekly calendar view and detailed transaction history, while providing quick access to add new expenses via the floating action button.\\
Executed path: Start of Task\\
Proposed action:\\[4pt]
\textbf{Output:}\\
\texttt{<|assistant|>}: Navigate from expense tracking application to gallery app to locate and review expense-related images or receipts for reference during expense entry workflow\texttt{<end\_of\_step>}\\[6pt]
\textbf{Label}: 0.7679166666666666
}

\section{Prompts}

\subsection{Prompt for Sub-goals Generation at Offline Exploration Phase}

\promptbox{
Given a user task, the current screenshot of \{app\_name\}, and available UI elements, generate multiple potential sub-goals to progress toward completing the user task. Each sub-goal must:\\[4pt]
1. Start with interacting with a specific UI element from the provided element list\\
2. Be expressed as a single, clear directive following the pattern: [Starting action] + [Specific steps] + [End goal]\\
3. Be achievable within approximately 3 actions (AT MOST 5) from the anchor element\\
4. Provide a concrete target state that advances toward the user task completion\\[6pt]
\textbf{Context Information}:\\
- User Task: \{user\_task\}\\
- App name: \{app\_name\}\\
- Package name: \{package\_name\}\\
- Current screen elements (Only interact with *visible=true elements):\\
\hspace*{1em}\texttt{\{element\_list\}}\\
- Activity context: \{activity\_list\}\\
- Recent History Action (up to 5): \{action\_history\}\\
- Sub-goals History: \{subgoal\_history\}\\
- State Summary: \{state\_summary\}\\[6pt]
\textbf{Task Execution Analysis}:\\
Before generating new sub-goals, analyze the current execution state:\\[4pt]
1. \textbf{Completed Progress}: Review the sub-goals history to understand what has already been accomplished toward the main task\\
2. \textbf{Current Position}: Based on the state summary and recent actions, identify where you are in the task workflow\\
3. \textbf{Remaining Work}: Determine what specific components of the user task still need to be completed\\
4. \textbf{Next Logical Steps}: Identify the most logical next actions that build upon completed sub-goals\\[6pt]
\textbf{Sub-goal Generation Strategy}:\\
- \textbf{Continuation Focus}: Generate sub-goals that logically continue from where previous sub-goals left off\\
- \textbf{Avoid Redundancy}: Do not repeat actions or objectives that have already been successfully completed\\
- \textbf{Progressive Advancement}: Each sub-goal should represent a clear step forward in the overall task completion\\
For each sub-goal, provide:\\
1. \textbf{Anchor Element}: The specific UI element ID/description from the list to start with\\
2. \textbf{Sub-goal}: Single directive sentence following [Starting action] + [Specific steps] + [End goal] pattern\\
3. \textbf{Confidence Score}: How likely this sub-goal is to advance toward task completion (0.0-1.0)\\[6pt]
Format each sub-goal as:\\
Sub-goal [N]:\\
Anchor: [Element ID/description from element\_list]\\
Directive: [Single clear instruction with starting action + steps + end goal]\\
Confidence: [0.0-1.0]\\[6pt]
%
}

\subsection{Prompt for Progress Evaluation during Exploration}

\promptbox{
\promptbox{
Given the user task, action history, and current screenshot of \{app\_name\}, evaluate the current exploration state and determine the next action strategy.\\[6pt]
\textbf{Context Information}:\\
- User Task: \{user\_task\}\\
- App name: \{app\_name\}\\
- Package name: \{package\_name\}\\
- Recent Action History: \{action\_history\}\\
\hspace*{1em}(if the last action is \texttt{'\{"action\_type": "status", "goal\_status": "complete"\}'}, it means the last sub-goal was complete successfully)\\
- Sub-goals History: \{subgoal\_history\}\\
- Current screen elements:\\
\hspace*{1em}\texttt{\{element\_list\}}\\[6pt]
\textbf{Analysis Requirements}:\\
1. Compare the current state with the expected end goal of the user task\\
2. Evaluate whether the recent actions are leading toward task completion\\
3. Assess if the current exploration path is meaningful and relevant\\
4. Consider whether all required steps have been executed successfully\\
5. Verify if the current screen/state indicates task completion\\
6. Account for any error states, dead-ends, or repetitive actions in the history\\
7. Make sure to use answer action for information retrieval task\\
\hspace*{1em}(\texttt{\{"action\_type": "answer", "text": "<answer\_text>"\}} is the last action in the action history)\\
8. Be strict about completion - partial progress is not completion\\[6pt]
\textbf{Evaluation Criteria}:\\
- Has the user task's primary objective been achieved? (COMPLETED)\\
\hspace*{1em}* Have we completed all the sub-goals required by the task and at the expected final state/screen for this task?\\
- Are we making meaningful progress toward the goal? (CONTINUE)\\
- Are we stuck, going in wrong direction, or exploring irrelevant paths? (BACKTRACK)\\
- Is there clear evidence of task completion, progress, or deviation in the current state?\\[6pt]
\textbf{Format your response as}:\\
Reasoning: [Detailed analysis of current progress, referencing specific elements from action history, current state, and task relevance. Explain why we should continue, backtrack, or if task is complete]\\
Result: [CONTINUE/BACKTRACK/COMPLETED]\\[6pt]

}
}

\subsection{Prompt for Action Group Mining}
\promptbox{
You are an AI assistant specialized in generating high-level common UI operation nodes which can be part of a variety of operations. You need to generate a complete description of a high-level action node based on the given chain information.\\[6pt]
Please generate a high-level action node based on the following UI operation chain information:\\[4pt]
\hspace*{1em}Task description: \\
\hspace*{1em}\{task\_description\}\\[4pt]
\hspace*{1em}Chain operations:\\
\hspace*{1em}\{chain\_operations\}\\[4pt]
\hspace*{1em}Chain element details:\\
\hspace*{1em}\{element\_details\}\\[4pt]
\hspace*{1em}Chain reasoning results:\\
\hspace*{1em}\{reasoning\_results\}\\[6pt]
Please generate a concise description of the high-level action node, including the following fields:\\
- \textbf{action\_id}: Generate a unique ID for the high-level action (format like: "high\_level\_action\_xxx")\\
- \textbf{name}: Concise name of the high-level action\\
- \textbf{function\_description}: Brief description of the action's functionality and purpose\\
- \textbf{preconditions}: Required conditions before executing this action, including:\\
\hspace*{2em}* task\_state: What task context or state is needed\\
\hspace*{2em}* page\_state: What page/interface state must be present\\
- \textbf{post\_conditions}: Resulting state after completing this action, including:\\
\hspace*{2em}* task\_state: How the task context changes\\
\hspace*{2em}* page\_state: What page/interface state is reached\\
- \textbf{element\_sequence}: Simplified sequence of key elements in this action:\\
\hspace*{2em}* element\_id: Element ID\\
\hspace*{2em}* atomic\_action: Action performed\\
\hspace*{2em}* order: Execution order
}

\end{document}